\documentclass{article} 
\usepackage{iclr2026_conference,times}

\usepackage{amsmath,amsfonts,bm}

\def\eqref#1{equation~\ref{#1}}

\def\1{\bm{1}}

\DeclareMathAlphabet{\mathsfit}{\encodingdefault}{\sfdefault}{m}{sl}
\SetMathAlphabet{\mathsfit}{bold}{\encodingdefault}{\sfdefault}{bx}{n}

\newcommand{\R}{\mathbb{R}}

\usepackage{hyperref}
\usepackage{url}

\usepackage[utf8]{inputenc} 
\usepackage[T1]{fontenc}    
\usepackage{url}            
\usepackage{booktabs}       
\usepackage{amsfonts}       
\usepackage{nicefrac}       
\usepackage{microtype}      
\usepackage{xcolor}         

\usepackage{bbm}
\usepackage{amsthm}
\usepackage{amsmath}
\usepackage{amssymb}
\usepackage{cleveref}
\crefname{theo}{theorem}{Theorem}
\usepackage{graphicx}
\usepackage{physics}
\usepackage{MnSymbol}
\usepackage{subcaption}
\usepackage{bibunits}             
\usepackage{wrapfig}
\usepackage{caption}
\usepackage{enumitem}
\makeatother

\usepackage{ifthen}
\newboolean{includeappendix}
\setboolean{includeappendix}{true} 

\newcommand{\diag}{\mathrm{diag}}
\newcommand{\sset}[1]{\left\{ #1 \right\}}
\newcommand{\virg}[1]{``#1''}
\newcommand{\ones}{\mathbbm{1}}
\newcommand{\Reals}{\mathbb{R}}

\newcommand{\inner}[2]{\llangle #1, #2 \rrangle}

\newcommand{\cone}{\mathrm{Co}}

\newcommand{\rev}[1]{\textcolor{black}{#1}}
\newcommand{\revtwo}[1]{\textcolor{black}{#1}}

\newtheorem{proposition}{Proposition}
\newtheorem{theorem}{Theorem}
\newtheorem{corollary}{Corollary}
\newtheorem{lemma}{Lemma}
\newtheorem{definition}{Definition}

\title{Topology and geometry of the learning space of ReLU networks: \\ connectivity and singularities}

\author{Marco Nurisso$^1$\thanks{Equal contribution. Corresponding authors \texttt{\{marco.nurisso,pierrick.leroy\}@polito.it}}\, , Pierrick Leroy$^1$\footnotemark[1]\, ,  Giovanni Petri$^2$\,  \& Francesco Vaccarino$^1$ \\
$^1$Department of Mathematical Sciences, Politecnico di Torino, Turin, Italy \\
$^2$NPLab, Network Science Institute, Northeastern University London, London, UK
}
\iclrfinalcopy 
\begin{document}

\maketitle

\begin{abstract}
Understanding the properties of the parameter space in feed-forward ReLU networks is critical for effectively analyzing and guiding training dynamics.
After initialization, training under gradient flow decisively restricts the parameter space to an algebraic variety that emerges from the homogeneous nature of the ReLU activation function.
In this study, we examine two key challenges associated with feed-forward ReLU networks built on general directed acyclic graph (DAG) architectures: the (dis)connectedness of the parameter space and the existence of singularities within it.
We extend previous results by providing a thorough characterization of connectedness, highlighting the roles of bottleneck nodes and balance conditions associated with specific subsets of the network.
Our findings clearly demonstrate that singularities are intricately connected to the topology of the underlying DAG and its induced sub-networks.
We discuss the reachability of these singularities and establish a principled connection with differentiable pruning. We validate our theory with simple numerical experiments.
\end{abstract}

\section{Introduction}
The success of deep learning has spurred extensive research into the geometry and dynamics of neural network training. While classical results primarily focus on layered architectures, many modern networks adopt more flexible structures, such as directed acyclic graphs (DAGs), arising either from design or from pruning and compression strategies. These architectures challenge existing theory and necessitate new tools to understand their training behavior, particularly in the presence of non-smooth, homogeneous activations like ReLU.

In this paper, we study two fundamental training pathologies in DAG-based ReLU networks: the \emph{(dis)connectedness} of the training-invariant parameter space, and the presence of \emph{singularities} within it. Our analysis is grounded in the observation that training with gradient flow on networks with homogeneous activations gives rise to symmetry-induced conservation laws. These laws constrain learning trajectories to an algebraic variety—referred to as the \emph{invariant set}—defined by a system of quadratic equations dependent on the network’s topology and initialization.

Our contributions are as follows:
\begin{itemize}
    \item \rev{We derive an elegant formulation for the conservation laws that arise during gradient flow training of ReLU networks as a result of rescaling symmetries, in the general case of DAG-based architectures.}
    \item We \rev{extend previous results on shallow networks \citep{nurisso2024topological} by studying} the geometry and topology of the invariant set \rev{in the general architecture case}, providing necessary and sufficient conditions for its connectedness based on network bottlenecks and balance constraints.
    \item We identify and analyze singularities of the invariant set, showing that they correspond to disconnected sub-networks, and prove that they are unreachable under standard gradient flow from generic initializations.
    \item We propose a nuclear norm-based regularizer that promotes convergence to singular configurations, thereby enabling differentiable, structure-agnostic pruning. In our experiments, we observe that L1 regularization—despite not explicitly targeting neuron sparsity—empirically induces similar singular behavior as our dedicated regularizer, and therefore also fosters effective lossless pruning.
\end{itemize}

Taken together, our results shed light on the interplay between network topology and optimization geometry.
They also offer a principled pathway for designing pruning mechanisms that exploit the structure of the optimization space rather than relying solely on heuristic sparsity constraints.

\subsection{Related Works}

\textbf{DAG neural networks.}\label{dagnn_related_works} General feedforward architectures can be formalized as directed acyclic graphs (DAGs)~\citep{gori2023machine, hwang2023analysis, ChiragAgarwal2021}, or more abstractly as quivers\citep{armenta2021representation}, though this perspective remains relatively underexplored. DAG structures support variants of topological sorting~\citep{Kahn1962}, which recover the notion of layers~\citep{boccato20244ward, ChiragAgarwal2021}. Both natural and synthetic neural systems align well with this broader formalism~\citep{boccato2024beyond, milano2023tomography}. DAG-like networks can also emerge through unstructured pruning (see below) or via the sampling of sparse subnetworks, as in the lottery ticket hypothesis~\citep{frankle2018lottery, liu2018rethinking, You2019DrawingET}.

\textbf{Pruning.} Pruning methods are typically classified as structured or unstructured~\citep{hoefler2021sparsity, cheng2024survey}. Structured pruning targets entire groups of parameters, such as neurons or channels~\citep{yuan2006model, nonnenmacher2021sosp}, while unstructured pruning removes individual weights~\citep{han2015deep, frantar2023sparsegpt}, often at the cost of hardware inefficiency.
Sparsity can be induced iteratively during training~\citep{lin2020dynamic, jin2016training}, or through differentiable techniques and regularization~\citep{savarese2020winning, pan2016dropneuron, kang2020operation}.
Recent efforts like \textit{any-structural pruning}~\citet{fang2023depgraph} aim to unify pruning strategies into general frameworks. 
Our work approaches pruning through the geometry of the optimization landscape: a nuclear norm regularizer naturally promotes sparsity across arbitrary structures in DAG-based networks, though current computational limitations restrict its practical use.

\textbf{Singularities and deep learning.} Singular Learning Theory~\citep{watanabe2009algebraic,watanabe2007almost} blends statistical learning with algebraic geometry, treating singularities as central to the learning process in the Bayesian framework, when working with non-identifiable models such as neural networks.
Recent works have applied its tools to describe modern neural network architectures \citep{wei2022deep,lau2023local,furman2024estimating}.
Singularities are also foundational to neuro-algebraic geometry~\citep{marchetti2025invitation}, which examines the space of functions realizable by networks—often termed the \textit{neuro-manifold}. 
The influence of singularities on learning has been recognized for decades: early work~\citep{amari2001differential,amari2001geometrical,amari2006singularities} analyzed their impact on gradient descent in simplified settings. Their relation to network topology has also been studied; for instance, skip connections are known to reduce singularities~\citep{orhan2017skip}. 


\textbf{Training dynamics of ReLU networks.} A large body of work analyzes the gradient flow and descent dynamics of networks with homogeneous activations, including convergence guarantees~\citep{sirignano2020mean, mei2018mean, rotskoff2022trainability} and implicit bias properties~\citep{boursier2022gradient, chizat2020implicit, lyu2021gradient, soudry2018implicit}. ReLU’s non-smoothness poses analytic challenges~\citep{eberle2021existence}, yet its positive homogeneity enables rescaling symmetries~\citep{dinh2017sharp} and conservation laws~\citep{neyshabur2015path, marcotte2023abide, kunin2020neural, zhao2022symmetries, tanaka2020pruning, nurisso2024topological}. The discrete-time setting of gradient descent has also received attention~\citep{feng2019uniform, smith2021origin, kunin2020neural}. More broadly, symmetry principles continue to shed light on deep learning phenomena~\citep{grigsby2023hidden, gluch2021noether, bronstein2021geometric, ziyin2025parameter}.


\section{Setup and notation}\label{sec:setup}

\revtwo{In this section, we start by introducing our notation for the neural network topology.
We then review known results on symmetries of ReLU networks and their associated conservation laws, reformulating them in our compact notation.
Next, we introduce the notion of \textit{invariant set}, whose properties are further investigated in \cref{sec:geotopo_invariantset}: first its connectedness, and then its singularities, each part including numerical experiments.
We conclude and discuss limitations in \cref{sec:conclusion}}

\paragraph{DAG neural networks.}
Consider a general computational graph $G$ describing a feed-forward neural network architecture \revtwo{(see the related work section \ref{dagnn_related_works})}.
$G$ is a directed, acyclic graph (DAG) on a set of nodes $V$, called neurons, with edges $E$.
We identify a subset of neurons $V_{I}\subseteq V$ containing the \emph{input neurons}, such that no edges are entering the elements of $V_{I}$, and a subset $V_O\subseteq V$ of \emph{output neurons} such that no edges are going out of the elements in $V_O$. We assume that $V_I\cap V_O=\emptyset $, i.e., no neurons have empty neighbors. We write $\partial V$ to denote the set of input and output neurons $V_I\sqcup V_O$.
All the other nodes $\Tilde{V}\subseteq V$ are the \emph{hidden nodes} which are to be the fundamental computational unit of the neural network $V = \Tilde{V}\sqcup\partial V$ (\Cref{fig:1}a).
For any node $v\in V$, we call $\mathrm{Anc}(v)$ the set of its \emph{ancestors}, i.e. nodes $w\in V$ such that there exists a path in $G$ from $w$ to $v$, and $\mathrm{Desc}(v)$ its \emph{descendants}, i.e. the nodes $w\in V$ such that there exists a path $v\to v_1\to \cdots \to v_n\to w$ in $G$.

Each edge $(i,j)\in E$ has a parameter $\theta_{(i,j)}\in\R$ associated with it and, when data is passed through the network, hidden nodes $v\in\Tilde{V}$ sum the values of their incoming edges, apply the ReLU function $\sigma$ and output to each outgoing edge the resulting value multiplied by the edge parameter.
\revtwo{As it is standard in the literature (see e.g. 5.1 in \cite{bishop2006pattern}), one can also consider biases in this setup by adding a ``virtual'' input neuron whose input is fixed to 1 and adding edges from it to every hidden neuron.}

We call \emph{parameter space} $\Theta$ the set of all parameters, i.e. the vector space of real functions over the edges $\theta:E\to\R$, $\Theta \cong \R^{|E|}$ and we write $f_G(\cdot,\theta)$ to indicate the input-output function encoded by $G$ with parameters $\theta$.
Throughout the paper, it will be convenient to formulate the results by describing the connectivity structure with the incidence matrix $B$ of $G$\revtwo{~\citep{bondy1979graph}}.
$B \in \R^{|V|\times|E|}$ is a standard object in graph theory that describes how each edge is connected to its endpoints. 
Its elements are defined as follows: $B_{v,(i,j)} = 1$ if $v=j$, $B_{v,(i,j)} = -1$ if $v=i$ and $0$ otherwise.
See, for example, the DAG in \Cref{fig:1}a and its associated incidence matrix (with rows associated to nodes in $\partial V$ removed) in \Cref{fig:1}b.

\begin{figure}
    \centering
    \includegraphics[width=0.9\linewidth]{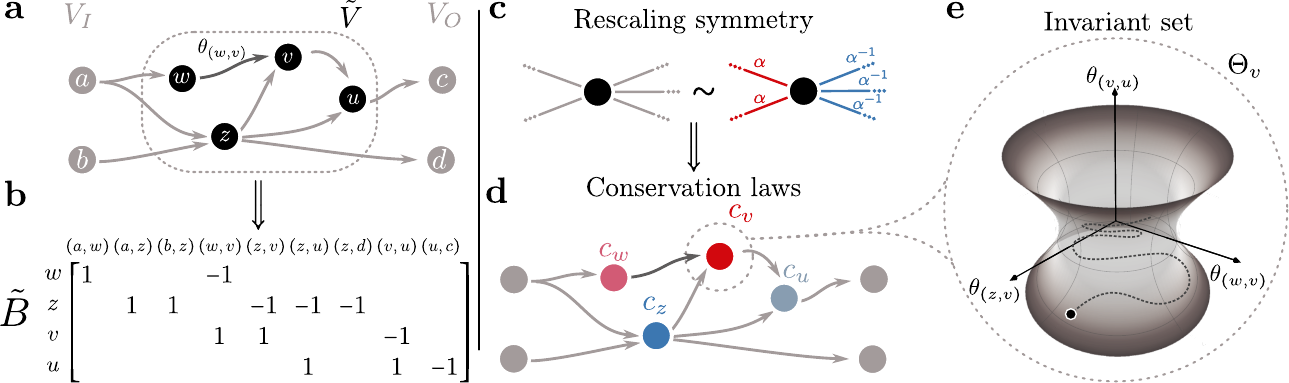}
    \caption{\textbf{a.} Example of a feed-forward DAG architecture $G$. \textbf{b.} The incidence matrix $\tilde{B}$ of $G$ with rows associated to input and output neurons removed. \textbf{c.} Visualization of the rescaling symmetry of ReLU neurons. \textbf{d.} The initialization determines the balance value $c_v = \inner{\theta}{\theta}_v$ of every hidden neuron, which characterizes the shape of the invariant set (\textbf{e}).}
    \label{fig:1}
\end{figure}

\paragraph{Symmetries of ReLU networks.}
In this work, following \citet{du2018algorithmic}, we study the properties of neural network where \revtwo{the activation function} $\sigma$ is \emph{homogeneous}, namely $\sigma(x) = \sigma'(x)\cdot x$ for every $x$ and for every element of the sub-differential $\sigma'(x)$ if $\sigma$ is non-differentiable at $x$.
The commonly used ReLU \revtwo{($\sigma(z) = \max\sset{z,0}$) and Leaky ReLU ($\sigma(z) = \max\sset{z,\gamma z}$} with $0\leq\gamma\leq 1$) activation functions satisfy this property.

It is well known that the geometry of the parameter space $\Theta$ is heavily influenced by the properties of the activation function.
Some activation functions and specific neural network modules induce some symmetries in the parameter space, i.e., transformations $g$ of the parameters which do not change the function encoded by the network $f_G(\cdot,\theta)=f_G(\cdot,g\circ\theta)$\revtwo{~\citep{zhao2025symmetry}}. 
In the case of homogeneous activations, the most critical symmetry is given by rescaling\revtwo{~\citep{neyshabur2015path}}.
In fact, the input weights of any hidden neuron can be rescaled by a positive scalar $\alpha>0$, provided that its output weights are rescaled by the inverse $\alpha^{-1}$.
This result is well-known for single and multi-layer networks and holds even in general DAG architectures as it is defined at a single node.
We write this as the action of the group $\R_+$ of positive real numbers on each local parameter space $\Theta_v:=\{\theta_{(x,y)}\,|\, (x,y)\in E \text{ and } x=v \text{ or } y=v\}$ by means of 
$T^v_{\alpha}(\theta) = T^v_{\alpha}((\theta_{(i,v)})_i,(\theta_{(v,j)})_j)=((\alpha\theta_{(i,v)})_i,(\frac{1}\alpha{}\theta_{(v,j)})_j)$ (\Cref{fig:1}c). 

\paragraph{Local conservation laws under gradient flow.}
The presence of symmetries in the neural network's parameter-function map induces the presence of same-loss sets of the loss landscape.  
Let indeed \revtwo{$f_G(\cdot,\theta):\mathbb{R}^d  \to \mathbb{R}^e$}, $D=\sset{(x_i,y_i)\in\R^d\times\R^e}_{i=1}^N$ be a training dataset and $L:\Theta\to \R$ be a loss function  which depends on the parameters only through the output of the neural network\footnote{This means that we do not include regularization terms which depend explicitly on the parameters.}, that is
\begin{equation}\label{eq:loss}
    L(\theta) = \frac{1}{N}\sum_{i=1}^N \ell(f_G(x_i;\theta),y_i)
\end{equation}
where $\ell:\R^{|V_O|}\times\R^{|V_O|}\to\R$ is differentiable. 

Let us now assume that we train the network using the continuous-time analog of gradient descent i.e. \emph{gradient flow} (GF),
 $\dot{\theta}(t) \in -\nabla_\theta L(\theta(t)):= -g(\theta(t))$,
where $\nabla_\theta L(\theta(t))$ is the Clarke sub-differential \citep{clarke2008nonsmooth}.
Given that the loss function $L$ depends on the parameters only through $f$, its value at $\theta$ must be constant over the orbit of rescaling.
This, together with the fact that the gradient of a differentiable function at a point is orthogonal to the level set at that point, means that the gradient is orthogonal to the orbit under the action of rescaling, at any parameter $\theta$ where $L(\theta)$ is differentiable.
This orthogonality condition constrains the possible values of the gradient and, by extension, the possible gradient flow trajectories.


This orthogonality condition \rev{can be shown to be} 
\begin{equation*}
\inner{\theta}{g(\theta)}_{v} := \sum_{i:i\to v} \theta_{(i,v)}g{(i,v)} - \sum_{j:v \to j} \theta_{(v,j)}g{(v,j)} = 0
\end{equation*}
for every hidden neuron $v\in\Tilde{V}$ \rev{(see \Cref{appendix:orthogonality} for details)}: the gradient modulated by the parameter values is a \emph{network flow}, as the quantity $g(\theta)\odot\theta$ is conserved when passing through each hidden neuron, where $\odot$ denotes the element-wise Hadamard product between vectors.
\rev{This result is well known and is obtained, with a different approach, in e.g. \citet{tanaka2020pruning}.}

\revtwo{We propose} to conveniently re-write the gradient conditions at all hidden nodes using \rev{a variation of} the incidence matrix \revtwo{$B$} of $G$.
\begin{proposition}
Let $\Tilde{B}\in\R^{|\Tilde{V}|\times |E|}$ be the incidence matrix of $G$ with the rows associated with input and output nodes removed; then $\inner{
\theta}{g(\theta)}_v = 0\ \forall v\in\Tilde{V}$ is equivalent to
\begin{equation}\label{eq:conserved_all}
    \Tilde{B}(\theta \odot g(\theta)) = 0.
\end{equation}
\end{proposition}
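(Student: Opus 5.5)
The plan is a direct entrywise computation: I will evaluate the vector $\Tilde{B}(\theta\odot g(\theta))\in\R^{|\Tilde{V}|}$ one coordinate at a time and show that its $v$-th coordinate is exactly $\inner{\theta}{g(\theta)}_v$. Once that identity holds for every $v\in\Tilde{V}$, the vector vanishes if and only if all the local conditions hold, which is the claimed equivalence.

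For a fixed hidden neuron $v\in\Tilde{V}$, I expand the $v$-th entry as
\begin{equation*}
\bigl(\Tilde{B}(\theta\odot g(\theta))\bigr)_v=\sum_{(i,j)\in E}\Tilde{B}_{v,(i,j)}\,\theta_{(i,j)}\,g(i,j).
\end{equation*}
The rows of $\Tilde{B}$ are rows of $B$, so the definition of the incidence matrix applies. $\Tilde{B}_{v,(i,j)}$ is $1$ when $j=v$, that is, for incoming edges $i\to v$. It is $-1$ when $i=v$, that is, for outgoing edges $v\to j$. It is $0$ for every edge not incident to $v$. Because $G$ is acyclic there are no self-loops, so no edge has both endpoints equal to $v$ and the two cases never overlap. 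The sum therefore splits into $\sum_{i:i\to v}\theta_{(i,v)}g(i,v)-\sum_{j:v\to j}\theta_{(v,j)}g(v,j)$, which is precisely the definition of $\inner{\theta}{g(\theta)}_v$.

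Removing the rows indexed by $\partial V$ only deletes coordinates corresponding to input and output neurons. These are the nodes where no condition is imposed, so the surviving coordinates are in bijection with $\Tilde{V}$. This gives both directions of the equivalence immediately. The only point that needs care is the sign and orientation convention of $B$, together with checking that the missing self-loops make the split clean. I do not expect any real obstacle: the content is bookkeeping, and the substantive fact, namely that the local orthogonality condition holds under gradient flow, is taken from \Cref{appendix:orthogonality} rather than proved here.
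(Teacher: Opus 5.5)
Your proposal is correct and follows the same route as the paper's proof. Both compute the $v$-th entry of $\Tilde{B}(\theta\odot g(\theta))$ directly from the definition of the incidence matrix, identify it with $\inner{\theta}{g(\theta)}_v$, and conclude that the vector vanishes exactly when every local condition holds (your remark about the absence of self-loops is just extra care the paper leaves implicit).
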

\begin{proof}
The proof follows directly from the definition of the incidence matrix. 
At any hidden node $v\in\Tilde{V}$, if we denote by $\theta_e$ the weight of any $e\in E$, and by $g_e$ the $e-th$ component of $g(\theta)$, then it holds
\[
\Tilde{B}(\theta\odot g(\theta))_v = \sum_{e\in E} B_{v,e} \theta_e g_e = \sum_{i:i\to v} \theta_{(i,v)} g_{(i,v)} - \sum_{j:v\to j} \theta_{(v,j)} g_{(v,j)} = \inner{\theta}{g(\theta)}_v = 0.
\]
\end{proof}
\paragraph{Invariant sets.}

\Cref{eq:conserved_all}, implies that some quantities are conserved under gradient flow optimization or, equivalently, that the learning trajectories are constrained to a lower-dimensional subset of the parameter space.
\begin{proposition}\label{prop:conservation_laws}
Let $G$ be initialized with $\theta(0)$ such that $\tilde{B}\theta(0)^2 = c \in \R^{|\tilde{V}|}$, \revtwo{with $\theta(0)^2$ the element-wise square of the vector $\theta(0)$,} then, for every $t\geq 0$, it holds that $\tilde{B}\theta(t)^2 = c$.
\end{proposition}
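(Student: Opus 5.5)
The plan is to differentiate the quantity $\tilde{B}\theta(t)^2$ along the trajectory and show that its derivative vanishes, using \Cref{eq:conserved_all}. Since $\tilde{B}$ is a constant matrix, linearity gives
\[
\frac{d}{dt}\tilde{B}\theta(t)^2 = \tilde{B}\,\frac{d}{dt}\theta(t)^2 = 2\,\tilde{B}\bigl(\theta(t)\odot\dot{\theta}(t)\bigr),
\]
with the time derivative of the element-wise square computed entrywise. Gradient flow gives $\dot\theta(t) = -g(\theta(t))$ for some element $g(\theta(t))$ of the Clarke subdifferential. Substituting, the right-hand side becomes $-2\tilde{B}(\theta\odot g(\theta))$, which is zero by the previous proposition, because $\inner{\theta}{g(\theta)}_v=0$ for every hidden neuron $v$. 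Hence $\tilde{B}\theta(t)^2$ has zero derivative. Integrating from $0$ to $t$ then gives $\tilde{B}\theta(t)^2=\tilde{B}\theta(0)^2=c$.

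The main obstacle is regularity. With ReLU, $L$ is not differentiable everywhere, so gradient flow must be understood as a differential inclusion. Its solutions are absolutely continuous curves satisfying $\dot\theta(t)\in-\partial L(\theta(t))$ only for almost every $t$. I would handle this in two steps.

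First, I would check that the orthogonality relation $\inner{\theta}{g}_v=0$ holds for every element $g$ of the Clarke subdifferential, not only at points where $L$ is differentiable. For this I would use the homogeneity convention $\sigma(x)=\sigma'(x)x$ for every subgradient, as in the appendix derivation, together with the rescaling invariance $L(T^v_\alpha\theta)=L(\theta)$. One route is to differentiate this identity in $\alpha$ at $\alpha=1$. At differentiable points this gives $\inner{\theta}{\nabla L}_v=0$. Clarke subgradients are convex hulls of limits of gradients at nearby differentiable points, and the relation $\inner{\theta}{g}_v=0$ is linear in $g$ and continuous in $(\theta,g)$, so it passes to these limits and convex combinations. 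I would also accept the appendix result as covering this case.

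Second, since $\theta(t)$ is absolutely continuous, so is $\theta(t)^2$, and its derivative is $2\theta\odot\dot\theta$ almost everywhere. The derivative of $\tilde{B}\theta(t)^2$ therefore vanishes almost everywhere. An absolutely continuous function with almost-everywhere zero derivative is constant, which gives the claim for every $t\ge 0$ on the interval where the trajectory exists.
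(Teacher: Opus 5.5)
Your proposal is correct and is essentially the paper's proof, which is the computation $\frac{d}{dt}\tilde{B}\theta(t)^2 = 2\tilde{B}(\theta\odot\dot\theta) = -2\tilde{B}(\theta\odot g(\theta)) = 0$ using the network-flow identity $\tilde{B}(\theta\odot g(\theta))=0$. Your extra regularity discussion is sound and fills in details the paper leaves implicit: you extend the orthogonality relation to all Clarke subgradients via limits and convex combinations, and you use absolute continuity to integrate a derivative that vanishes almost everywhere.
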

\begin{proof}
\[
\dfrac{\dd}{\dd t} \tilde{B}\theta(t)^2 = \tilde{B} \dfrac{\dd}{\dd t}\theta(t)^2 = 2 \tilde{B}(\theta(t)\odot\dot{\theta}(t)) = -2\tilde{B}(\theta(t)\odot g(\theta(t)) = 0.
\]
\end{proof}
This result (visualized in \Cref{fig:1}d), note, is a general, elegant re-writing of the well-known neuron-wise conservation law (\Cref{prop:conservation_laws})
\citep{du2018algorithmic,liang2019fisher,kunin2020neural,saxe2013exact}.
As we will see, this is not a mere notational feature, as this formulation reveals precious insights into the relationship between the training dynamics and the neural network's graph structure.
We now define the invariant set as the set the training trajectories are constrained to due to the conservation laws: if the network is initialized in the invariant set, it will remain in it until the end of training.
\begin{definition}[Invariant set\revtwo{, generalization of~\cite{nurisso2024topological}}]\label{def:invariant_set}
Given $c = (c_v)_{v\in\Tilde{V}}$, we call \emph{invariant set} the set $\mathcal{H}_G(c)\subseteq\Theta$ of the solutions of the system of polynomial equations $\tilde{B}\theta^2 = c$.
\end{definition}
If we look at the single equation associated with hidden neuron $v\in\tilde{V}$, we see that $\tilde{B}\theta^2 = c$ can be written as
\begin{equation}\label{eq:hyperbola}
\sum_{i\to v} \theta_{(i,v)}^2 - \sum_{j \leftarrow v} \theta_{(v,j)}^2 = c_v
\end{equation}
which corresponds to a hyperbolic \emph{quadric hypersurface} in the local parameter space of $v$, $\Theta_v$ (\Cref{fig:1}e). 
From the graph's point of view, we can interpret this as stating that the vector of squared parameters $\theta^2$ is akin to a fluid flowing through the edges of $G$, with input/output nodes acting as unconstrained sources/sinks and hidden nodes supplying or demanding some flow according to the value and sign of $c$ \citep{ford1962}.

In \citet{nurisso2024topological}, it is shown that for shallow networks, the geometrical structure of $\mathcal{H}_G(c)$ is simple, as the total invariant set factorizes into the cartesian product of the neurons' quadric hypersurfaces.
In the general case (MLP or DAG), the situation is much more complex because the equations of $\mathcal{H}_G(c)$ are \emph{coupled}: parameters associated with internal edges appear in multiple equations.

The invariant set $\mathcal{H}_G(c)$, which is an \emph{algebraic variety} (albeit we do not know whether it is reduced or not), is an interesting object because it lies in between the redundant but more \virg{concrete} parameter space and the abstract function space (or \emph{neuromanifold} \citep{calin2020neuromanifolds,kohn2024geometry}) the model's implemented function lives in.
In fact, fixed any $c$, no two parameters in $\mathcal{H}_G(c)$ are observationally equivalent w.r.t. rescalings, that is $f_G(\cdot,\theta)=f_G(\cdot,(T^v_{\alpha}(\theta_v))_v)$ for no rescaling, thus making the invariant set a good proxy for the function space $\mathcal{F}_G:=\{  f_G(\cdot,\theta)\,|\, \theta\in\Theta\}$. Nevertheless, as discussed in \citet{nurisso2024topological}, different values of $c$ correspond to different topologies of $\mathcal{H}_G(c)$, meaning that $\mathcal{F}_G$ does not provide the full picture to understand the learning process.
$\mathcal{H}_G(c)$ also has its limits: it might not be identifiable with the functions it contains.
Indeed, for two isomorphic nodes $i,j\in G$ with $i\neq j$, permuting their input and output weights will yield an observationally equivalent parametrization.
And if $i$ and $j$ are such that $c_i=c_j$, then both parameterizations will be in $\mathcal{H}_G(c)$, and so the map $\theta \mapsto f_G(\cdot;\theta)$ will not be injective into $\mathcal{F}_G$.

\section{Geometry and topology of the invariant set}\label{sec:geotopo_invariantset}
The study of the geometric and topological properties of $\mathcal{H}_G(c)$ (\Cref{def:invariant_set}) can give us interesting loss and data-independent insights into the training processes.

$\mathcal{H}_G(c)$ is the set of solutions of a system of degree-two polynomial equations, each one corresponding to a quadric.
Despite the apparent simplicity, studying general intersections of quadrics is not easy \citep{de2023topology} but, in our case, the specific structure of $\mathcal{H}_G(c)$ greatly simplifies the process.
In fact, the equations $\tilde{B}\theta^2 = c$ correspond to a system of \emph{coaxial} quadric hypersurfaces, meaning that they contain only squares and no mixed terms of the form $\theta_e\theta_{e'}$.
This fact allows us to employ some powerful recent results in topology \citep{de2023topology}.

\subsection{Non-emptiness}
The first result concerns the non-emptiness of $\mathcal{H}_G(c)$ for a given $c$ or, from the other point of view, what the possible balance values $c$ that can appear from an initialization are.
\begin{proposition}[Feasible balance]\label{prop:non-empty}
For all $c\in\R^{|\tilde{V}|}$, one has $\mathcal{H}_G(c) \neq \emptyset$.
\end{proposition}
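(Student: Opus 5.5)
Setting $x=\theta^2\in\R^{|E|}_{\geq 0}$ linearizes the system: $\mathcal{H}_G(c)\neq\emptyset$ if and only if $\tilde{B}x=c$ has a nonnegative solution $x$, since we can then take $\theta=\sqrt{x}$ entrywise. So the plan is to show that for every $c$ the linear system $\tilde{B}x=c$ has a solution with $x\geq 0$. In the flow language of the paper, we must route a nonnegative flow through $G$ so that each hidden node $v$ has net inflow (in minus out) exactly $c_v$, with the input and output neurons acting as free sources and sinks.

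The construction superposes one nonnegative flow per hidden node. First I would establish a structural fact: every hidden node $v\in\tilde V$ lies on a directed path from some input neuron to some output neuron. This needs the standing assumption, implicit in the setup, that hidden nodes have at least one incoming and one outgoing edge ("no neurons have empty neighbors"). Given that, acyclicity and finiteness imply that walking backwards along incoming edges must terminate at a node with no incoming edge, i.e.\ an input neuron. Walking forwards likewise terminates at an output neuron.

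For each $v$, pick such a path and split it at $v$ into $P_v^{\mathrm{in}}$ (from an input neuron to $v$) and $P_v^{\mathrm{out}}$ (from $v$ to an output neuron). The two cases are:
\begin{itemize}
\item If $c_v\geq 0$, let $x^{(v)}=c_v\,\ones_{P_v^{\mathrm{in}}}$. Then $\tilde B x^{(v)}$ equals $c_v$ at $v$ and $0$ at every other hidden node: each interior node of the path gets one unit in and one unit out, and the starting input node's row has been removed from $\tilde B$.
\item If $c_v<0$, let $x^{(v)}=|c_v|\,\ones_{P_v^{\mathrm{out}}}$. This gives net inflow $-|c_v|=c_v$ at $v$ and $0$ at every other hidden node, since the output endpoint's row is likewise absent.
\end{itemize}
Then $x=\sum_v x^{(v)}\geq 0$ satisfies $\tilde Bx=c$ by linearity, and $\theta=\sqrt{x}$ lies in $\mathcal{H}_G(c)$.

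The only delicate step is the path-existence claim, i.e.\ making explicit that hidden nodes have both in- and out-edges. If a hidden node had no incoming edges, its equation would force $c_v\leq 0$ and the proposition would fail, so this assumption is genuinely needed and should be stated. Everything else is routine bookkeeping with the incidence matrix. As an alternative I would keep in mind Farkas' lemma: show there is no $y$ with $\tilde B^\top y\geq 0$ and $c^\top y<0$. The explicit path construction is simpler and also gives a concrete point of $\mathcal{H}_G(c)$, so I would use it.
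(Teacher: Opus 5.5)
Your proposal is correct and follows essentially the same route as the paper: both reduce non-emptiness to finding a nonnegative solution of $\tilde B x = c$, and both build it by superposing indicator vectors of input-to-$v$ or $v$-to-output paths, each of which changes the balance only at $v$. The differences are cosmetic. The paper invokes a cited cone criterion for the reduction and starts from $x_0=\ones$, correcting the residual $c-\tilde B\ones$, which makes $x$ strictly positive. You verify the reduction directly via $\theta=\sqrt{x}$, start from $x=0$, and state explicitly the in/out-edge assumption that the paper uses implicitly.
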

This result (proven in \Cref{proof:non-empty}) means that any balance configuration on the hidden neurons $c$ is achievable through a parameter initialization, provided that no neurons are excluded from the computations. 
From the point of view of network flows, this tells us that it is possible to build a flow $\theta^2$ that satisfies any supply and demand on the hidden nodes.
\subsection{Connectedness}
\begin{figure}
    \centering
    \includegraphics[width=0.9\linewidth]{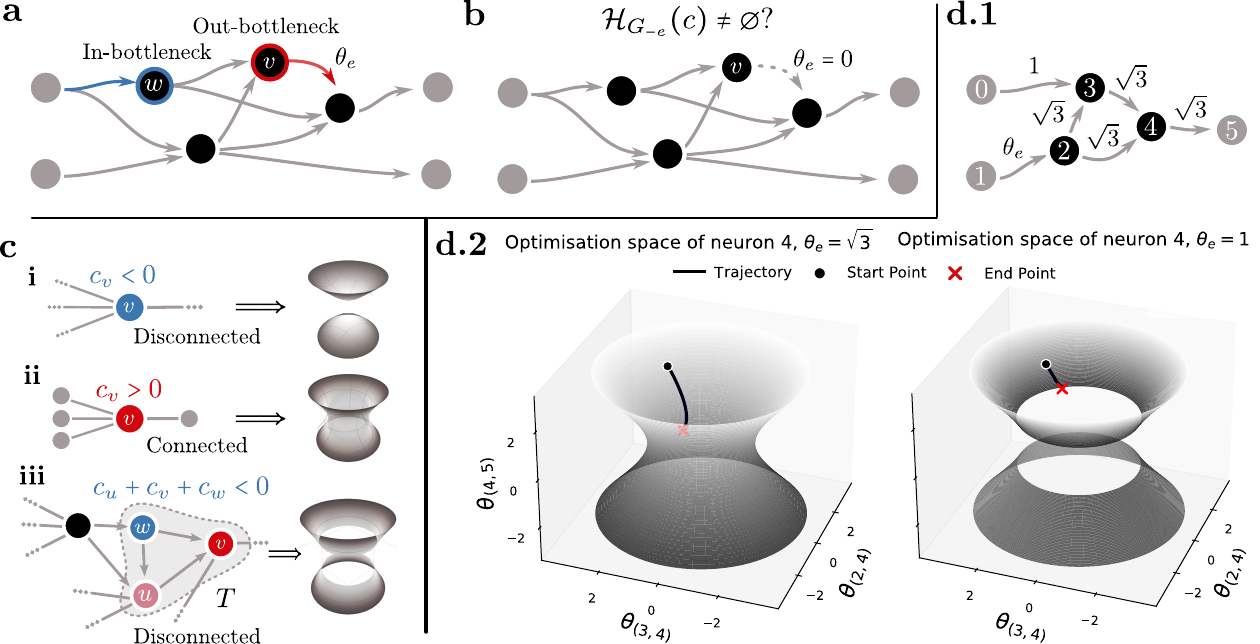}
    \caption{\textbf{Overview of connectedness.} \textbf{a.} In- and out-bottlenecks in $G$. \textbf{b.} The non-emptiness of $\mathcal{H}_G(c)$ is guaranteed if every hidden neuron has input and output edges. \textbf{c.} Different connectedness conditions and \revtwo{intuitive visualizations of} the associated algebraic varieties for an out-bottleneck \textbf{d.} Numerical experiment showcasing training dynamics in a connected and disconnected scenario for a DAG network with $3$ hidden nodes (\textbf{d}.1). }
    \label{fig:connectedness}
\end{figure}
\citet{nurisso2024topological} showed that, for some values of $c$, the invariant set of a shallow ReLU neural network is disconnected. 
This means that a network initialized in one connected component cannot reach an optimum located in another through gradient flow.
Here, we show that the conditions for connectedness in the general DAG case resemble the ones for the shallow case, with additional pathological cases resulting from particular graph topologies.

To find out whether $\mathcal{H}_G(c)$ is connected or not, we will use the following proposition, adapted to our case from \citet{de2023topology}.
\begin{proposition}[\citet{de2023topology}  Proposition 4.7]\label{prop:connected_general_old}
$\mathcal{H}_G(c)$ is connected if and only if, for every $e\in E$, $\mathcal{H}_{G_{-e}}(c) \neq \emptyset$, where $G_{-e} = (V,E\setminus\sset{e})$.
\end{proposition}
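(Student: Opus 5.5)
The plan is to reduce everything to the convex geometry of the squared coordinates. Consider the squaring map $q:\Theta\to\R^{|E|}_{\geq 0}$, $q(\theta)=\theta^2$, and the polyhedron
\[
P(c)=\sset{x\in\R^{|E|}_{\geq 0} \,:\, \tilde{B}x=c}.
\]
Then $\mathcal{H}_G(c)=q^{-1}(P(c))$. The fibre of $q$ over a point $x$ consists of the choices of signs $\theta_e=\pm\sqrt{x_e}$ on the support of $x$. We also observe that $\mathcal{H}_{G_{-e}}(c)\neq\emptyset$ is the same as the existence of a point of $\mathcal{H}_G(c)$ with $\theta_e=0$. Indeed, deleting the edge $e$ from $G$ just deletes the variable $\theta_e$ from the system, which amounts to setting it to zero. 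Equivalently, it says $P(c)\cap\sset{x_e=0}\neq\emptyset$. Finally, $P(c)\neq\emptyset$ by \Cref{prop:non-empty}.

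\emph{Necessity.} Suppose that for some edge $e$ we have $\mathcal{H}_{G_{-e}}(c)=\emptyset$. Then $\theta_e\neq 0$ on all of $\mathcal{H}_G(c)$, so $\mathcal{H}_G(c)$ is the disjoint union of the two relatively open sets $\sset{\theta_e>0}$ and $\sset{\theta_e<0}$. The reflection $\theta_e\mapsto-\theta_e$ preserves the equations, since only $\theta_e^2$ appears in them. Together with non-emptiness, this shows both pieces are nonempty, so $\mathcal{H}_G(c)$ is disconnected.

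\emph{Sufficiency.} The key tool is a lifting lemma. If $x(t)$ is a continuous path in $P(c)$ and $s\in\sset{\pm1}^{E}$ is a fixed sign vector, then $\theta(t)=s\odot\sqrt{x(t)}$ is a continuous path in $\mathcal{H}_G(c)$, because the coordinatewise square root is continuous. Every $\theta\in\mathcal{H}_G(c)$ has this form with $x=\theta^2$ and some $s$. The choice of $s_e$ is irrelevant wherever $x_e=0$.

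Now fix $\theta,\theta'\in\mathcal{H}_G(c)$ with sign vectors $s,s'$. For each edge $e$, the hypothesis lets us choose $y^e\in P(c)$ with $y^e_e=0$. To flip the sign of coordinate $e$ starting from a point $s\odot\sqrt{x}$, we proceed in three steps:
\begin{enumerate}
\item follow the segment from $x$ to $y^e$, which lies in $P(c)$ by convexity, lifting it with signs $s$;
\item at $y^e$, where $\theta_e=0$, replace $s_e$ by $-s_e$; this does not move the point;
\item return along the same segment with the new signs.
\end{enumerate}
Doing this for every $e$ with $s_e\neq s'_e$ connects $\theta$ to $s'\odot\sqrt{\theta^2}$. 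The segment from $\theta^2$ to $\theta'^2$ in $P(c)$, lifted with signs $s'$, then reaches $\theta'$. Hence $\mathcal{H}_G(c)$ is path-connected.

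I do not expect a real obstacle here. The one point needing care is the lifting lemma: a lift with fixed signs stays continuous even when some coordinates hit zero, and changing a sign exactly where the coordinate vanishes creates no jump. Convexity of $P(c)$ is what replaces the general topological machinery of \citet{de2023topology} in this coaxial setting.
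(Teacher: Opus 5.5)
Your proof is correct, and it takes a different route from the paper. The paper does not prove this statement. It imports it from the general theory of intersections of coaxial quadrics in \citet{de2023topology}, in the equivalent form $c\in\cone(\tilde{B}_{-i})$ for every column $i$ (restated in the appendix). It then uses it as a black box inside the network-flow proof of Theorem~\ref{theo:connectedness}.

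You instead give a self-contained argument. $\mathcal{H}_G(c)$ is the preimage of the convex polyhedron $P(c)$ under squaring, and fixed-sign lifts of paths in $P(c)$ are continuous. The sign of $\theta_e$ can be switched exactly when $P(c)$ meets the face $\sset{x_e=0}$, which is the condition $c\in\cone(\tilde{B}_{-e})$, i.e.\ $\mathcal{H}_{G_{-e}}(c)\neq\emptyset$. Both directions check out.

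Your route buys several things:
\begin{itemize}
\item Convexity replaces the topological machinery, and you get path-connectedness with explicit paths.
\item It makes literal the paper's informal remark that disconnection happens when a weight cannot change sign, since paths between sign-sheets must pass through $\theta_e=0$.
\item It isolates where the graph enters: only through Proposition~\ref{prop:non-empty} in the necessity direction, to exclude the vacuously connected case $\mathcal{H}_G(c)=\emptyset$.
\end{itemize}
In fact your argument works verbatim for any coaxial system $Ax^2=c$ with $\sset{x\geq 0: Ax=c}\neq\emptyset$, so it recovers the cited proposition in that generality. What the citation offers beyond this is access to the rest of that theory, which addresses topological questions beyond connectedness.

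One convention is worth stating explicitly: $G_{-e}$ keeps the same hidden set $\tilde V$, so its reduced incidence matrix is $\tilde{B}$ with column $e$ deleted. This is how the paper reads the statement in its appendix. It is also what justifies your claim that $\mathcal{H}_{G_{-e}}(c)\neq\emptyset$ if and only if some $\theta\in\mathcal{H}_G(c)$ has $\theta_e=0$.
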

In other words, $\mathcal{H}_G(c)$ is connected if it is \virg{robust enough} so that the deletion of single edges does not change the possibility of satisfying the supply and demand conditions of $c$.
From this observation, together with \Cref{prop:non-empty}, we see that the cases in which there is disconnection must necessarily come from the presence of neurons with only one input or output connection.
\begin{definition}[Bottleneck neurons]\label{def:bottleneck}
    A hidden neuron $v\in\Tilde{V}$ is an \textbf{in-bottleneck} if $\mathrm{deg}^-(v) = 1$ and an \textbf{out-bottleneck} if $\mathrm{deg}^+(v) = 1$.
    We denote with \revtwo{$V_B^{-},V_B^+$} the sets of in and out-bottleneck nodes, respectively.
    For any out-bottleneck neuron $v\in V_B^+$, we call $\overline{\mathrm{Anc}}(v)$ the set of its \emph{pure ancestors}, i.e. ancestors $w\in\mathrm{Anc}(v)$ such that any path from $w$ to $V_O$ passes through $v$.
    Analogously, any in-bottleneck defines a set of \emph{pure descendants} $\overline{\mathrm{Desc}}(v)$ containing neurons $w\in\mathrm{Desc}(v)$ such that any path from $V_I$ to $w$ passes through $v$.
    \revtwo{Among the pure ancestors of an out-bottleneck $v$, we say a set $T\subset \overline{\mathrm{Anc}}(v)$ is \textnormal{stable by forward edges} if the inclusion $\bigcup_{u \in T\setminus \{ v\}} \mathcal{N}^+(u) \subset T$ holds, where $\mathcal{N}^+(u)$ denotes the out-neighbors of $u$.
    The analogous notion of stability by backward edges is obtained by considering descendants and in-neighbors instead.}
\end{definition}
The removal of the single connection of a bottleneck neuron (\Cref{fig:connectedness}a,b) will disconnect it and the set of its pure ancestors/descendants will be effectively cut out from the network's computation: either because they receive no inputs from $V_I$ or because they produce no output to $V_O$.


It turns out that it is possible to derive a complete characterization of connectedness and disconnectedness, leveraging tools from network flow theory.
\begin{theorem}\label{theo:connectedness}
$\mathcal{H}_G(c)$ is connected if and only if $\forall v\in V_B^+, \forall T\subset \overline{\mathrm{Anc}}(v)$ s.t. $T$ stable by forward edges $\sum_{u\in T} c_u \geq 0$ and $\forall v\in V_B^-, \forall T\subset \overline{\mathrm{Desc}}(v)$ s.t. $T$ stable by backward edges $\sum_{u\in T} c_u \leq 0$.
\end{theorem}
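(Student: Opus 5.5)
We plan to combine \Cref{prop:connected_general_old} with a feasibility criterion for network flows. Writing $x=\theta^2$, the set $\mathcal{H}_{G'}(c)$ is non-empty if and only if there is $x\in\R^{E'}_{\geq 0}$ with $\tilde{B}' x = c$. Indeed, any such $x$ gives a point $\theta=\sqrt{x}$, and conversely any $\theta$ gives $x=\theta^2$. So by \Cref{prop:connected_general_old}, $\mathcal{H}_G(c)$ is connected if and only if, for every edge $e$, this linear system is feasible for $G_{-e}$. The proof then has three steps: a general feasibility criterion via Farkas' lemma, an identification of the relevant node sets in $G_{-e}$, and a matching with the sets $T$ in the statement.

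\textbf{Step 1 (Farkas).} By Farkas' lemma, $\{x\geq 0,\ \tilde{B}'x=c\}$ is infeasible if and only if there exists $y\in\R^{\tilde V}$ with $\tilde{B}'^\top y\geq 0$ and $y^\top c<0$. Extend $y$ by $0$ on $\partial V$. Then $(\tilde{B}'^\top y)_{(i,j)} = y_j-y_i$, so the first condition says that $y$ is non-decreasing along every edge. We decompose such a $y$ by a layer-cake argument:
\[
y=\int_0^\infty \1_{\{y>s\}}\,ds-\int_0^\infty \1_{\{y<-s\}}\,ds .
\]
Each superlevel set $U=\{y>s\}$, $s\ge 0$, is a set of hidden nodes closed under out-edges, and each $W=\{y<-s\}$ is closed under in-edges. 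Conversely, the indicators $\1_U$ and $-\1_W$ of such sets are admissible choices of $y$. Hence $y^\top c<0$ for some admissible $y$ if and only if $\sum_{U}c<0$ for some forward-closed $U\subseteq\tilde V$, or $\sum_W c>0$ for some backward-closed $W\subseteq\tilde V$. So feasibility is equivalent to
\[
\sum_{u\in U}c_u\ge 0 \text{ for all forward-closed } U, \qquad \sum_{w\in W}c_w\le 0 \text{ for all backward-closed } W.
\]
Since outputs are not in $\tilde V$, a forward-closed $U$ must consist of nodes with no path to $V_O$; symmetrically, $W$ consists of nodes with no path from $V_I$. For $G$ itself no such nonempty sets exist, which is consistent with \Cref{prop:non-empty}.

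\textbf{Step 2 (which sets appear in $G_{-e}$).} Let $e=(a,b)$. If $a$ is hidden with $\deg^+(a)\ge 2$, every node still reaches $V_O$ in $G_{-e}$. This holds because $a$ keeps another out-edge, every hidden node has an out-edge, and $G$ is acyclic, so following out-edges from $a$ ends at an output. Any former path through $e$ can therefore be rerouted through $a$. If instead $a\in V_B^+$, the nodes losing access to $V_O$ are exactly $\overline{\mathrm{Anc}}(a)$, including $a$ itself. A subset $T$ of these that is closed under out-edges in $G_{-e}$ is precisely a set that is stable by forward edges in the sense of \Cref{def:bottleneck}. The exclusion of $a$ from the union $\bigcup_{u\in T\setminus\{a\}}\mathcal{N}^+(u)$ there accounts for the deleted edge. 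The symmetric argument on $b$ gives backward-stable subsets of $\overline{\mathrm{Desc}}(b)$ when $b\in V_B^-$, and no backward-closed sets otherwise.

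\textbf{Step 3 (conclusion).} Taking the conjunction over all $e\in E$, each out-bottleneck contributes through its unique out-edge, and each in-bottleneck through its unique in-edge. This yields exactly the condition of the theorem.

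The main obstacle I expect is the careful bookkeeping in Step 2. One must check that the forward-closed sets in $G_{-e}$ are exactly the forward-stable subsets of pure ancestors, and that the definition of stability correctly handles the node $a$ whose edge was removed. One must also verify that sets mixing nodes from different bottlenecks never arise for a single deleted edge. The Farkas and layer-cake step is standard, with Hoffman's circulation theorem as an alternative, but the boundary convention $y|_{\partial V}=0$ must be imposed explicitly.
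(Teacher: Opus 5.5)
Your argument is correct, but it takes a different route from the paper.

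\textbf{The paper's route.} The paper does not dualize the system on $G_{-e}$ directly. For each bottleneck $v$ it builds an auxiliary circulation problem $(FF_v^{\pm})$ on the pure-ancestor (resp.\ pure-descendant) subgraph. This problem has extra nodes $s,t$, source and sink arcs with fixed flows $|c_w|$, and a return arc $(t,s)$. The paper then proves a lemma that connectedness is equivalent to feasibility of all these local problems. The converse direction of that lemma glues a solution on the rest of the graph to the local flow and repairs balances along input/output paths. Finally it applies Hoffman's circulation theorem: a strictly positive cut has $s,t\in S$ and a forward-stable $T$, with value $-\sum_{u\in T}c_u$.

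\textbf{Your route.} You apply Farkas' lemma to the global system $\tilde{B}'x=c$, $x\ge 0$. The layer-cake decomposition then reduces dual certificates to indicators of out-closed and in-closed sets of hidden nodes. This is the same duality that underlies Hoffman's theorem.

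\textbf{What working globally buys.} The paper's localization-and-gluing lemma is replaced by a reachability observation: closed sets of hidden nodes are exactly sets cut off from $V_O$ (resp.\ $V_I$). An edge $(a,b)$ with $a\in V_B^+$ and $b\in V_B^-$ needs no separate case. So the ``mixing'' worry at the end of your proposal is moot, since Farkas already splits the obstructions into the two families. Step~1 also gives a non-emptiness criterion for arbitrary DAGs that recovers \Cref{prop:non-empty}.

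\textbf{What the paper's route buys.} It is more constructive, producing an explicit point of $\mathcal{H}_{G_{-e}}(c)$ from a local flow.

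\textbf{Two small items to tidy.} In Step~2, also dispose of the trivial case $a\in V_I$: there all hidden nodes keep their out-edges. Read the sets $T$ in the statement as subsets of hidden nodes, since $c$ is indexed by $\tilde V$; your formulation with $U\subseteq\tilde V$ already does this.
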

\begin{proof}
The proof is fairly technical and can be found in \Cref{proof:connectedness}.
\end{proof}

Intuitively, disconnectedness is caused by bottleneck nodes such that cutting their single edge makes the balances (supplies/demands) of their pure ancestors/descendants unfeasible.
For instance, let us look at the out-bottleneck $v$ in \Cref{fig:connectedness}c: (i) $c_v<0$ means that $v$ requires more flow coming out than in.
This is not feasible because there are no other output connections.
(ii) $c_v>0$ means that $v$ requires more flow coming in than out\revtwo{, which is always feasible for the small shallow network (ii)}.
\revtwo{More generally,} this is feasible \revtwo{in bigger networks} unless (iii) there is forward stable set $T$ for which $\sum_{k\in T} c_k<0$, as predicted by \Cref{theo:connectedness}.
Intuitively, there is too much flow coming in $T$ than can be absorbed before $v$.
\revtwo{Mechanistically, there is a disconnection whenever the output/input weight of a bottleneck neuron cannot change sign through gradient flow.}

Two immediate corollaries follow, clarifying the problem of connectedness in most practical cases.
\begin{corollary}
If $G$ has no bottleneck neurons, then $\mathcal{H}_G(c)$ is connected.
\end{corollary}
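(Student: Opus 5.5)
The corollary follows from \Cref{prop:connected_general_old} combined with \Cref{prop:non-empty}; an alternative route is to read it off \Cref{theo:connectedness}. In that theorem, if $V_B^+=V_B^-=\emptyset$, both universally quantified conditions range over empty index sets and hold vacuously, so $\mathcal{H}_G(c)$ is connected for every $c$. Since the theorem's proof is deferred to the appendix, I will also give the direct argument, which is more self-contained.

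By \Cref{prop:connected_general_old}, it suffices to show that $\mathcal{H}_{G_{-e}}(c)\neq\emptyset$ for every edge $e\in E$. The plan is to apply \Cref{prop:non-empty} to the graph $G_{-e}$. That result holds for any DAG in which every hidden neuron has at least one incoming and one outgoing edge. This is the standing assumption under which it was stated, as in \Cref{fig:connectedness}b.

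The key step is to check that $G_{-e}$ still satisfies this hypothesis. Let $e=(i,j)$. Removing $e$ lowers $\deg^+(i)$ and $\deg^-(j)$ by one and leaves all other degrees unchanged.
\begin{itemize}
\item If $i$ is hidden, it is not an out-bottleneck, so $\deg^+_G(i)\ge 2$ and hence $\deg^+_{G_{-e}}(i)\ge 1$.
\item If $j$ is hidden, it is not an in-bottleneck, so $\deg^-_G(j)\ge 2$ and hence $\deg^-_{G_{-e}}(j)\ge 1$.
\item Input and output neurons are unconstrained, since $c$ only indexes $\tilde V$.
\end{itemize}
The vertex sets $V_I$, $V_O$ and $\tilde V$ are unchanged, and $G_{-e}$ is still a DAG. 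Hence \Cref{prop:non-empty} gives $\mathcal{H}_{G_{-e}}(c)\neq\emptyset$ for all $c$, and connectedness follows.

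The only delicate point is making sure \Cref{prop:non-empty} really requires nothing beyond every hidden node having in- and out-edges. For example, it should not require every hidden node to lie on an input–output path, a property that could be lost when $e$ is deleted. If it did, I would instead build a feasible flow directly. Since every hidden node keeps an in-edge and an out-edge, take $\theta^2$ to be a large constant on all edges, then correct the balance at each node $v$ by adding or subtracting weight on one in-edge or out-edge, choosing whichever adjustment keeps all values positive. This works because each node has edges on both sides, and the sources and sinks at $\partial V$ absorb any excess. A cleaner version of this correction is the standard network-flow argument that equations of the form \eqref{eq:hyperbola} with free boundary are always solvable when each hidden node has both an in-edge and an out-edge.
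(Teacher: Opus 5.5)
Your proof is correct and matches the paper: the paper gets this corollary by observing that the conditions of \Cref{theo:connectedness} hold vacuously when $V_B^+=V_B^-=\emptyset$, and your direct route is case~1 of the $(\impliedby)$ direction in the proof of \Cref{lem:flow_connected}. Your \virg{delicate point} is a non-issue, because in a finite DAG a hidden node with an in-edge and an out-edge automatically lies on a $V_I\to V_O$ path: walking backward the walk can only stop in $V_I$, and walking forward it can only stop in $V_O$. So the path property that the proof of \Cref{prop:non-empty} genuinely uses survives deleting $e$, whereas your fallback of correcting $v$ on a single in- or out-edge would not work as stated, since it also shifts the balance at that edge's other endpoint unless the correction is carried along a whole path to $\partial V$.
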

Note that bottleneck neurons are rare in MLPs because their existence implies the presence of a layer with only one node.
The exceptions are given by neural networks built to solve binary classification or scalar regression problems, where there is a single output neuron, and the neurons in the last layer are all out-bottlenecks.
In that case, we adapt \Cref{theo:connectedness}, finding that the results of \citet{nurisso2024topological} nicely carry over to the multi-layer case.
\begin{corollary}\label{cor:MLP}
If $G$ is a fully-connected MLP \revtwo{such that all hidden layers contain more than one neuron}, then $\mathcal{H}_G(c)$ is connected if and only if $c_v \geq 0\ \forall v\in V_B^+$ and $c_v \leq 0\ \forall v\in V_B^-$.
\end{corollary}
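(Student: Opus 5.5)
The plan is to specialise \Cref{theo:connectedness} to the layered structure of a fully-connected MLP. The first step is to identify the bottlenecks. In a fully-connected MLP with layers $V_I=L_0, L_1,\dots,L_k, L_{k+1}=V_O$, every neuron in $L_\ell$ has in-degree $|L_{\ell-1}|$ and out-degree $|L_{\ell+1}|$. Since all hidden layers have more than one neuron, a hidden neuron $v\in L_\ell$ can be an out-bottleneck only if $\ell=k$ and $|V_O|=1$. It can be an in-bottleneck only if $\ell=1$ and $|V_I|=1$. So $V_B^+$ is either empty or all of $L_k$, and $V_B^-$ is either empty or all of $L_1$.

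The second step is to compute the pure ancestors of an out-bottleneck $v\in L_k$. Every $w\in L_\ell$ with $1\le \ell<k$ is connected to every neuron of $L_{\ell+1}$. Following edges layer by layer, it has a path to $V_O$ through any chosen node of $L_k$, and in particular through some $v'\in L_k$ with $v'\neq v$, which exists because $|L_k|\ge 2$. Hence no hidden neuron other than $v$ lies in $\overline{\mathrm{Anc}}(v)$. Input neurons are not hidden and carry no balance value $c$; I read the sums in \Cref{theo:connectedness} as ranging over hidden nodes, or else input nodes would contribute nothing. So, as far as the balance condition is concerned, $\overline{\mathrm{Anc}}(v)\cap\tilde V=\{v\}$.

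The third step is to list the forward-stable subsets $T$. A hidden subset of $\{v\}$ containing $v$ is just $\{v\}$, and stability holds vacuously because the union ranges over $T\setminus\{v\}$. The condition of \Cref{theo:connectedness} therefore reduces to $c_v\ge 0$. The empty set gives the trivial inequality $0\ge 0$. By the symmetric argument, reversing edges and using $|L_1|\ge2$, every in-bottleneck $v$ yields exactly the condition $c_v\le 0$. Substituting these into \Cref{theo:connectedness} gives the claimed equivalence.

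The main obstacle is the bookkeeping of input and output nodes inside the pure-ancestor sets. I must check that including input nodes in $T$ is either disallowed or harmless. If inputs may belong to $T$, stability forces $T$ to contain the out-neighbours of those inputs, namely all of $L_1$, and then successively every later layer. Such $T$ lie outside $\overline{\mathrm{Anc}}(v)$ unless $k=1$. In the shallow case $k=1$, inputs are pure ancestors only if $|L_1|=1$, which is excluded. A short case check of this kind, possibly referring to the convention in the proof of \Cref{theo:connectedness}, settles the point.
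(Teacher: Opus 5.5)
Your proposal is correct and follows the paper's own route. The paper also specialises \Cref{theo:connectedness} by noting that in a fully-connected MLP whose hidden layers all have at least two neurons, the only pure ancestor (resp.\ pure descendant) of a bottleneck is the node itself, so the conditions collapse to $c_v\ge 0$ and $c_v\le 0$. Your detour about input nodes is sound but unnecessary: your step-two argument applied with $\ell=0$ already shows that inputs are never pure ancestors, since they reach $V_O$ through some $v'\in L_k\setminus\{v\}$.
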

\begin{proof}
The result follows from \Cref{theo:connectedness} by noticing that in a fully connected architecture, the only pure ancestor/pure descendant of a node is the node itself.
\end{proof}
\rev{A practical implication of this section is that the expressivity of ReLU networks can be reduced at initialization to the extent that they lose their universal approximation capability.
That is, some functions become immediately unreachable, regardless of the chosen loss function or dataset.}













\paragraph{Numerical experiments}

We illustrate \Cref{theo:connectedness} on the toy DAG neural network shown in \Cref{fig:connectedness}d.1, implemented using dedicated software~\citep{boccato20244ward} and in discrete settings.
Neuron $4$ will be the out-bottleneck of interest.
All hidden neurons (in black) have ReLU activation.
With the initialization given by the values in the figure, we have $c=(c_2,c_3,c_4)=(\theta_e^2-6, 1, 3)$.
There are $3$ forward stable sets of nodes, the largest being $T=\left \{ 2, 3, 4\right \}$ with $\sum_{k \in T} c_k=\sum_{k}c_k = \theta_e^2-4$.
Therefore, if at initialization $\theta_e(0)<\sqrt{2}$, then $\sum_{k \in T} c_k<0$ and the optimization space will disconnect at neuron $4$.
Concretely, it means that the balance condition at neuron $4$ will forbid sign switches of $\theta_{(4,5)}$.
Let us try to make the model learn the function $f:(x_1,x_2)\to - (x_1+x_2)$ for positive inputs.
The only way to output negative values is if $\theta_{(4,5)}<0$, and so the optimum will not be reachable for $\theta_e(0)=1<\sqrt{2}$ as shown on the right plot of \Cref{fig:connectedness}d.2, while the optimum is reachable (and reached) for $\theta_e(0)=\sqrt{3}>\sqrt{2}$, as depicted on the left plot.
\rev{Additional experiments can be found in \Cref{appendix:add_exp_subsub_connec}.}

\subsection{Singularities}
\begin{figure}
    \centering
    \includegraphics[width=\linewidth]{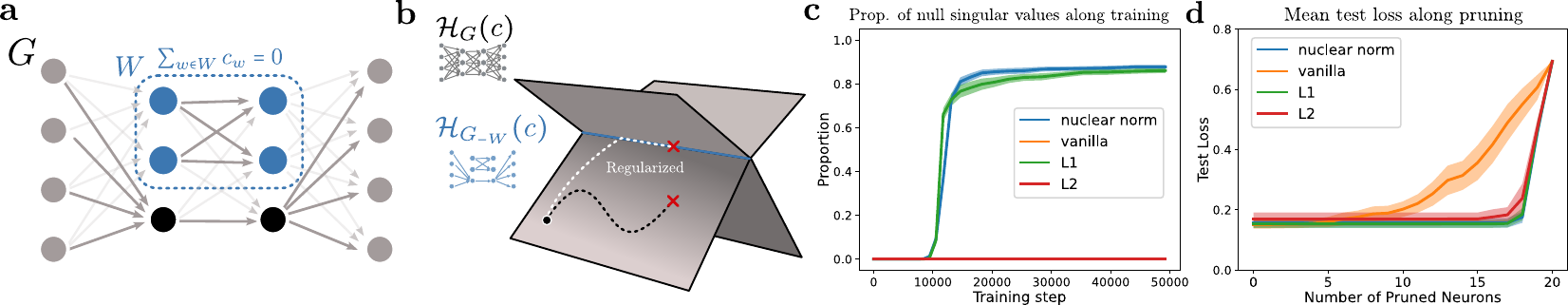}
    \caption{\textbf{a.} A singularity of the invariant set corresponds to a configuration where a set of neurons is cut out from input and outputs. \textbf{b.} Visualization of the training dynamics on an invariant set with singularities. \textbf{c.} Proportion of null singular values along training for shallow network with 20 hidden neurons with and without regularization. \textbf{d.} Test losses as a function of the number of neurons pruned. Shaded regions in \textbf{c.} and \textbf{d.} denote confidence intervals over 50 independent trainings that have converged to a low loss solution.}
    \label{fig:singularities}
\end{figure}
While singularities in the neuromanifold—i.e., the function space—have been extensively studied \citep{amari2001differential,amari2001geometrical,amari2006singularities,henry2024geometry}, in this part we propose a complementary perspective by analyzing singularities of the invariant set $\mathcal{H}_G(c)$, which sits in the optimization (parameter) space.

\paragraph{Singularities are sub-networks.}
In algebraic geometry, a \textit{singularity} refers to a point in a variety where the tangent space is not well-defined.
Mathematically, if a variety is given by the system of $n$ equations $g(x)=0 \in \Reals^n$, singularities correspond to the points $x$ where the Jacobian matrix $J(x) = D g(x)$ has its rank reduced from the maximal possible.
When the rank of the Jacobian is maximal, the point is instead said to be \emph{regular}.

In the case of the invariant set, the Jacobian matrix is computed by taking the derivative of $\tilde{B}\theta^2-c$ w.r.t. $\theta$, resulting in
\begin{equation}\label{eq:jacobian}
J_G(\theta) = 2\tilde{B}\diag{(\theta}). 
\end{equation}
From \Cref{eq:jacobian}, we see how the Jacobian matrix has the same structure of the graph incidence matrix $\tilde{B}$, with each of the edges (the columns) weighted by its associated parameter $\theta$.
Therefore, it follows that the only way in which the matrix can have a lower rank is when some of the values of $\theta$ are 0, i.e., when some of the edges are cut from the graph.
\begin{theorem}[Singularities disconnect neurons]\label{th:sing_are_subnetworks}
If $G'$ is the undirected graph obtained from $G$ by gluing together input and output nodes and neglecting edge direction, and $\mathcal{E}(\theta)\subseteq E$ is the set of edges with zero weight $\theta_e = 0 \iff e \in \mathcal{E}$, then
\[
\rank J_G(\theta) = |\tilde{V}|+1 - CC(G'_{-\mathcal{E}(\theta)}),
\]
where $CC(G)$ is the number of connected components of $G$.
\end{theorem}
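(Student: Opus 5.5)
The plan is to reduce the rank of $J_G(\theta)=2\tilde{B}\diag(\theta)$ to the rank of an incidence matrix of a smaller graph, and then use the classical fact that the incidence matrix of a graph has rank equal to the number of vertices minus the number of connected components.

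First, scaling a column by a nonzero scalar does not change the column space, and a column scaled by zero becomes the zero column. Hence
\[
\rank J_G(\theta)=\rank \tilde{B}_{-\mathcal{E}(\theta)},
\]
where $\tilde{B}_{-\mathcal{E}(\theta)}$ is $\tilde{B}$ with the columns indexed by $\mathcal{E}(\theta)$ deleted. The factor $2$ is irrelevant.

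Next, I identify $\tilde{B}_{-\mathcal{E}(\theta)}$ with a reduced incidence matrix of $G'_{-\mathcal{E}(\theta)}$. Let $\star$ be the node obtained by gluing all of $\partial V$ together, so that $G'$ has vertex set $\tilde{V}\sqcup\{\star\}$ and $|\tilde{V}|+1$ vertices. Edges with both endpoints in $\partial V$ become loops at $\star$. Such edges have zero columns in $\tilde{B}$, and loops do not affect connectivity, so they can be ignored. Fix any orientation of the remaining edges; the paper's convention works, since flipping an orientation only negates a column. Then the full incidence matrix $B'$ of $G'_{-\mathcal{E}(\theta)}$ is the matrix $\tilde{B}_{-\mathcal{E}(\theta)}$ with one extra row for $\star$. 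An edge from $\partial V$ into $\tilde{V}$ has its $\pm1$ at $\star$, while edges inside $\tilde{V}$ have a zero entry in the $\star$ row. Every column of $B'$ sums to zero, so the $\star$ row equals minus the sum of the other rows. Therefore
\[
\rank B'=\rank \tilde{B}_{-\mathcal{E}(\theta)}.
\]

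Finally, I invoke the standard result that for a graph $H$ with $n$ vertices, $\rank B_H = n - CC(H)$ over $\R$. The proof is short. A vector $y\in\R^{n}$ lies in the left kernel, $y^\top B_H=0$, if and only if $y_i=y_j$ for every edge $\{i,j\}$, that is, if and only if $y$ is constant on each connected component. So the left kernel has dimension $CC(H)$, and rank–nullity gives the claim. Applying this with $n=|\tilde{V}|+1$ gives the formula in the statement.

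The only step needing care is the bookkeeping in the identification step. I have to check that gluing $\partial V$ into a single vertex $\star$ is exactly what makes the deleted rows recoverable, and that loops and multi-edges created by the gluing are harmless. Multi-edges simply give duplicate columns and do not change either the rank or the connectivity.
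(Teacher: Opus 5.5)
Your proof is correct, and its overall structure matches the paper's. Both first reduce $\rank J_G(\theta)$ to the rank of $\tilde{B}$ with the columns indexed by $\mathcal{E}(\theta)$ removed. Both then count connected components of the glued graph through the left kernel and rank--nullity.

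The difference is the key step linking the truncated matrix $\tilde{B}$ to $G'_{-\mathcal{E}(\theta)}$. The paper reads $\ker \tilde{B}^\top_{-\mathcal{E}(\theta)}$ as the relative cohomology $H^0(G_{-\mathcal{E}(\theta)},\partial V)$. It then cites Hatcher's Proposition 2.22 (relative cohomology of a good pair equals reduced cohomology of the quotient) to get the count $CC(G'_{-\mathcal{E}(\theta)})-1$. You prove that identification directly instead. $\tilde{B}$ is the incidence matrix of the glued graph with the row of $\star$ deleted. That row is redundant because every column sums to zero, so the classical formula $\rank B_H = n - CC(H)$ applies with $n=|\tilde{V}|+1$.

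Your route is elementary and self-contained. It also makes the $+1$ transparent: it is exactly the one redundant row of the glued vertex. And it handles bookkeeping the paper leaves implicit: direct input-to-output edges becoming loops with zero columns, multi-edges into $\star$, orientation signs, and the reduced-versus-unreduced count.

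The paper's route buys a conceptual framing in relative (co)homology that would carry over to higher-dimensional analogues. The cost is relying on a cited topological result rather than a two-line matrix argument.
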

\begin{proof}
The proof uses tools from discrete topology and works by relating the rank of $J_G(\theta)$ to a weighted graph Laplacian. 
The derivation can be found in \Cref{proof:sing_are_subnetworks}
\end{proof}
Therefore, it follows that $\rank J_G(\theta) = |\tilde{V}|$ for regular parameters, and its rank decreases for parameters whose zero edges disconnect some hidden neurons from both input and output.
If a group of neurons is such that it is not connected by any path to both input and output neurons, it means that it is a useless component of the network as it takes no part in any computation, both in the feed-forward and in the back-propagation phases.
Singularities, therefore, correspond to effective \emph{sub-networks} of the original neural network, as shown on \Cref{fig:singularities} a.
This echoes similar results connecting sub-networks to the singular points of the neuromanifold \citep{trager2019pure,shahverdi2024algebraic,arjevani2025geometry}.

This observation can be leveraged to prove the following result.


\begin{proposition}[Singularities are invariant under GF]\label{prop:sing_invariant}
Let $\theta(t_0)\in\mathcal{H}_G(c)$ and $W\subseteq\tilde{V}$ be a disconnected set of nodes at time $t_0$, that is, $\theta_{(u,v)}(t_0)=0$ for $(u,v) \in E$ with $u\in W$ and $v \notin W$ or vice versa.
Then, at a later time $t>t_0$, $\theta_{(u,v)}(t)=0$ still holds and $W$ is still disconnected.
\end{proposition}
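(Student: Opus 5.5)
The plan is to show that the set of parameters in which every boundary edge of $W$ carries zero weight is invariant under gradient flow, by combining the conservation law of \Cref{prop:conservation_laws} with a structural observation about the gradient.

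First, some notation. Let $\partial W\subseteq E$ be the set of edges with exactly one endpoint in $W$, and let
\[
Q(\theta)=\sum_{e\in\partial W}\theta_e^2 .
\]
Summing the rows of $\tilde{B}\theta^2=c$ over $v\in W$ cancels every edge with both endpoints in $W$. What remains is $\sum_{v\in W}c_v=\sum_{e\in\partial W}\pm\theta_e^2$, where the sign is $+$ for edges entering $W$ and $-$ for edges leaving it. This signed sum is conserved, but it is not $Q$ itself, so a direct argument is needed to show that $Q$ stays at zero.

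The key step is the following claim: whenever $\theta_e=0$ for all $e\in\partial W$, the gradient (every element of the Clarke subdifferential we select) also vanishes on $\partial W$. The reason is that $W$ is then cut off from the rest of the network.

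Consider first an edge $(u,v)$ entering $W$, so $v\in W$. Its partial derivative is $\sigma(a_u)\,\partial L/\partial a_v$, where $a_v$ is the preactivation at $v$. Any influence of $a_v$ on the outputs must travel along a path leaving $W$. Every such path uses an edge of $\partial W$ with zero weight, so $\partial L/\partial a_v=0$.

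Now consider an edge $(u,v)$ leaving $W$, so $u\in W$. Its partial derivative is $\sigma(a_u)\,\partial L/\partial a_v$. The value $a_u$ depends only on inputs that reach $u$ through edges entering $W$, and these carry zero weight. Any input neuron inside $W$ has no incoming edges and could only feed hidden nodes through zero-weight edges out of $W$, so I would assume or argue $W\subseteq\tilde V$ here. Hence $a_u$ is built only from zero contributions. Every node of $W$ therefore has preactivation $0$ by induction along a topological order of $W$, so $\sigma(a_u)=\sigma(0)=0$ for ReLU.

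It follows that the coordinate subspace $S=\{\theta:\theta_e=0\ \forall e\in\partial W\}$ is such that the (sub)gradient field is tangent to $S$.

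To conclude, I would argue via uniqueness or a Grönwall-type estimate. Restrict the flow to $S$, that is, solve the gradient flow of $L|_S$ starting from $\theta(t_0)$. By the claim, this restricted solution is a solution of the full flow, so the trajectory remains in $S$.

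To handle non-uniqueness of Clarke gradient flows, I would instead bound $\tfrac{d}{dt}Q=-2\sum_{e\in\partial W}\theta_e g_e$ directly. For an entering edge $(u,v)$, the gradient is $g_e=\sigma(a_u)\,\delta_v$. The backpropagated signal $\delta_v$ is a sum over paths leaving $W$, each containing a factor $\theta_{e'}$ with $e'\in\partial W$. On a bounded trajectory this gives $|\delta_v|\le C\sqrt{Q}$. For a leaving edge, $|\sigma(a_u)|\le C\sqrt{Q}$ by the same path expansion together with the Lipschitz property of ReLU. Together these give $|\dot Q|\le CQ$, and since $Q(t_0)=0$, Grönwall's inequality yields $Q\equiv 0$.

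The main obstacle is making the Grönwall bound uniform in the presence of non-smoothness. The path expansions must hold for any selection from the Clarke subdifferential, and the trajectory must stay bounded locally in time. I would obtain this by noting that ReLU derivatives lie in $[0,1]$ and that the trajectory is absolutely continuous, hence bounded on compact time intervals.
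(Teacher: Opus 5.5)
Your proof is correct, and its first half coincides with the paper's proof. The paper likewise expands $\partial L/\partial a_v$ as a sum over paths to $V_O$, notes that for $v\in W$ each such path crosses a zero-weight edge of $\partial W$, and uses that all neurons of $W$ have zero activation. From this it concludes that the gradient vanishes on the disconnecting edges whenever $\theta\in S$. The paper stops at that point, implicitly treating this tangency of the gradient field to $S$ as enough for invariance.

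Your Gr\"onwall step is the genuinely additional ingredient, and it is needed. The Clarke gradient field of a ReLU network is discontinuous and trajectories need not be unique. A field that merely vanishes in the normal directions on $S$ does not by itself prevent escape (compare $\dot x=\sqrt{|x|}$ from $x(0)=0$, which is the gradient flow of a $C^1$ function). Your quantitative version excludes escape for every absolutely continuous solution, which is what the statement actually asserts. It shows the normal components are $O(\sqrt{Q})$ near $S$, because $|\delta_v|\le C\sqrt{Q}$ and $|\sigma(a_u)|\le C\sqrt{Q}$ for nodes of $W$, so that $|\dot Q|\le CQ$. The paper's argument buys brevity; yours buys an actual proof of invariance under a possibly non-unique Clarke flow.

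Some consequences and smaller points:
\begin{itemize}
\item You can drop the restricted-flow argument, since it only exhibits one trajectory that stays in $S$. The conservation-law remark is also unnecessary.
\item The obstacle you flag is resolved in the standard way. Clarke's gradient formula allows discarding any null set, and backpropagation returns the true gradient almost everywhere. Hence every Clarke subgradient is a convex combination of limits of backpropagation outputs with activation-derivative selections in $[0,1]$. Your path bounds survive such limits and convex combinations, provided $\nabla\ell$ is bounded on compact sets (e.g.\ $\ell\in C^1$).
\item $W\subseteq\tilde V$ is part of the hypothesis, so the caveat about input neurons is unnecessary.
\item $\sigma(0)=0$ holds for every homogeneous activation, not only ReLU.
\end{itemize}
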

\begin{proof}
Intuitively, the result is proved by noticing that, in a singular $\theta$, the activation of the neurons in $W$ will be 0, meaning that backpropagation will assign zero gradients to all the edges $(u,v), v\in W\wedge u\in W$.
The full proof can be found in \Cref{proof:sing_invariant}
\end{proof}
\Cref{prop:sing_invariant} means that if the parameter reaches a singularity, then it cannot escape from it: once a network module has been killed, it cannot be revived.

\paragraph{Singularities are rare.}
One would be tempted to think that the presence and invariance of singularities could provide the explanation for the neural network performing an automatic model selection through the progressive movement from one singularity to another, smaller one.
We show here, however, that this picture does not hold for the singularities of the invariant set for two reasons: \textbf{1.} Given a random initialization, the probability of $\mathcal{H}_G(c)$ having singularities is 0. \textbf{2.} If $\mathcal{H}_G(c)$ has singularities, then the learning trajectory cannot reach them in finite time.

\begin{proposition}\label{prop:no_sing}
Let $c\in\R^{|\tilde{V}|}$. If $\mathcal{H}_G(c)$ admits singularities then there exists a subset of hidden neurons $W\subseteq \tilde{V}$ such that $\sum_{v\in W} c_v = 0$.
\end{proposition}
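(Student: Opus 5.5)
The plan is to combine \Cref{th:sing_are_subnetworks} with a direct summation of the balance equations over the disconnected component.

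Suppose $\theta\in\mathcal{H}_G(c)$ is singular, i.e. $\rank J_G(\theta)<|\tilde V|$. By \Cref{th:sing_are_subnetworks}, this means $CC(G'_{-\mathcal{E}(\theta)})\geq 2$. In $G'$ all input and output nodes are glued into a single vertex, say $\ast$. Hence there is at least one connected component of $G'_{-\mathcal{E}(\theta)}$ that does not contain $\ast$. Let $W$ be its vertex set. Then $W\subseteq\tilde V$ and $W\neq\emptyset$. By construction, every edge $e=(u,v)\in E$ with exactly one endpoint in $W$ (the other being in $V\setminus W$, including boundary nodes) lies in $\mathcal{E}(\theta)$, so $\theta_e=0$.

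Next we sum the equations \eqref{eq:hyperbola} over $v\in W$, i.e. compute $\sum_{v\in W}(\tilde B\theta^2)_v=\ones_W^\top\tilde B\theta^2$. For each edge $e=(i,j)$, the coefficient of $\theta_e^2$ in this sum is $\ones_W(j)-\ones_W(i)$:
\begin{itemize}
\item it is $0$ if both endpoints are in $W$ or both are outside $W$;
\item it is $\pm1$ only when exactly one endpoint lies in $W$.
\end{itemize}
Boundary nodes are never in $W$, so their missing rows in $\tilde B$ cause no trouble: an edge from an input node into $W$ contributes $+\theta_e^2$, consistent with the coefficient rule. The only surviving terms are therefore the crossing edges, which all have $\theta_e=0$. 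We conclude $\sum_{v\in W}c_v=\ones_W^\top\tilde B\theta^2=0$.

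The main point needing care is the first step: translating ``rank drop'' into ``a nonempty set of hidden nodes cut off from $\ast$''. This requires reading \Cref{th:sing_are_subnetworks} correctly. For regular points the graph $G'$ is connected: every hidden node lies on an input–output path, and non-emptiness guarantees a full-rank configuration exists. So the maximal rank is $|\tilde V|$, and a drop forces a second component. Since $\ast$ lies in exactly one component, any other component consists solely of hidden nodes.

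Should one prefer to avoid the theorem, an alternative is direct. Rank deficiency gives a nonzero $y$ with $y^\top\tilde B\diag(\theta)=0$. Extending $y$ by $0$ on $\partial V$, this says $y_i=y_j$ on every edge with $\theta_e\neq0$. So $y$ is constant on components and vanishes on the component of $\ast$. Any level set of $y$ with a nonzero value then yields the required $W$.
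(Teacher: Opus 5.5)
Your proof is correct and matches the paper's argument: use the rank drop (via Theorem~\ref{th:sing_are_subnetworks}) to obtain a nonempty set $W\subseteq\tilde V$ whose crossing edges all have zero weight, then sum the balance equations over $W$, so that internal edges cancel and crossing edges vanish. You are more explicit than the paper in checking that $W$ is nonempty and disjoint from $\partial V$, and your left-kernel alternative is also valid and avoids the theorem entirely: a nonzero $y$ with $y^\top\tilde B\,\diag(\theta)=0$ is constant across nonzero edges and vanishes on $\partial V$.
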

\begin{proof}

By definition, a singularity identifies a disconnected set of nodes $W\subseteq \tilde{V}$.
If we denote the edges inside $W$ by $E_W=\left \{ e=(u,v) \in E \mid u,v \in W\right \}$, we have that $\sum_{v\in W} c_v=\sum_{e\in E_W}\theta_e^2 - \theta_e^2=0$.
This is because each edge in $E_W$ is shared by exactly $2$ nodes of $W$ and all other edges in or out of $W$ have weight 0.
\end{proof}
To have a singularity, therefore, an exact equality condition on $c$ must hold. (\cref{fig:singularities}).
If we sample the initial parameter with any initialization scheme where each parameter is independently sampled from the real numbers $\mathbb{R}$, we see that the probability that a set of neurons will have sum \emph{exactly} zero must be zero.
Moreover, a stronger statement than the one of \Cref{prop:sing_invariant} can be derived.
\begin{proposition}\label{prop:no_reach}
Under GF optimization, we have that $\rank J(\theta(0)) = \rank J(\theta(t))\ \forall\, 0\leq t<\infty$.
\end{proposition}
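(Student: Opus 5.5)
By \Cref{th:sing_are_subnetworks}, $\rank J_G(\theta)$ is determined entirely by the zero pattern $\mathcal{E}(\theta)$. So it suffices to show that, under GF, the set of zero edges is \emph{exactly} preserved for all finite times $t\ge0$. Concretely:
\begin{itemize}
\item an edge with $\theta_e(0)=0$ must stay zero;
\item an edge with $\theta_e(0)\neq 0$ must never become zero in finite time.
\end{itemize}
A weaker statement suffices, namely that the number of connected components $CC(G'_{-\mathcal{E}(\theta(t))})$ is constant. The cleanest route, however, is through edge-wise preservation.

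For the first inclusion I would argue as in \Cref{prop:sing_invariant}, but at the level of single edges. Consider the gradient of the loss with respect to $\theta_{(u,v)}$. By the chain rule it equals $\sigma(z_u)\,\delta_v$, where $z_u$ is the preactivation of $u$ (or the input value if $u\in V_I$) and $\delta_v$ is the backpropagated signal at $v$.

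This alone does not force a zero edge to stay zero. However, GF along the orbit satisfies the scalar ODE
\[
\dot\theta_e = -g_e(\theta) = -\theta_e\,\frac{g_e(\theta)}{\theta_e}
\]
only if $g_e$ carries a factor $\theta_e$, which is false in general. So edge-wise preservation is the wrong target.

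I would instead work at the level of disconnected node sets $W$, which is what the rank actually detects. Preservation of disconnected sets is \Cref{prop:sing_invariant}, so components can only split further; they can never merge back.

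It remains to show that no new disconnection appears in finite time. Suppose at some first time $t^*<\infty$ a new set $W$ becomes disconnected, i.e.\ all its boundary edges vanish. By the argument of \Cref{prop:no_sing}, $\sum_{v\in W}c_v=0$, with $c$ conserved by \Cref{prop:conservation_laws}. Hence $\sum_{v\in W}\tilde B(\theta^2)_v = 0$ for all $t$. This sum telescopes to a signed sum of squares over the boundary edges $\partial W$:
\[
\Phi(t):=\sum_{e\in\partial W^{in}}\theta_e^2-\sum_{e\in\partial W^{out}}\theta_e^2\equiv 0 .
\]

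Now consider the quantity $S(t)=\sum_{e\in\partial W}\theta_e(t)^2$, which is positive before $t^*$ and zero at $t^*$. I would derive a linear differential inequality
\[
|\dot S|\le K\,S
\]
on a compact time interval. To get it, note that each boundary-edge gradient $g_e$ has the form (activation on one side)$\times$(backprop signal on the other side). Crossing $W$'s boundary, either the forward signal into $W$ or the backward signal out of $W$ is a linear combination of boundary weights. Hence $|g_e|\le C\sqrt{S}$ locally, which gives $|\dot S|\le 2\sum_e|\theta_e||g_e|\le C' S$, using the local boundedness of the trajectory (GF is bounded on $[0,t^*]$). Gr\"onwall's inequality then gives $S(t^*)\ge S(0)e^{-C't^*}>0$, a contradiction. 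Hence no singularity is reached in finite time, and the rank is constant.

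The main obstacle is the bound $|g_e|\le C\sqrt{S}$. The gradient on an incoming boundary edge of $W$ involves $\delta_v$ for $v\in W$. That signal flows out of $W$ only through outgoing boundary edges, which are small, but it may also flow out through ReLU paths internal to $W$. I would handle this by splitting into the case where $W$ is cut from the outputs and the case where it is cut from the inputs. I would also use the constraint $\Phi\equiv 0$ to compare incoming and outgoing boundary magnitudes. This is what makes both sides small simultaneously, so the product structure yields the needed linear bound.
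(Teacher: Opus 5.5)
Your proof is correct in outline, but it takes a genuinely different route from the paper.

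\textbf{The paper's route.} The paper never looks at the zero pattern. It differentiates $J_G(\theta(t))=2\tilde B\,\mathrm{diag}(\theta(t))$ along the flow. It then substitutes the Hessian identity $\tilde B\,\mathrm{diag}(\theta)H+\tilde B\,\mathrm{diag}(g)=0$, which comes from differentiating the rescaling orthogonality as in Kunin et al., and obtains $\dot J_G=J_G H$. Finally it writes $J_G(t)=J_G(0)U(t)$, with $U(t)$ an invertible time-ordered exponential.

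\textbf{Your route.} You reduce to the combinatorics of \Cref{th:sing_are_subnetworks}. You then run a Gr\"onwall estimate on $S_W=\sum_{e\in\partial W}\theta_e^2$ for each candidate set $W$.

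\textbf{What each buys.} The paper's argument is shorter. It yields $\ker J_G(t)^\top=\ker J_G(0)^\top$, and this kernel is exactly the span of the indicators of the disconnected components, so it is your conclusion. However, it needs a Hessian, which for ReLU networks exists only away from the kinks. Any trajectory approaching a singularity drives every preactivation in $W$ to $0$, i.e.\ straight onto a kink. Your argument uses only first-order bounds, which hold at differentiable points and pass to Clarke subgradients by limits and convex hulls. It is also quantitative: $S_W$ can decay at most exponentially.

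\textbf{The flagged obstacle is not one.} You can drop the case split, the identity $\Phi\equiv 0$, and the appeal to \Cref{prop:no_sing}.
\begin{itemize}
\item Since $W\subseteq\tilde V$ contains no output neuron, every path from $w\in W$ to $V_O$ must leave $W$ through an outgoing boundary edge. No path internal to $W$ reaches the output.
\item So backpropagation writes $\delta_w$ as a combination of outgoing boundary weights, with coefficients bounded along the trajectory.
\item Dually, $|\sigma(z)|\le|z|$ and forward propagation bound $a_x$, for $x\in W$, by the incoming boundary weights.
\end{itemize}
Hence $|g_e|\le C\sqrt{S_W}$ for every $e\in\partial W$ at every point of $[0,T]$, not only near $t^*$. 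Gr\"onwall then gives $S_W(0)e^{-Kt}\le S_W(t)\le S_W(0)e^{Kt}$. The lower bound rules out new disconnections. The upper bound gives persistence directly, without \Cref{prop:sing_invariant} or any appeal to uniqueness of the flow.

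\textbf{Concluding step.} $CC(G'_{-\mathcal{E}(\theta)})-1$ counts the minimal nonempty disconnected sets. So preserving the family of disconnected sets for every $W$ preserves the rank. You can also delete the edge-wise false start.
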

This result (\Cref{proof:no_reach}) tells us that a gradient flow trajectory cannot fall into a singularity in finite time.
Together, \Cref{prop:no_sing,prop:no_reach} implies that singularities generally don't exist in the optimization space when using common initializations and, even with a specifically chosen initialization ($c=0$), they are effectively unreachable under gradient flow.

\paragraph{Inducing singularities.}


As shown above, singularities can allow the model to perform \virg{self-pruning} but they are in general hard to reach.  To actively drive the training dynamics towards them, we explore the use of regularization.
An underlying motivation is the application of differentiable pruning in a very general way, entirely agnostic to the DAG topology.
The idea is illustrated by \Cref{fig:singularities} b.\newline
A natural approach to target singularities given by \Cref{th:sing_are_subnetworks} is to directly penalize the number of neurons which are connected through paths to the input or output.
To formalize this, we can leverage the Jacobian of $\mathcal{H}_G(c)$, $J_G(\theta) = 2\tilde{B}\diag{(\theta})$.
Promoting singularities corresponds to maximizing the dimension of the tangent space of the invariant set, which translates to minimizing the rank of $J_G(\theta)$.
Since the rank is a non-differentiable function, we instead penalize  the \textit{nuclear norm} $\norm{J_G(\theta)}_*$—the sum of its singular values—as a smooth surrogate \citep{zhao2012approximation}.

\paragraph{Numerical experiments}
As an illustrative example, we test our approach on the Breast Cancer dataset~\citep{breast_cancer} using a range of MLP architectures—shallow and deep, with or without biases and skip connections.
To approximate continuous gradient flow, we train with SGD using a small step size ($0.001$), comparing nuclear norm regularization against L1 and L2.
Tracking the Jacobian rank during training confirms that the nuclear norm consistently drives the model toward singularities (\cref{fig:singularities}c).
For instance, a shallow network can disconnects around 18 of its 20 hidden units, whereas L2 and unregularized training leave all neurons active.
Surprisingly, L1 regularization performs similarly to the nuclear norm, despite being traditionally associated with parameter (not neuron) sparsity.
This suggests that L1 may implicitly promote singularities and, while a precise theoretical understanding is outside the scope of this paper\rev{, we provide an empirical analysis in \Cref{appendix:L1_vs_nuclear}}.
Finally, reaching singular configurations guarantees the ability to perform lossless pruning, and while L2 is already quite robust to pruning, disconnecting active neurons introduces modifications to the implemented function (\cref{fig:singularities}d).
All experimental details can be found in \Cref{appendix:experimental_details}\rev{, additional experiments in \Cref{appendix:add_exp_subsub_sing}}.

\section{Conclusion}\label{sec:conclusion}

In this work, we investigated two underexplored pathologies in the training of ReLU neural networks defined over directed acyclic graphs (DAGs): the (dis)connectedness of the invariant parameter space and the emergence of singularities within it. By leveraging the symmetry properties of homogeneous activations and analyzing the associated conservation laws under gradient flow, we provided a complete characterization of the invariant set as an algebraic variety constrained by quadratic balance conditions.

Our topological analysis revealed that disconnections in the optimization space are dictated by the presence of bottleneck neurons and an imbalance in flow conditions. 
We further demonstrated that singularities correspond to effective subnetworks, and although gradient flow trajectories cannot reach them in finite time, their role can be leveraged to improve structured pruning. 
We introduced a nuclear norm regularizer that promotes convergence toward such configurations. Surprisingly, we observed that L1 regularization can induce comparable effects, hinting at a deeper connection between sparsity and singularity-driven pruning.


\paragraph{Limitations.} The limitations of our work mainly lie in the fact that the theoretical analysis is fully based on the assumption of using neural networks with homogeneous activation functions trained with gradient flow on an unregularized loss.
Other training algorithms (like Adam) and regularizers are not subject to the same conservation laws described here.
\revtwo{Likewise, in discrete settings the conservation laws only hold approximately with bigger stepsizes, incurring in bigger violations of the laws.}


\subsubsection*{Acknowledgments}
M.N. acknowledges the project PNRR-NGEU, which has received funding from the MUR – DM 352/2022.
G.P. acknowledges partial support by ERC Consol idator Grant RUNES (Grant no. 101171380) and the MSCA Doctoral Network BeyondTheEdge(Grant no. 101120085).
This study was carried out within the FAIR - Future Artificial Intelligence Research and received funding from the European Union Next-GenerationEU (PIANO NAZIONALE DI RIPRESA E RESILIENZA (PNRR) – MISSIONE 4 COMPONENTE 2, INVESTIMENTO 1.3 – D.D. 1555 11/10/2022, PE00000013).
This manuscript reflects only the authors’ views and opinions; neither the European Union nor the European Commission can be considered responsible for them.

\bibliography{iclr2026_conference}
\bibliographystyle{iclr2026_conference}

\clearpage
\newpage
\makeatletter

\setcounter{secnumdepth}{3} 
\setcounter{table}{0}
\setcounter{equation}{0}
\setcounter{page}{1}
\setcounter{section}{0}

\begin{bibunit}[plainnat]
\appendix

\section{Appendix / supplemental material}

\subsection{LLM Usage}
LLMs (ChatGPT) were used to aid in polishing the paper after writing.

\subsection{Geometric derivation of the network flow equation}\label{appendix:orthogonality}
Let $\theta \in \R^{|E|}$ be a parameter configuration. 
Let us focus on a single hidden neuron $v$, to which we associate the neuron-wise rescaling action $T_{\alpha}^v(\theta)$ that, for any $\alpha >0$ acts as $T_{\alpha}^v(\theta)_{(i,v)} = \alpha \theta_{(i,v)}$, $T_{\alpha}^v(\theta)_{(v,j)} = \frac{1}{\alpha} \theta_{(v,j)}$ on the parameters associated to the edges coming in and out of $v$, respectively, and leaves the other elements of $\theta$ unchanged.
$T^v\theta = \sset{T^v_{\alpha}(\theta): \alpha \in \R_+}$ is the orbit of $\theta$ under rescaling of the neuron $v$.
It follows from the fact that we are dealing with ReLU networks that the loss function $L$ is constant over $T^v\theta$, $L(T_{\alpha}^v\theta) = L(\theta)\ \forall \alpha >0$.

If $\theta$ is not zero on all the edges coming in and out of $v$, we have that the action $T^v$ is \emph{free}, meaning that $T^v_{\alpha}\theta = \theta \iff\alpha = 1$, implying that the orbit $T^v\theta$ is diffeomorphic to $\R_+$. 
Therefore, the orbit $T^v\theta$ is a smooth manifold admitting a tangent space at $\theta$. 

The fact that $L$ is constant over $T^v\theta$ means that $T^v\theta$ is contained into the level set of $L$ at $\theta$. 
The gradient of a function at a point is always orthogonal to the level set it is contained in, meaning that, in particular, the gradient $g(\theta)$ will be orthogonal to the tangent space of $T^v\theta$ at $\theta$.

To derive a vector generating this 1-dimensional tangent space, it is enough to consider the equation describing $T^v\theta$, differentiate w.r.t. $\alpha$ and evaluate at $\alpha = 1$.

\begin{align}
\left(\dfrac{d}{d\alpha}\bigg|_{\alpha=1} T^v_\alpha(\theta)\right)_{(i,v)} &= \dfrac{d}{d\alpha}\bigg|_{\alpha=1}\alpha  \theta_{(i,v)} = \theta_{(i,v)}\\
\left(\dfrac{d}{d\alpha}\bigg|_{\alpha=1} T^v_\alpha(\theta)\right)_{(v,j)} &= \dfrac{d}{d\alpha}\bigg|_{\alpha=1}\frac{1}{\alpha}  \theta_{(i,v)} = -\theta_{(i,v)}\\
\left(\dfrac{d}{d\alpha}\bigg|_{\alpha=1} T^v_\alpha(\theta)\right)_{(i,j)} &= 0 \text{ if } v\notin \sset{i,j}.\\
\end{align}

Orthogonality of the gradient w.r.t. this vector can be written as
\begin{align}
\left(\dfrac{d}{d\alpha}\bigg|_{\alpha=1} T^v_\alpha(\theta)\right)^\top g(\theta) &=  0\\
\sum_{i:i\to v} \theta_{(i,v)}g(\theta)_{(i,v)} - \sum_{j:v \to j}\theta_{(v,j)} g(\theta)_{(v,j)} + \underbrace{\sum_{(i,j)\in E: v\notin \sset{i,j}} 0 \cdot g(\theta)_{(i,j)}}_{=0} &= 0 \\
\inner{\theta}{g(\theta)}_v &= 0.
\end{align}

\subsection{Proof of Proposition~\ref{prop:non-empty}}\label{proof:non-empty}
We start from the following result from \citet{de2023topology}.
\begin{proposition}[\citep{de2023topology} Proposition 4.1]\label{prop:non-empty_general}
The following are equivalent: 
\begin{enumerate}
    \item $\mathcal{H}(c)$ is non-empty.
    \item $c$ lies in the convex cone generated by the columns of $\Tilde{B}$, $c\in \cone(\Tilde{B})$.
\end{enumerate}
\end{proposition}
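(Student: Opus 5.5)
The proof is a direct change of variables: the map $\theta \mapsto \theta^2$ (element-wise square) from $\Theta \cong \R^{|E|}$ to $\R^{|E|}$ has image exactly the nonnegative orthant $\R^{|E|}_{\geq 0}$. Indeed, every $\theta^2$ has nonnegative entries. Conversely, any $x \in \R^{|E|}_{\geq 0}$ equals $\theta^2$ for $\theta = \sqrt{x}$, taken entrywise (any choice of signs works as well). So the defining system $\tilde{B}\theta^2 = c$ of \Cref{def:invariant_set} becomes the linear system $\tilde{B}x = c$ with the constraint $x \geq 0$.

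The plan is therefore to prove the two implications separately.

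For $(1)\Rightarrow(2)$: take $\theta \in \mathcal{H}(c)$ and set $x = \theta^2 \geq 0$. Then
\[
c = \tilde{B}x = \sum_{e\in E} x_e\, \tilde{B}_{:,e},
\]
which is a nonnegative combination of the columns of $\tilde{B}$. Hence $c \in \cone(\tilde{B})$.

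For $(2)\Rightarrow(1)$: by definition of the convex cone generated by finitely many vectors, $c \in \cone(\tilde{B})$ means there exist coefficients $x_e \geq 0$ with $c = \sum_e x_e \tilde{B}_{:,e} = \tilde{B}x$. Set $\theta_e := \sqrt{x_e}$. Then $\theta^2 = x$, so $\tilde{B}\theta^2 = c$ and $\theta \in \mathcal{H}(c)$.

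There is no genuine obstacle here. The only point needing care is the convention for $\cone(\tilde{B})$: it should mean the set of nonnegative linear combinations of the columns, which includes $0$ and is automatically closed since it is finitely generated. With that convention, the correspondence $\theta \leftrightarrow \theta^2$ between $\Theta$ and the nonnegative orthant makes both directions immediate.
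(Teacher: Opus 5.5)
Your proof is correct: the entrywise square is a surjection of $\R^{|E|}$ onto the nonnegative orthant, so $\tilde{B}\theta^2 = c$ is solvable exactly when $\tilde{B}x = c$ has a solution $x \geq 0$, which is the definition of $c \in \cone(\tilde{B})$. The paper gives no proof of its own here (it imports the result from \citet{de2023topology}), but it uses the same $\theta = \sqrt{x}$ correspondence elsewhere, for instance when it constructs nonnegative solutions $x$ of $\tilde{B}x = c$ and takes $\sqrt{x}$ as a point of $\mathcal{H}_G(c)$.
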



The non-emptiness of the invariant set is then completely described by the following proposition.
\begin{proposition}
$\cone(\Tilde{B}) = \R^{|\Tilde{V}|}$
\end{proposition}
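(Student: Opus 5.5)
To prove $\cone(\Tilde{B}) = \R^{|\Tilde{V}|}$ I will show that every signed standard basis vector $\pm e_v$, $v\in\Tilde{V}$, lies in $\cone(\Tilde{B})$. Since the cone is closed under nonnegative combinations and any $c$ can be written as $c=\sum_v c_v^+ e_v + \sum_v c_v^-(-e_v)$ with $c_v^\pm\geq 0$, this gives the claim. The columns of $\Tilde{B}$ are the restrictions to hidden rows of the edge vectors $e_j - e_i$ for $(i,j)\in E$. Thus a column is $e_j-e_i$ if both endpoints are hidden, $e_j$ if $i\in V_I$, $-e_i$ if $j\in V_O$, and $0$ if both endpoints are in $\partial V$.

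The key step is a telescoping path argument. Fix $v\in\Tilde{V}$. Assume, as the paper does implicitly (``no neurons are excluded from the computations''), that every hidden neuron lies on some path from $V_I$ to $V_O$. Equivalently, every hidden node has an ancestor in $V_I$ and a descendant in $V_O$; in a finite DAG this follows once every hidden node has at least one in-edge and one out-edge, by walking backwards or forwards until the walk stops, which can only happen at an input or output node.

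Take a directed path $u_0\to u_1\to\cdots\to u_k=v$ with $u_0\in V_I$ and $u_1,\dots,u_k$ hidden. Summing the corresponding columns of $\Tilde{B}$ telescopes. The first edge contributes $e_{u_1}$ and each subsequent edge $(u_{m-1},u_m)$ contributes $e_{u_m}-e_{u_{m-1}}$, so the total is $e_v$. Hence $e_v\in\cone(\Tilde{B})$. Symmetrically, take a path $v=w_0\to w_1\to\cdots\to w_l$ with $w_l\in V_O$ and $w_0,\dots,w_{l-1}$ hidden. The sum of its columns telescopes to $-e_v$, so $-e_v\in\cone(\Tilde{B})$.

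Combining the two directions gives $\cone(\Tilde{B})\supseteq\R^{|\Tilde{V}|}$, and the reverse inclusion is trivial. Then \Cref{prop:non-empty_general} yields \Cref{prop:non-empty}. The only delicate point is the path-existence hypothesis. Without it the statement fails: a hidden node with no in-edges has a row of $\Tilde{B}$ with no positive entries, so $e_v$ is not in the cone. So the main care is stating this standing assumption explicitly and checking the walk-termination argument, which relies on acyclicity and finiteness so that the walk cannot revisit a node.
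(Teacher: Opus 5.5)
Your proof is correct and is essentially the paper's argument: both rest on the fact that the indicator of a path from $V_I$ to $v$ (resp.\ from $v$ to $V_O$) is mapped by $\tilde{B}$ to $e_v$ (resp.\ $-e_v$). The paper merely phrases this as correcting, node by node, the balance of the all-ones vector $\ones$, which incidentally yields a strictly positive solution, and your explicit statement of the standing assumption that every hidden neuron lies on an input--output path is apt, since the paper uses it only implicitly (``which exists by definition of $G$'').
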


\begin{proof}

\revtwo{To show that, for any $c \in \mathbb{R}^{|\tilde{V}|}$, the system $\tilde{B}x = c$ admits a non-negative solution $x$, we take a constructive approach and build $x$ explicitly.}

Let us initialize $x_0 = \ones$ by assigning the value 1 to every edge $(x_0)_{e} = 1\ \forall e \in E$.
Let us define the balance vector at initialization, $c_0 = \tilde{B}\ones$ which, in general, will be different from $c$, $c_0\neq c$.
For each hidden neuron $v\in\tilde{V}$, we make an adjustment to $x_0$ that corrects the balance value at $v$.

If $(c_0)_v < c_v$, define $\delta_v = c_v- (c_0)_v$. 
Let us consider a path $p_v$ going from the input neurons $V_I$ to $v$ (which exists by definition of $G$) and the vector $\ones_{p_v} \in \mathbb{R}^{|E|}$ supported on it, that is such that it assigns value 1 to each edge in $p_v$ and 0 to all other edges.
Consider now $x' = x_0 + \delta_v \ones_{p_v}$, which is non-negative because $\delta_v \geq 0$.
The balance $c'_w := (\tilde{B}x')_w$ will remain unchanged in all nodes $w$ different from $v$, $c'_w = (c_0)_w$ because, in such nodes, the same quantity is added to one edge coming in and one edge going out.
In $v$, the new balance value will be 
$$
c'_v =(\tilde{B}x')_v =(\tilde{B}x_0)_v + (\tilde{B}\ones_{p_v})_v = (c_0)_v + \delta_v = (c_0)_v + c_v - (c_0)_v =c_v.
$$

If instead $(c_0)_v > c_v$, we define $\delta_v = (c_0)_v - c_v$ and add the vector $\ones_{p_v}$ supported on a path $p_v$ that connects $v$ to the output neurons $V_O$.
Once again, in all nodes except for $v$ the balance is left unchanged, while in $v$
$$
c'_v =(\tilde{B}x')_v =(\tilde{B}x_0)_v + (\tilde{B}\ones_{p_v})_v = (c_0)_v - \delta_v = c'_v - ((c_0)_v - c_v) =c_v.
$$

Repeating this process for any node $v \in \tilde{V}$ allows us to correct the balance $c'_v = c_v$ at all nodes, giving us a non-negative solution $x'$ to the system $\tilde{B}x' = c$.
\end{proof}


\subsection{Proof of Theorem~\ref{theo:connectedness}}\label{proof:connectedness}

The problems associated with network flow have been extensively studied in the literature \citep{ford1962}.
The idea of the proof is to map the statement to a network flow feasibility problem and then apply well-known results in the field.

We first report here the more general version of the result from \citet{de2023topology} that we reported in \Cref{prop:connected_general_old}.
\begin{proposition}[\citep{de2023topology}  Proposition 4.7]\label{prop:connected_general}
The following are equivalent: 
\begin{enumerate}
    \item $\mathcal{H}_G(c)$ is connected.
    \item $c\in \cone(\Tilde{B}_{-i})$ for all $i$, where $\Tilde{B}_{-i}$ denotes the matrix $\Tilde{B}$ with the $i$-th column removed and $\cone(A)$ is the convex cone generated by the columns of $A$.
\end{enumerate}
\end{proposition}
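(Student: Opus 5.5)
The plan is to reduce everything to the convex polyhedron of squared parameters. Set $P(c) := \sset{x\in\R^{|E|} : x\geq 0,\ \tilde{B}x = c}$ and consider the map $q:\mathcal{H}_G(c)\to P(c)$, $q(\theta)=\theta^2$ (entrywise). This map is continuous and surjective, since any $x\in P(c)$ has preimage $\sqrt{x}$. Its fibres are exactly the orbits of the sign group $\Gamma=\sset{\pm1}^{E}$, acting by $\theta\mapsto\epsilon\odot\theta$. Two facts make this work. First, $P(c)$ is convex, hence connected. Second, the section $s(x)=\sqrt{x}$ is continuous, so $S:=s(P(c))$ is a connected subset of $\mathcal{H}_G(c)$. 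We also have $\mathcal{H}_G(c)=\bigcup_{\epsilon\in\Gamma}\epsilon\odot S$. Finally, $c\in\cone(\tilde{B}_{-e})$ holds exactly when some $x\in P(c)$ has $x_e=0$.

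\emph{Sufficiency.} Assume $c\in\cone(\tilde{B}_{-e})$ for every $e$. Let $\epsilon^{(e)}$ denote the sign flip of coordinate $e$ only. For each $e$, pick $x\in P(c)$ with $x_e=0$. Then $s(x)=\epsilon^{(e)}\odot s(x)$, so $S\cap \epsilon^{(e)}\odot S\neq\emptyset$. Applying an arbitrary $\epsilon\in\Gamma$, which acts by homeomorphisms, gives $\epsilon\odot S\cap \epsilon\epsilon^{(e)}\odot S\neq\emptyset$.

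Now form the graph whose vertices are the elements of $\Gamma$, with an edge between $\epsilon$ and $\epsilon\epsilon^{(e)}$. This is the Cayley graph of $\Gamma$ with respect to the generators $\epsilon^{(e)}$, so it is connected. Consequently the union of the connected sets $\epsilon\odot S$ is connected: by induction along paths in the Cayley graph, each translate is chained to $S$ through pairwise intersecting connected sets. This union is all of $\mathcal{H}_G(c)$.

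\emph{Necessity.} Suppose that for some $e$ we have $c\notin\cone(\tilde{B}_{-e})$. Then every $x\in P(c)$ has $x_e>0$, so $\theta_e\neq0$ on all of $\mathcal{H}_G(c)$. The function $\theta\mapsto\operatorname{sign}(\theta_e)$ is therefore continuous and locally constant on $\mathcal{H}_G(c)$.

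Suppose further that $\mathcal{H}_G(c)\neq\emptyset$. Then it contains some $\theta$, and by the sign symmetry it also contains $\epsilon^{(e)}\odot\theta$. These two points have opposite signs in coordinate $e$, so the sign function takes both values and $\mathcal{H}_G(c)$ is disconnected.

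\emph{Main obstacle.} The delicate point is the degenerate case $\mathcal{H}_G(c)=\emptyset$. There $c\notin\cone(\tilde{B})$, hence $c\notin\cone(\tilde{B}_{-e})$ for every $e$, while the empty set is conventionally connected. The equivalence must therefore be read for nonempty $\mathcal{H}_G(c)$, or with the convention that ``connected'' includes non-emptiness. In our setting this does not matter, because \Cref{prop:non-empty} guarantees $\mathcal{H}_G(c)\neq\emptyset$ for every $c$.

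A minor technical check is the continuity of $s$ and of the induced maps. This holds because the square root is continuous on $[0,\infty)$ and $\mathcal{H}_G(c)$ is closed.
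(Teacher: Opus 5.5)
Your argument is correct. It also does more than the paper does here: the paper gives no proof of this statement. It imports the result from the cited work on coaxial quadrics and uses it only inside \Cref{lem:flow_connected}, after non-emptiness has been secured by \Cref{prop:non-empty}.

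You instead prove it directly. Your picture is $\mathcal{H}_G(c)$ as the union of the $2^{|E|}$ sign-translates of $S=s(P(c))$, glued along the faces $\{x_e=0\}$ of the polyhedron $P(c)=\{x\geq 0:\ \tilde{B}x=c\}$. Sufficiency is the chaining of the connected sets $\epsilon\odot S$ along the connected hypercube Cayley graph of $\{\pm1\}^{E}$. Necessity is the observation that, when no point of $P(c)$ has $x_e=0$, the map $\theta\mapsto\operatorname{sign}(\theta_e)$ is a non-constant locally constant function.

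Your route buys self-containment and two clarifications that the citation hides:
\begin{itemize}
\item It makes explicit that the bare equivalence requires $\mathcal{H}_G(c)\neq\emptyset$; otherwise condition 2 fails while condition 1 holds vacuously. The paper satisfies this hypothesis only implicitly, through \Cref{prop:non-empty}.
\item Your necessity half is precisely the mechanism the paper states informally after \Cref{theo:connectedness}: disconnection means some weight can never change sign.
\end{itemize}
The citation buys brevity and places the fact within a general theory of coaxial quadric intersections.

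A minor point: closedness of $\mathcal{H}_G(c)$ plays no role in your argument. You only use continuity of the entrywise square root on $[0,\infty)^{E}$ and of the coordinate map $\theta\mapsto\theta_e$.
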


\begin{figure}
    \centering
    \includegraphics[width=0.8\linewidth]{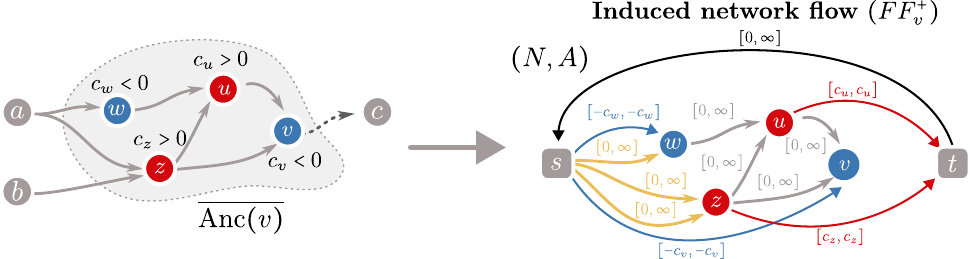}
    \caption{Visualization of the induced network flow problem $(FF_v^+)$ at a bottleneck node $v\in V_B^+$. In the right panel, we depict the internal arcs in gray, the incoming arcs in orange, the source arcs in blue, the sink arcs in red and the circulation arc in black.}
    \label{fig:proof_network_flow}
\end{figure}

\subsubsection{Induced network flow problem}
Let us consider a general feed-forward network as in \Cref{sec:setup} and an out-bottleneck node $v\in V_B^+$.
We will prove that connectedness is equivalent to the existence of solutions to a network flow problem associated with each bottleneck node.
We now build a flow feasibility problem on a subgraph of $G$ in \virg{circulation form}, using the notation of \citet{fathabadi2007new}.

Let $v\in V_B^+$, we construct a directed multi-graph $G_v=(N_v,A_v)$ in the following way.

The node set is $N_v=\overline{\mathrm{Anc}(v)}\sqcup\left \{ s, t\right \}$, where $s$ and $t$ are two extra nodes not from $G$. 
The set of arcs $A_v$ contains the following sets of edges:
\begin{itemize}
    \item the \textit{internal arcs} $A_o=\left\{ e=(v_1, v_2) \in E \mid v_1, v_2 \in \overline{\mathrm{Anc}(v)}\right\}$;
    \item the \textit{incoming arcs} $A_i = \left\{ (s, v_2) \mid \exists (v_1,v_2)\in E \text{ with } v_1 \notin \overline{\mathrm{Anc}(v)} , v_2 \in \overline{\mathrm{Anc}(v)}\right\}$;
    \item the \textit{source arcs} $A_s = \sset{(s,w) \mid w\in\overline{\mathrm{Anc}(v)}, c_w<0}$;
    \item the \textit{sink arcs} $A_t = \sset{(w,t) \mid w\in\overline{\mathrm{Anc}(v)}, c_w>0}$;
    \item the \textit{circulation arc} $(t,s)$, to adopt the problem's \virg{circulation form} \citep{fathabadi2007new}.
\end{itemize}
Therefore $A_v = A_o \sqcup A_i\ \sqcup A_s \sqcup A_t \sqcup \sset{(t,s)}$.

In the general network flow problem, each arc $e=(u,w)$ is assigned a lower bound $l_e=l_{uw}$ and a possibly infinite upper bound $m_e=m_{uw}$ on the possible values of the flow on them. 
In our case, we fix lower and upper bounds as follows.
The flow on the internal and incoming arcs and the $(t,s)$ arc are required to be non-negative:  $l_e = 0 \text{ and }m_e = \infty \text{ for }e\in A_o\cup\sset{(t,s)}$.
The source arcs are constrained to carry a flow equal to the negative of the $c$ value of their endpoints: $l_e = m_e= -c_w\ \forall e=(s,w)\in A_s$.
The sink arcs are constrained by the $c$ value of their starting point: $l_e = m_e = c_w\ \forall e=(w,t)\in A_t$.

The network $(N_v, A_v)$ is said \textit{feasible} if there exists a real function on the edges, called \textit{flow} and denoted $f:A\to\Reals$ such that 
\begin{enumerate}
    \item at each node the flow is conserved $\sum_{u:(u,w)\in A_v} f_{(u,w)} - \sum_{z:(w,z)\in A} f_{(w,z)} = 0 \ \forall w \in N$;
    \item for each edge the bounds are respected $l_e\leq f_e\leq m_e\ \forall e\in A_v$.
\end{enumerate}
Remembering that all this construction is built around an out-bottleneck $v$, we call this problem $(FF_v^+)$ for ``flow feasibility at $v \in V_B^+$'' and we visualize the construction in \Cref{fig:proof_network_flow}.
The analogous problem for an in-bottleneck $w \in V_B^-$ is converted to the same formulation by reversing arrows in the DAG and changing the sign of the $c$ vector, we call it $(FF_w^-)$.

\subsubsection{Translation of the characterization by network flow problems}
Having defined the induced problems $(FF_v^+)$ and $(FF_w^-)$, we have a first equivalence lemma:
\begin{lemma}\label{lem:flow_connected}
    \begin{equation}\label{eq:eq_flow}
        \mathcal H_G(c) \text{ connected } \iff 
            \forall v\in V_B^+, (FF_v^+)\text{ has a solution and}, \forall w\in V_B^-,(FF_w^-) \text{ has a solution.}
    \end{equation}
\end{lemma}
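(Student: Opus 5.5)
We will prove the lemma by combining \Cref{prop:connected_general} with \Cref{prop:non-empty}, i.e.\ $\cone(\tilde{B})=\R^{|\tilde{V}|}$ for every DAG of the type considered. By \Cref{prop:connected_general}, $\mathcal{H}_G(c)$ is connected if and only if, for every edge $e\in E$, the system $\tilde{B}_{-e}x=c$ has a solution $x\geq 0$. Here $\tilde{B}_{-e}$ is the reduced incidence matrix of $G_{-e}$. The plan is to sort the edges according to whether their removal produces a hidden neuron without inputs or without outputs.

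\textbf{Step 1: harmless edges.} Suppose $e=(a,b)$ is such that in $G_{-e}$ every hidden neuron still has an in-edge and an out-edge, i.e.\ $a\notin V_B^+$ and $b\notin V_B^-$. Then every hidden node still lies on a path from $V_I$ to $V_O$. We check this by induction along a topological order: a node with an in-edge inherits a path from $V_I$ from that in-neighbor, and dually for outputs. The constructive proof of \Cref{prop:non-empty} therefore applies verbatim to $G_{-e}$, and $c\in\cone(\tilde{B}_{-e})$ with no condition on $c$. Hence only the edges whose tail is an out-bottleneck or whose head is an in-bottleneck matter. When $e$ is both the single out-edge of $v\in V_B^+$ and the single in-edge of $w\in V_B^-$, we impose both resulting conditions.

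\textbf{Step 2: edge leaving an out-bottleneck $v$.} Write $e=(v,j)$. In $G_{-e}$, the nodes of $\overline{\mathrm{Anc}}(v)$ no longer reach $V_O$. We aim to show that $\tilde{B}_{-e}x=c$, $x\ge0$, is solvable if and only if $(FF_v^+)$ is feasible. The solution decomposes into two parts.

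\emph{Inside $\overline{\mathrm{Anc}}(v)$.} Every edge leaving $\overline{\mathrm{Anc}}(v)$ in $G$ is $e$: an out-edge to a node outside would give a path to $V_O$ avoiding $v$. So the restriction of $x$ to internal edges, together with the edges entering from outside, must satisfy the balances $c_w$ for $w\in\overline{\mathrm{Anc}}(v)$. Incoming edges carry arbitrary nonnegative flow from outside, which is exactly the role of the source $s$ and the incoming arcs. The fixed-capacity source and sink arcs encode supply $-c_w$ and demand $c_w$, and the circulation arc $(t,s)$ closes the balance. Under this dictionary, nonnegative solutions restricted to $\overline{\mathrm{Anc}}(v)$ correspond exactly to feasible circulations: node conservation at $w$ in $G_v$ is equation~\eqref{eq:hyperbola} with $\theta^2=x$, and conservation at $s$ and $t$ holds automatically.

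\emph{Outside $\overline{\mathrm{Anc}}(v)$.} We must show that any prescribed nonnegative outflow into $\overline{\mathrm{Anc}}(v)$ can be completed on the complement. Every node outside $\overline{\mathrm{Anc}}(v)\cup\{$nodes after removal$\}$ still has paths from $V_I$ and to $V_O$ avoiding $e$. The path-augmentation argument of \Cref{prop:non-empty} therefore corrects all balances, and the extra demand created at the tails of incoming arcs is fixed by adding input-to-tail paths.

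\textbf{Step 3: in-bottlenecks and assembly.} The in-bottleneck case follows by reversing all edges and replacing $c$ with $-c$, which gives $(FF_w^-)$. Combining Steps 1--3 with \Cref{prop:connected_general} yields \eqref{eq:eq_flow}.

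The main obstacle is the gluing in Step 2. We must verify carefully that the complement of $\overline{\mathrm{Anc}}(v)$ in $G_{-e}$ has no new dead nodes and that the path constructions there never need to traverse $e$. If a node outside $\overline{\mathrm{Anc}}(v)$ reached $V_O$ only through $v$, it would by definition be a pure ancestor of $v$, which is a contradiction; this is the fact that should make the gluing work. Degenerate cases also need checking: when $v$ is simultaneously an in-bottleneck, and when $j$ loses its only input, so that $e$ triggers both $(FF_v^+)$ and $(FF_j^-)$ and the two problems must be shown to be jointly equivalent to solvability for $G_{-e}$.
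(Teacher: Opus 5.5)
Your proposal is correct and follows the paper's proof. Both use the cone criterion for connectedness and dispatch edges not touching a bottleneck via the non-emptiness construction on $G_{-e}$. Both restrict a nonnegative solution of $\tilde{B}_{-e}x=c$ to $\overline{\mathrm{Anc}}(v)$ to obtain a feasible circulation for $(FF_v^+)$, and glue in the reverse direction via non-emptiness on the complement plus path augmentation at the tails of incoming edges; you only organize this per edge rather than by the two implications. The double-bottleneck edge $e=(v,j)$ you flag closes as you expect, and more carefully than the paper's Case~3: by acyclicity, $\overline{\mathrm{Anc}}(v)\cap\overline{\mathrm{Desc}}(j)=\emptyset$ and no edge runs from $\overline{\mathrm{Desc}}(j)$ to $\overline{\mathrm{Anc}}(v)$, so $(FF_v^+)$ and $(FF_j^-)$ use disjoint edges and the same gluing on the remaining nodes applies.
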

\begin{proof} 
$(\implies)$ 
We start with the forward implication, assuming that the invariant set is connected and building a solution for every flow feasibility problem associated to bottleneck neurons. 
    
Assume that $\mathcal{H}_G(c)$ is connected.
Then, by the characterization of connectedness in \Cref{prop:connected_general} we know that $c \in  \cone(\tilde{B}_{-i})\ \forall i $.
This tells us that removing any column $i$ from $\tilde{B}$, there still exists an $x\in \mathbb R_+^{\mid E\mid -1}$, such that $\tilde{B}_{-i}x=c$.\newline
We let $v\in V_B^+$ and ask whether $(FF_v^+)$ has a solution.
Recall that there is a one-to-one correspondence between columns of $\tilde{B}$ and edges of the DAG.

We choose the column $i$ to remove to be the one corresponding to the unique out-edge $e^*$ of $v$ and get a solution vector $x^*\in \mathbb R_+^{\mid E\mid -1}$ to $\tilde{B}_{-i} x^* = c$.

$x^*$ can be seen as a function: $x^*:E\setminus \left \{ e^*\right \}  \to \mathbb R_+$.
Using $x^*$, we explicitly build a solution T
the problem $(FF_v^+)$ with node and arc sets $(N, A)$ as described above.
\begin{equation}
    \begin{aligned}
        f\colon A&\to\mathbb R_+\\
        e &\mapsto \begin{cases}
                    x^*(e) &\text{ if } e \in A_o \sqcup A_i \\
                    -c_u &\text{ if } e=(s,u) \in A_s \\  
                    c_u &\text{ if } e=(u,t)\in A_t \\
                    \sum_{e\in A_t} f_e(e)&\text{ if } e = (t,s) \\
                    \end{cases} 
    \end{aligned}
\end{equation}
Note that we used the fact that incoming edges in $G_v=(N_v,A_v)$ can be mapped one-to-one with edges from $V\setminus \overline{\mathrm{Anc}(v)}$ to $\overline{\mathrm{Anc}(v)}$.
To check that this flow is feasible, we have to check the two conditions of conservation and boundedness.
For $u\in N\setminus \left \{ s, t\right \}$, the conservation of flow is assured by the fact that $\sqrt{x^*}$ is a member of $\mathcal H_G(c)$ and therefore respects the balance conditions with balance $c_u$.
This value $c_u$, depending on its sign, is accounted for in $G_v$ by the arcs going to $s$ or $t$. 
For $t$, conservation is assured by the definition of $f$ and for $s$ it is assured by summing conservation equations for all nodes in $N\setminus \left \{ s, t\right \}$.

Checking the boundedness condition is immediate for $e\in A_o\sqcup A_i$, since $x^*$ has only non-negative values. 
For source and sink arcs, by definition of $f$ they are set with the only possible value, and $f_{(t,s)}$ is non-negative as a sum of non-negative terms.\newline

The same reasoning can be applied for any in-bottleneck $v\in V_B^-$.

\vspace{5mm}

$(\impliedby)$
We now prove the reverse implication and assume $\forall v\in V_B^+,(FF_v^+)$ has a solution and analogously for $(FF_v^-)$.
By \Cref{prop:non-empty}, we know that $\mathcal{H}_G(c)$ is non-empty.
To show that it is connected, we need to show that $c\in \cone(\tilde{B}_{-i})\ \forall i$.

Stated differently, we need to show that when removing an edge, we can still find a solution $\theta^* \in \mathbb R^{\mid E\mid -1}$ that satisfies every node's balance condition in the DAG.

Let us pick an edge $e^*=(v,v')\in E$, remove it, and distinguish 3 cases.

\textbf{1.} If $v\notin V_B^+$ and $v'\notin V_B^-$, the new DAG still has the property that each hidden node is contained in one path from input $V_I$ to output $V_O$, so we can apply the non-emptiness property on this new DAG and get a solution.

\textbf{2.} If $v\in V_B^+$ or $v'\in V_B^-$ but not both,
we focus on $v\in V_B^+$.
We will construct a function $x:E\setminus \left \{ e^*\right \} \to \mathbb R_+$, such that $\theta^*=\sqrt{x}$ respects all balance conditions in the new DAG $G_{-e^*}$.

We start by splitting the edge set in 3 parts: $E\setminus \left \{ e^*\right \}=E_o\sqcup E_f\sqcup E_{o\to f}$, where the edge subsets are defined as follows.
For a generic edge $e=(u,w) \in E\setminus \left \{ e^*\right \}$, we let $e\in E_o$ if $u,w \notin \overline{\mathrm{Anc}(v)}$ i.e. $e$ is an edge whose nodes are not involved in the network flow problem $(FF_v^+)$.
We let $e\in E_f$ if $u,w \in \overline{\mathrm{Anc}(v)}$, that is if both nodes are involved in $(FF_v^+)$.
Lastly, $e\in E_{o\to f}$ if $u\notin \overline{\mathrm{Anc}(v)} \text{ and } w\in \overline{\mathrm{Anc}(v)}$ i.e. if the edge is a hybrid edge connecting a node not involved in $(FF_v^+)$ to a node involved in it.
Notice that the case $u\in \overline{\mathrm{Anc}(v)} \text{ and } w\notin \overline{\mathrm{Anc}(v)}$ does not exist by definition of the pure ancestor set.

The next step consists in crafting $3$ functions on the edges and then gluing them together to obtain a solution.
By non-emptiness, again, we know that if $E_o\neq \emptyset $, we can find a solution $\theta_o^*$ on the DAG restricted to $V\setminus \overline{\mathrm{Anc}(v)}$ i.e. an independent solution for the DAG where we have removed the bottleneck and its pure ancestors.
We define:
\begin{equation}
    x_o(e) = 
    \begin{cases}
        0 &\text{ if } e\notin E_e \\
        (\theta_o^*)^2(e) &\text{ if } e\in E_e
    \end{cases}
\end{equation}
Then, by denoting with $f^*$ a solution of $(FF_v^+)$, we define:
\begin{equation}
    x_f(e) = 
    \begin{cases}
        0 &\text{ if } e\notin E_f \\
        f^*(e) &\text{ if } e\in E_f
    \end{cases}
\end{equation}
Finally, for the hybrid edges in $E_{o\to f}$, we leverage the fact that they can be mapped one-to-one to the \textit{incoming} edges in $(FF_v^+)$.
Just like we did for $E_f$, we thus assign to each one of them the value of the solution of the network flow problem.
Let us denote $x_{o\to f}$ the function which does this assignment for edges in $E_{o\to f}$ and is zero elsewhere.

At this point, the function $\theta \colon e\mapsto \sqrt{x_o(e)+x_f(e)+x_{o\to f}(e)}$ respects balance conditions for nodes in $\overline{\mathrm{Anc}}(v)$, this is assured by the $x_f + x_{o\to f}$ part being a solution to $(FF_v^+)$. 
It also respects the balance for nodes which are not pure ancestors and are not connected to pure ancestors, by the definition of $x_o(e)$.

For the other nodes $u \in \tilde{V}\setminus \overline{\mathrm{Anc}}(v)$ such that $\exists (u,w)\in E$ with $w\in \overline{\mathrm{Anc}}(v)$, the balance might not be immediately respected.
Indeed, the values assigned to the edges in $E_{o\to f}$ will disrupt the balance given by
the $x_o$ part of the function.

Luckily, each of these disruptions may be resolved locally, without influencing the balance of the other nodes.
Given any $u\notin\overline{\mathrm{Anc}}(v)$, we compute its balance $c'_u=\sum_{v:(v,u)\in E\setminus \sset{e^*}} \theta^2_{(v,u)} - \sum_{w:(u,w)\in E\setminus \sset{e^*}} \theta^2_{(u,w)}$.

If $c'_u>c_u$, we pick any path $p$ in $G_{-e^*}$ from $u$ to the output nodes $p=(p_1,\dots,p_n), p_1=u,p_n\in V_O$.
Let $\ones_p$ be the indicator function which assigns 1 to the edges in $p$ and 0 to the other edges.
Note that this path exists because $u$ is not a pure ancestor of $v$ and thus removing $e^*$ does not disconnect $u$ from the output nodes.
If we define $\theta'= \sqrt{\theta^2 +(c'_u-c_u)\ones_p}$, we see that the balance of $\theta'$ at $u$ is
\[
\sum_{v:(v,u)\in E\setminus \sset{e^*}} \theta'^2_{(v,u)} - \sum_{w:(u,w)\in E\setminus \sset{e^*}} \theta'^2_{(u,w)} = c'_u - c'_u+c_u = c_u,
\]
while it is left unchanged at any other node in the path because the quantity added to its inputs is the same as the one added to the outputs.

If $c'_u<c_u$ instead, we can pick any path $p$ in $G_{-e^*}$ from input nodes to $u$, $p=(p_1,\dots,p_n), p_1\in V_I,p_n=u$ and define $\theta'= \sqrt{\theta^2 +(c_u-c'_u)\ones_p}$ to fix the balance at $u$.

Repeating this process for every hidden node, we are able to find a function $\theta^*$ on the edges of $G_{-e^*}$ which satisfies the balance equation at every node, meaning that $\tilde{B}_{-e^*}(\theta^*)^2 = c$.

\textbf{3. } If $v\in V_B^-\cap V_B^+$, we have that $\overline{\mathrm{Anc}}(v)\cap \overline{\mathrm{Desc}}(v)=\left \{ v \right \}$ and so no edge is shared between the two sets of nodes and we can deal with $v$ being an in- and out-bottleneck independently.
\end{proof}

Now that we have a characterization of connectedness in terms of flow feasibility we move on to study this feasibility with the positive cut method.
To avoid cluttering, we refer to $(FF_v)$ to denote either $(FF^+_v)$ or $(FF^-_v)$ as these are the same problem, differing only in their origin.
We now resort to the following classic result:
\begin{proposition}[Hoffman's theorem~\citep{hoffman1958some}, reported in \citet{fathabadi2007new}]
     A network with conservation and boundedness constraints with non-negative lower bounds is feasible if and only if for every non-trivial partition $(S, T)$ $S\sqcup U=N$, called a \textnormal{cut}, we have $V(S,T)\leq0$, where:
     \begin{equation}\label{eq:value_st_cut}
         V(S,T) = \sum_{\substack{i\in S\\j\in T}}l_{ij} - \sum_{\substack{i\in T\\j\in S}}m_{ij}
     \end{equation}
\end{proposition}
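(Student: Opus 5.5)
The statement is Hoffman's circulation theorem. I would prove necessity by a direct summation argument and sufficiency by reducing to the max-flow/min-cut theorem. Throughout, I assume the standing hypothesis $l_e\leq m_e$ for every arc; this holds in our problems $(FF_v)$, since source and sink arcs have $l_e=m_e$. Write $l(S,T)=\sum_{i\in S,j\in T}l_{ij}$, and similarly $m(\cdot,\cdot)$ and $f(\cdot,\cdot)$.

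\emph{Necessity.} Let $f$ be a feasible flow and $(S,T)$ a cut. I sum the conservation equations over all nodes $w\in S$. Arcs with both endpoints in $S$ contribute once with each sign and cancel, which leaves $f(T,S)=f(S,T)$. The bounds then give
\[
l(S,T)\leq f(S,T)=f(T,S)\leq m(T,S),
\]
that is, $V(S,T)\leq 0$.

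\emph{Sufficiency.} First I shift by the lower bounds, writing $f=l+g$ with $0\leq g_e\leq m_e-l_e$. Conservation becomes the requirement that at each node $w$ the net outflow of $g$ equals the imbalance
\[
b(w)=\sum_{u}l_{uw}-\sum_{z}l_{wz}.
\]
Next I add a super-source $s^\ast$ with an arc $(s^\ast,w)$ of capacity $b(w)$ for each $w$ with $b(w)>0$. I add a super-sink $t^\ast$ with an arc $(w,t^\ast)$ of capacity $-b(w)$ for each $w$ with $b(w)<0$. Each original arc gets capacity $m_e-l_e$. Since $\sum_w b(w)=0$, the original network is feasible if and only if the maximum $s^\ast$--$t^\ast$ flow saturates all source arcs, i.e.\ has value $D=\sum_{b(w)>0}b(w)$.

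By max-flow/min-cut, it then suffices to show every $s^\ast$--$t^\ast$ cut has capacity at least $D$. A cut has the form $\{s^\ast\}\cup S$ versus $T\cup\{t^\ast\}$ with $S\sqcup T=N$. Its capacity is
\[
\sum_{w\in T,\,b(w)>0}b(w)-\sum_{w\in S,\,b(w)<0}b(w)+m(S,T)-l(S,T).
\]
Subtracting $D$ gives $m(S,T)-l(S,T)-\sum_{w\in S}b(w)$. Summing the definition of $b$ over $S$, internal arcs cancel and $\sum_{w\in S}b(w)=l(T,S)-l(S,T)$. Hence
\[
\text{capacity}-D=m(S,T)-l(T,S)=-V(T,S).
\]
The hypothesis $V\leq 0$, applied to the cut $(T,S)$, therefore gives capacity $\geq D$. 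The cuts with $S$ or $T$ empty are exactly the trivial source and sink cuts, whose capacities are $D$ and $\sum_{b(w)<0}(-b(w))=D$, so they impose nothing.

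\emph{Expected obstacle.} I expect the main difficulty to be the bookkeeping with infinite upper bounds, which occur on the internal, incoming and circulation arcs. If a cut has an infinite arc in the forward direction, its capacity is infinite and it is harmless. To apply max-flow/min-cut with finite capacities, I would replace each $\infty$ by a constant $M$ larger than the sum of all finite capacities. Any cut crossing such an arc then still has capacity exceeding $D$, so the min-cut argument above goes through unchanged, and the resulting flow is feasible for the original unbounded problem. I would also check that the orientation convention in \eqref{eq:value_st_cut} matches the one produced here. Since the condition quantifies over all cuts, $(S,T)$ and $(T,S)$ are both covered, so the convention does not matter.
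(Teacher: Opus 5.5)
Your proof is correct, and it supplies something the paper does not: the paper gives no argument for this proposition. It imports Hoffman's circulation theorem from the flow literature as a black box and uses it only to turn infeasibility of $(FF_v^\pm)$ into the existence of a strictly positive cut.

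You derive it from max-flow/min-cut instead, via the standard lower-bound shift $f=l+g$ with a super-source and super-sink, and the bookkeeping checks out:
\begin{itemize}
\item $\sum_{w\in S}b(w)=l(T,S)-l(S,T)$, so the cut capacity minus $D$ equals $m(S,T)-l(T,S)=-V(T,S)$.
\item The two trivial cuts have capacity exactly $D$.
\item Replacing $\infty$ by a large $M$ is harmless, since any cut crossing such an arc forward already has capacity exceeding $D$.
\end{itemize}

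Beyond being self-contained, your route buys two clarifications.

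First, it makes explicit the hypothesis $l_e\le m_e$. The paper's statement leaves this implicit, but it is genuinely needed for the ``if'' direction. Take two nodes $a,b$, an arc $(a,b)$ with $l=2$, $m=1$, and an arc $(b,a)$ with $l=0$, $m=\infty$. Both cut conditions hold (giving $V=-\infty$ and $V=-1$), yet no feasible flow exists. As you observe, the hypothesis does hold in $(FF_v^\pm)$, because source and sink arcs have $l_e=m_e$.

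Second, it shows that the infinite upper bounds on internal, incoming and circulation arcs cause no difficulty in the theorem itself. It also shows that nonnegativity of the lower bounds is never used.

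The citation, for its part, keeps the appendix short and rests on a result that is standard in network-flow theory.
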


Equivalently, we have that $(FF_v)$ does not have a solution if and only if there exists a strictly positive cut:
\begin{equation}
    \begin{aligned}
        (FF_v) \textnormal{ does not have a solution}
        &\Leftrightarrow
            \exists S,T
            \subset N, S,T \neq \emptyset, S\cap T=\emptyset, S\cup T=N,\quad V(S,T)>0
    \end{aligned}
\end{equation}

\begin{lemma}\label{lemma:s_t_together}
Let $(S, T)$ be a strictly positive cut i.e. $V(S,T)>0$. Then
\[
    \left \{ s,t \right \} \subset S \textnormal{ OR }  \left \{ s,t \right \} \subset T
\]
\end{lemma}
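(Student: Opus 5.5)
We argue by contradiction. Suppose $(S,T)$ is a cut with $V(S,T)>0$ and, without loss of generality, $s\in S$ and $t\in T$; the symmetric case $t\in S$, $s\in T$ is handled by the same computation with the roles exchanged. The strategy is to expand $V(S,T)$ from \Cref{eq:value_st_cut} arc type by arc type. We use the circulation arc $(t,s)$, whose upper bound is infinite, together with the fixed values on the source and sink arcs to show that $V(S,T)$ is forced to be $-\infty$ or nonpositive. This contradicts strict positivity.

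\textbf{Case $s\in S$, $t\in T$.} The circulation arc $(t,s)$ goes from $T$ to $S$. Its upper bound $m_{ts}=\infty$ therefore enters the second sum in \Cref{eq:value_st_cut} with a minus sign. Every other arc has a finite lower bound, so the first sum is finite. Hence $V(S,T)=-\infty\le 0$, which contradicts $V(S,T)>0$. The only thing to check is the convention that an infinite upper bound crossing backwards makes the cut value $-\infty$. This is the standard reading of Hoffman's condition: no finite demand can be violated across a cut through which unbounded flow can return.

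\textbf{Case $t\in S$, $s\in T$.} Now $(t,s)$ goes from $S$ to $T$ and contributes its lower bound $l_{ts}=0$. We compute the contributions of the remaining arcs.

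\begin{itemize}
\item \emph{Internal and incoming arcs.} These have $l=0$ and $m=\infty$. Any such arc crossing from $T$ to $S$ gives $-\infty$, and we are done. So we may assume none of them crosses from $T$ to $S$. Those crossing from $S$ to $T$ contribute $0$.
\item \emph{Source arcs.} A source arc $(s,w)$ starts at $s\in T$, so it contributes only when $w\in S$, and then with $-m=c_w$, where $c_w<0$.
\item \emph{Sink arcs.} A sink arc $(w,t)$ ends at $t\in S$, so it contributes only when $w\in T$, and then with $-m=-c_w$, where $c_w>0$.
\end{itemize}

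All of these contributions are nonpositive, so $V(S,T)\le 0$, again a contradiction.

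\textbf{Main obstacle.} The delicate step is the bookkeeping of signs and directions in the second case, in particular confirming that no arc with a positive lower bound can go from $S$ to $T$. Source arcs leave $s\in T$ and sink arcs enter $t\in S$, so neither can cross from $S$ to $T$. This orientation check is what makes every contribution nonpositive.

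Both cases give a contradiction, so $s$ and $t$ must lie on the same side of any strictly positive cut.
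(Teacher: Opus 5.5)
Your proof is correct and follows the paper's argument. The case $s\in S$, $t\in T$ is ruled out because the circulation arc $(t,s)$ with infinite upper bound crosses from $T$ to $S$. The case $s\in T$, $t\in S$ is ruled out because no arc with positive lower bound goes from $S$ to $T$, so $V(S,T)\le 0$. You spell out the arc-by-arc bookkeeping that the paper states in one line. Your opening ``without loss of generality'' is unnecessary, since the two cases are not symmetric and you in fact treat them separately.
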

\begin{proof}
    If $s\in S$, then $t\in S$ otherwise the arc $(t,s)$ goes from $T$ to $S$ and it has infinite upper bound.\newline
    If $s\in T$, then $t\in T$ otherwise there is no arc with strictly positive lower bound entering $T$ and $V(S,T)\leq0$.
\end{proof}

\begin{lemma}\label{lemma:closures}
    Let $v$ be the bottleneck of the problem $(FF_v)$ and $G_v=(N,A)$ the induced network flow problem.
    Let $(u,w) \in A$ with $u,w \in N\setminus\left \{ s,t\right \}$.
    Then
    \begin{equation}
        V(S,T)>0 \Rightarrow 
        \begin{cases}
            u\in T \Rightarrow w \in T\\
            w\in S \Rightarrow u \in S
        \end{cases}
    \end{equation}
    We say that $T$ is \textnormal{forward stable} and $S$ is \textnormal{backward stable}.
\end{lemma}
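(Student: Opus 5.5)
The plan is a direct contradiction argument based on the infinite upper bounds of the internal arcs. It mirrors the first half of the proof of \Cref{lemma:s_t_together}, where the circulation arc $(t,s)$ was handled the same way.

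First, I will identify which arcs can play the role of $(u,w)$. By the construction of $G_v=(N_v,A_v)$, every arc with both endpoints in $N\setminus\{s,t\}$ has both endpoints in $\overline{\mathrm{Anc}}(v)$. Arcs in $A_i$ and $A_s$ start at $s$, arcs in $A_t$ end at $t$, and the circulation arc joins $t$ and $s$. So $(u,w)$ must be an internal arc in $A_o$, and therefore $l_{uw}=0$ and $m_{uw}=\infty$. The same holds for the reversed construction $(FF_w^-)$, since reversing the arrows and changing the sign of $c$ leaves the bounds on internal arcs unchanged.

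Second, I will observe that the two implications are logically the same statement, because $(S,T)$ is a partition of $N$. The implication $u\in T\Rightarrow w\in T$ is equivalent to its contrapositive $w\notin T\Rightarrow u\notin T$, which reads $w\in S\Rightarrow u\in S$. It therefore suffices to prove the first one.

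Third, I will argue by contradiction. Suppose $V(S,T)>0$, $u\in T$ and $w\in S$. Then $(u,w)$ is an arc from $T$ to $S$, so $m_{uw}$ appears in the subtracted sum of \eqref{eq:value_st_cut}. Every lower bound in the network is finite: each is either $0$ or $\pm c_x$. Hence the first sum in \eqref{eq:value_st_cut} is finite and $V(S,T)=-\infty\le 0$, which contradicts $V(S,T)>0$.

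I do not expect a real obstacle. The only step needing care is the bookkeeping in the first step, namely confirming that no arc with finite upper bound joins two nodes of $N\setminus\{s,t\}$, and reading $V(S,T)$ with the convention that an infinite capacity from $T$ to $S$ gives $-\infty$. The resulting forward stability of $T$ and backward stability of $S$ is then the property I will feed into the final cut computation that yields the condition of \Cref{theo:connectedness}.
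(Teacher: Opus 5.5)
Your proposal is correct and follows the paper's own argument: $(u,w)$ must be an internal arc with $m_{uw}=\infty$, so an arc from $T$ to $S$ would force $V(S,T)=-\infty$, which contradicts $V(S,T)>0$. Two points the paper leaves implicit are made explicit in your version: the two implications are contrapositives of each other because $(S,T)$ is a partition, and all lower bounds are finite.
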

\begin{proof}
    We have that $N\setminus\left \{ s,t\right \}=\overline{\mathrm{Anc}}(v)$.
    This means that $(u,w)$ is an internal arc so $m_{uw}=\infty$.
    Both statements are proved by observing that there cannot be an internal arc starting in $T$ and ending in $S$, otherwise $V(S,T)=-\infty$.
    In other words an arc starting in $T$ must end in $T$, and an arc ending in $S$ must start in $S$.
\end{proof}

\begin{lemma}\label{lemma:s_t_in_S}
    Let $(S, T)$ be a strictly positive cut. Then
\begin{equation}
    \left \{ s,t \right \} \subset S
\end{equation}
\end{lemma}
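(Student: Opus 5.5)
By Lemma~\ref{lemma:s_t_together}, a strictly positive cut has either $\{s,t\}\subset S$ or $\{s,t\}\subset T$. So it suffices to rule out $\{s,t\}\subset T$: I will show that any cut $(S,T)$ with $s,t\in T$ satisfies $V(S,T)\leq 0$. The approach is to write out $V(S,T)$ explicitly for such a cut, arc class by arc class, using the bounds fixed in the construction of $(FF_v^+)$.

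First I locate the arcs that cross the cut. With $s,t\in T$, the set $S$ is a nonempty subset of $\overline{\mathrm{Anc}}(v)$. The arcs from $S$ to $T$ are of only two kinds:
\begin{itemize}
\item internal arcs ending in $T\setminus\{s,t\}$, each with lower bound $0$;
\item sink arcs $(w,t)$ with $w\in S$ and $c_w>0$, each with lower bound $c_w$.
\end{itemize}
The arcs from $T$ to $S$ are of three kinds:
\begin{itemize}
\item incoming arcs $(s,w)$ with $w\in S$, each with infinite upper bound;
\item source arcs $(s,w)$ with $w\in S$ and $c_w<0$, each with upper bound $-c_w$;
\item internal arcs from $T\setminus\{s,t\}$ into $S$, each with infinite upper bound.
\end{itemize}
This gives
\[
V(S,T)=\sum_{w\in S,\,c_w>0}c_w-\sum_{w\in S,\,c_w<0}(-c_w)-(\text{infinite terms}),
\]
so that $V(S,T)=\sum_{w\in S}c_w$ whenever no arc with infinite upper bound enters $S$, and $V(S,T)=-\infty$ otherwise.

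Next I show that such an infinite arc always exists. The key structural fact is that $v$ is the sink of $\overline{\mathrm{Anc}}(v)$ inside $G_v$: every pure ancestor reaches $v$ through internal arcs. There are two cases.
\begin{itemize}
\item If $v\in T$, pick $w\in S$ and a directed path from $w$ to $v$ inside $\overline{\mathrm{Anc}}(v)$; this path must contain an internal arc from $S$ to $T$. That alone does not create a negative term, so I look instead at the backward direction. Every node of $\overline{\mathrm{Anc}}(v)$ has an in-edge in $G$, and following in-edges backwards from $S$ must eventually leave $\overline{\mathrm{Anc}}(v)$, since input nodes are not pure ancestors. Hence there is an arc entering $S$ that is either an internal arc from $T$ or an incoming arc from $s\in T$. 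Both have infinite upper bound, so $V(S,T)=-\infty$.
\item If $v\in S$, the same backward argument applies verbatim.
\end{itemize}
In short: take a minimal element of $S$ in the DAG order (a node of $S$ with no in-neighbours in $S$). It has an in-edge in $G$. That in-edge is either internal, coming from $T\setminus\{s,t\}$, or comes from outside $\overline{\mathrm{Anc}}(v)$, in which case it is an incoming arc from $s\in T$. Either way the arc goes from $T$ to $S$ with $m=\infty$, so $V(S,T)=-\infty\le 0$. This contradicts strict positivity of the cut, and therefore $\{s,t\}\subset S$.

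The main obstacle is justifying that every node of $\overline{\mathrm{Anc}}(v)$ has an in-edge. This holds because hidden nodes lie on input–output paths, and pure ancestors are hidden, since input nodes cannot be pure ancestors of a hidden bottleneck in a way that matters here; inputs are not in $N$ anyway. A second point to handle is that incoming arcs are present in $A_i$ precisely for in-edges from outside the pure-ancestor set, which the construction guarantees. The minimal-element argument avoids any path bookkeeping.
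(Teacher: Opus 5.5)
Your proof is correct and follows essentially the paper's argument: after using Lemma~\ref{lemma:s_t_together} to reduce to excluding $\{s,t\}\subset T$, both arguments exhibit an infinite-capacity arc entering $S$ (either an incoming arc from $s$ or an internal arc), relying on the fact that every pure ancestor has an in-edge in $G$. The paper follows a backward path from an arbitrary $w$ to an incoming arc and then uses the forward closure of Lemma~\ref{lemma:closures} to force $w\in T$, whereas your DAG-minimal node of $S$ yields the offending arc in one step, so your case split on $v\in S$ versus $v\in T$ is unnecessary.
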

\begin{proof}
    From \Cref{lemma:s_t_together}, we know $\left \{ s,t \right \} \subset S \textnormal{ OR }  \left \{ s,t \right \} \subset T$.\newline
    Let $v$ be the bottleneck of the problem $(FF_v)$ and $G_v=(N,A)$ the induced network flow problem.\newline
    We proceed by contradiction and suppose that $\left \{ s,t \right \} \subset T$.
    Let us pick an arbitrary node $w \in N\setminus\left \{ s,t \right \}=\overline{\mathrm{Anc}}(v)$.
    In the DAG, any path going backward from $w$ will necessarily pass through an incoming edge as the path leave $\overline{\mathrm{Anc}}(v)$, otherwise $w$ would be disconnected from the input.
    Formally, 
    \[
        \forall w\in N\setminus\left \{ s,t \right \}, \exists z \in N\setminus\left \{ s,t \right \}\cap \mathrm{Anc}(w), (s,z)\in A_i
    \]
    So $z$ must be in $T$ otherwise $(s,z)$ would go from $T$ to $S$ with $m_{sz}=\infty$.
    By the forward closure property of $T$ from \Cref{lemma:closures}, $w\in T$.

    Since $w \in N\setminus\left \{ s,t \right \}$ was picked arbitrarily, it follows that $N\setminus\left \{ s,t \right \}\subset T$.
    By hypothesis $\left \{ s,t \right \} \subset T$ and so $T=N$ which is impossible since the partition $(S,T)$ must be non-trivial.
\end{proof}

Finally, we prove the main theorem.
\begin{proposition}
    \begin{equation}
        \mathcal H_G(c) \textnormal{ is connected } \Leftrightarrow 
        \begin{cases}
            \forall v\in V_B^+, \forall T\subset \overline{\mathrm{Anc}}(v) \text{ s.t. $T$ stable by forward edges, } \sum_{u\in T} c_u \geq 0\\
            \forall v\in V_B^-, \forall T\subset \overline{\mathrm{Desc}}(v) \text{ s.t. $T$ stable by backward edges, } \sum_{u\in T} c_u \leq 0
        \end{cases}
    \end{equation}
\end{proposition}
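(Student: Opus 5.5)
By Lemma~\ref{lem:flow_connected}, $\mathcal H_G(c)$ is connected if and only if every problem $(FF_v^+)$, $v\in V_B^+$, and every $(FF_w^-)$, $w\in V_B^-$, is feasible. It therefore suffices to fix one out-bottleneck $v$ and show that $(FF_v^+)$ is feasible if and only if $\sum_{u\in T}c_u\geq 0$ for every $T\subset\overline{\mathrm{Anc}}(v)$ stable by forward edges. The in-bottleneck case then follows by reversing all arrows and replacing $c$ with $-c$. This turns forward stability into backward stability and turns the inequality into $\sum_{u\in T} c_u\le 0$.

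\textbf{Key step: computing the cut value.} I apply Hoffman's theorem: $(FF_v^+)$ is infeasible if and only if there is a cut $(S,T)$ with $V(S,T)>0$. By Lemma~\ref{lemma:s_t_in_S}, any such cut has $\{s,t\}\subset S$, so $T\subset\overline{\mathrm{Anc}}(v)$ is nonempty. By Lemma~\ref{lemma:closures}, $T$ is forward stable with respect to internal arcs.

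I then compute $V(S,T)$ arc type by arc type.
\begin{itemize}
\item Internal arcs from $S$ to $T$ and incoming arcs $(s,w)$ with $w\in T$ have lower bound $0$, so they contribute nothing.
\item Arcs from $T$ to $S$ with infinite capacity are excluded by forward stability.
\item The single out-edge of $v$ is not an arc of $G_v$, so it imposes no constraint.
\item Source arcs $(s,w)$ with $w\in T$ contribute their lower bound $-c_w$, for each $w\in T$ with $c_w<0$.
\item Sink arcs $(w,t)$ with $w\in T$ go from $T$ to $S$ and subtract their upper bound $c_w$, for each $w\in T$ with $c_w>0$.
\end{itemize}
Adding these up gives $V(S,T)=-\sum_{u\in T}c_u$. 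Hence a strictly positive cut exists exactly when some such $T$ has $\sum_{u\in T}c_u<0$.

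\textbf{Converse direction.} Given any nonempty forward-stable $T\subset\overline{\mathrm{Anc}}(v)$, set $S=N_v\setminus T$, which contains $s$ and $t$. I check that this is a legitimate cut with the same value.
\begin{itemize}
\item Every internal arc leaving $T$ stays in $T$, because of stability.
\item The arc $(t,s)$ lies inside $S$.
\item No incoming arc goes from $T$ to $S$, because all incoming arcs start at $s$.
\end{itemize}
The computation above then gives $V(S,T)=-\sum_{u\in T}c_u$. So a violating $T$ yields infeasibility, and feasibility forces the inequality for all such $T$.

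\textbf{Expected obstacle.} The subtle point is reconciling the two notions of stability. The paper's definition asks for closure under out-neighbours of $T\setminus\{v\}$, and it is stated on $G$ rather than on the internal arcs of $G_v$. I need to confirm that these coincide. Out-neighbours of a pure ancestor $u\neq v$ are themselves pure ancestors: if an out-neighbour had a path to $V_O$ avoiding $v$, then $u$ would too. So for $u\neq v$, closure in $G$ is the same as closure under internal arcs. The node $v$ is exempt in both settings, since its only out-edge leaves $\overline{\mathrm{Anc}}(v)$. The empty set $T=\emptyset$ gives sum $0$ and is trivially harmless. With these checks, the two characterizations coincide and the theorem follows.
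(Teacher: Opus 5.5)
Your proposal is correct and follows the paper's proof essentially step for step. You reduce via Lemma~\ref{lem:flow_connected} to the per-bottleneck problems, apply Hoffman's theorem, use Lemmas~\ref{lemma:s_t_in_S} and~\ref{lemma:closures} to restrict to forward-stable $T\subset\overline{\mathrm{Anc}}(v)$ with $\{s,t\}\subset S$, and compute $V(N_v\setminus T,T)=-\sum_{u\in T}c_u$. Your explicit converse check and your verification that the paper's stability notion (defined on $G$ with $v$ exempt) coincides with closure under internal arcs of $G_v$ fill in details the paper leaves implicit, and your sign for the source-arc contribution corrects a small sign slip in the paper's write-up.
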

\begin{proof}
Let $v \in V_B^+$.
    \begin{equation}
        \begin{aligned}
            (FF_v^+) \textnormal{ has no solution} &\Leftrightarrow \exists (S,T), V(S, T)> 0\\
            &\Leftrightarrow \exists  (S,T), T\text{ forward stable, } V(S, T)> 0 \qquad &(\text{Lemma \ref{lemma:closures}})\\
            &\Leftrightarrow \exists  T\subset \overline{\mathrm{Anc}}(v) \text{ forward stable, } V(N\setminus T, T)> 0 \qquad &(\text{Lemmas \ref{lemma:s_t_in_S}})\\
            &\Leftrightarrow \exists  T\subset \overline{\mathrm{Anc}}(v) \text{ forward stable, } \sum_{\substack{i\notin T\\j\in T}}l_{ij} - \sum_{\substack{i\in T\\j\notin T}}m_{ij}> 0\qquad &(\text{Equation }\ref{eq:value_st_cut})\\
        \end{aligned}
    \end{equation}
    In the third row, we used the fact that $S=N\setminus T$.
    Now, notice that we can rewrite the first sum as $\sum_{\substack{i\notin T\\j\in T}}l_{ij} = \sum_{\substack{u\in T\\c_u<0}}c_u$, because the only non-null lower bounds leaving $S$ are those of source arcs and these belong to nodes having $c_u<0$.
    The second sum can be rewritten as $\sum_{\substack{i\in T\\j\notin T}}m_{ij}=\sum_{\substack{u\in T\\c_u>0}}c_u$ as a consequence of the forward stability of $T$: the only edges leaving $T$ are sink arcs from $T$ i.e. from nodes having $c_u>0$.\newline
    This allows us to rewrite the equivalence obtained above as
    \begin{equation}
        \begin{aligned}
            (FF_v^+) \textnormal{ has no solution} &\Leftrightarrow \exists  T\subset \overline{\mathrm{Anc}}(v) \text{ forward stable, } \sum_{\substack{u\in T\\c_u<0}}-c_u - \sum_{\substack{u\in T\\c_u>0}}c_u>0\\
            &\Leftrightarrow \exists  T\subset \overline{\mathrm{Anc}}(v) \text{ forward stable, } \sum_{\substack{u\in T}}c_u<0
        \end{aligned}
    \end{equation}
    
    Therefore by negating both statements:
    \begin{equation}
        (FF_v^+) \textnormal{ has a solution} \Leftrightarrow \forall T\subset \overline{\mathrm{Anc}}(v) \text{ forward stable, } \sum_{u\in T}c_u\geq0
    \end{equation}

    The complementary statement for in-bottlenecks is obtained by reversing the arrows and following similar steps, and we conclude by using \Cref{lem:flow_connected}:
    
    \begin{equation}
    \begin{aligned}
        \mathcal H_G(c) \textnormal{ is connected } &\Leftrightarrow 
        \begin{cases}
            \forall v\in V_B^+, (FF_v^+) \textnormal{ has a solution}\\
            \forall v\in V_B^-, (FF_v^-) \textnormal{ has a solution}\\
        \end{cases}\\
        &\Leftrightarrow 
        \begin{cases}
            \forall v\in V_B^+,  \forall T\subset \overline{\mathrm{Anc}}(v) \text{ forward stable, } \sum_{u\in T}c_u\geq0\\
            \forall v\in V_B^-,  \forall T\subset \overline{\mathrm{Desc}}(v) \text{ backward stable, } \sum_{u\in T}c_u\leq0\\
        \end{cases}\\
    \end{aligned}
    \end{equation}
    
\end{proof}

\newpage

\ifthenelse{\boolean{includeappendix}}{
\subsection{Proof of Theorem~\ref{th:sing_are_subnetworks}}\label{proof:sing_are_subnetworks}
Let us compute $\rank J_G(\theta)$.

First, observe that if $\theta_e\neq 0\ \forall e \in E,$ then $\rank J_G(\theta) = \rank(\tilde{B})$ as $\mathrm{diag}(\theta)$ is invertible.
Therefore, the rank of the Jacobian can decrease if and only if some parameters are 0, i.e. when some edges are effectively removed from the computational graph.

Define now $\mathrm{diag}(\theta)^\dagger_{ee} = 1/\theta_{e}$ if $\theta_{e}\neq 0$ else $0$ and $\mathcal{E}(\theta) = \sset{e\in E: \theta_e = 0}$ the set of zero-weight edges.
Observe that 
\[
\rank J(\theta) = \rank(\tilde{B} \mathrm{diag}(\theta)) = \rank(\tilde{B} \diag(\theta)\diag(\theta)^\dagger) = \rank(\tilde{B}_{-\mathcal{E}(\theta)}),
\]
where $\tilde{B}_{-\mathcal{E}}$ is $\tilde{B}$ with the columns corresponding to edges in $\mathcal{E}$ put to 0.
Now, it holds that
\begin{equation}\label{eq:proof_sing_1}
\rank(\tilde{B}_{-\mathcal{E}(\theta)}) = \rank(\tilde{B}_{-\mathcal{E}(\theta)}^\top )= |\tilde{V}|-\dim\ker(\tilde{B}^\top_{-\mathcal{E}(\theta)}),
\end{equation}
where the last equality follows from the rank nullity theorem.

To relate $\dim\ker(\tilde{B}^\top_{-\mathcal{E}(\theta)})$ to the topological properties of $G$, we briefly introduce some concepts stemming from relative homology \citep{hatcher2002}.

Let $C_0(G)$ be the vector space of real functions on the nodes of $G$ (all neurons) $x\in C_0(G)\implies x\colon V\to\Reals$. These functions are customarily called the \emph{0-chains} (of $G$).
Let $C_1(G)$ be the vector space of real functions on the edges of $G,$ $y\in C_1(G)\implies y\colon E\to\Reals$; these are called the \emph{1-chains} (on $G$).
We can see the incidence matrix $B$ (with the rows associated with all nodes included) as the matrix representation of the linear operator from 1-chains to 0-chains $B\colon C_1(G)\to C_0(G)$.
If $\ones_v$ is the indicator function on node $v \in V$ and $y = \sum_{e\in E} y_e\ones_e$
\[
B y = \sum_{v\in V} \left(\sum_{e=(w,v), w\in V} y_e - \sum_{e=(v,u), u\in V} y_e \right)\ones_v.
\]

In this setting, $\ker B^\top$ is called the \emph{0-th cohomology group} of $G$ and denoted by $H^0(G)$; and $\dim\ker B^\top$ is proven to be equal to the number $CC(G)$ of connected components of $G$.

Let us now pick the subset of input and output nodes $\partial V\subseteq V$ and define the space of \emph{relative 0-chains} $C_0(G,\partial V) = \frac{C_0(G)}{C_0(\partial V)}$ i.e. the quotient space of the functions on the nodes modulo the space of functions on the input and output nodes. This means that we identify two \emph{0-chains} $c$ and $c'$ iff $c(v)=c'(v)$ for all internal nodes $v\in\tilde{V}:=V\setminus \partial V$.
An element in $C_0(G,\partial V)$ will thus be an equivalence class
\[
[x]\in C_0(G,\partial V) \implies  [x]=\left[\sum_{v\in V} x_v \ones_v\right] = \left[\sum_{v\in \tilde{V}} x_v \ones_v\right],
\]
as, by definition of quotient vector space, we have that $[\ones_v] = [0]\iff v\in \partial V$.

The incidence matrix $B$, therefore, induces a \emph{relative incidence matrix} $\tilde{B}\colon C_1(G)\to \frac{C_0(G)}{C_0(\partial V)}$ as 
$\tilde{B}y = [B y]$.
From this, we see that
\begin{align}
\tilde B y = [By] &= \left[\sum_{v\in V}\left(\sum_{e=(w,v), w\in V} y_e - \sum_{e=(v,u), u\in V} y_e \right)\ones_v\right] \\
&= \left[\sum_{v\in V\setminus \partial V = \tilde{V}} \left(\sum_{e=(w,v), w\in V} y_e - \sum_{e=(v,u), u\in V} y_e \right) \ones_v\right],
\end{align}
meaning that the relative incidence matrix can be represented with the incidence matrix with the rows associated with nodes in $\partial V$ removed, i.e., the $\tilde{B}$ we used in the text.

In this setup $\ker \tilde{B}^\top$ is known as the \emph{0-th relative cohomology group} $H^0(G,\partial V)$ of the pair $(G,\partial V)$ and, given this characterization, we can resort to Proposition 2.22 in \citet{hatcher2002}, and deduce that
\[
\dim H^0(G_{-\mathcal{E}(\theta)},\partial V) = \dim H^0(G_{-\mathcal{E}(\theta)}/\partial V)-1,
\]
where $G/\partial G$ is the graph $G$ with nodes in $\partial V$ all glued together, and $G_{-\mathcal{E}(\theta)}$ is the graph $G$ with the edges in $\mathcal{E}(\theta)$ removed.
We can therefore go back to \Cref{eq:proof_sing_1} and state that 
\[
\mathrm{rank}J(\theta) = |\tilde{V}| - \dim H^0(G_{-\mathcal{E}(\theta)}/\partial V)+1 = |\tilde{V}| - CC(G_{-\mathcal{E}(\theta)}/\partial V) +1,
\] 
meaning that the rank of the Jacobian is less than its maximum value of $|\tilde{V}|$ only when the number of connected components of the quotient graph $G_{-\mathcal{E}(\theta)}/\partial V$ is greater than 1.
This happens if and only if removing edges in $\mathcal{E}(\theta)$ disconnects a set of nodes from both input and output.
}{
\refstepcounter{subsection}
  \phantomsection
\label{proof:sing_are_subnetworks}
}

\ifthenelse{\boolean{includeappendix}}{
\subsection{Proof of Proposition ~\ref{prop:sing_invariant}}\label{proof:sing_invariant}

To prove the result, we will prove that, if $\theta_e = \theta_e(t_0) =0$ for every edge $e=(u,v)$ with $u\in W,v\notin W$ or $u\notin W,v\in W$, the same holds for the gradients: $g_e(t_0)=0$ for every edge $e=(u,v)$ as above.

Let us denote by $\mathcal P_{v,V_O}$ the set of all paths from node $v$ to a node in $V_O$ i.e. $p\in \mathcal{P}_{v,V_O} \iff p=(u_1,\dots,u_{n_p})$ with $u_1 = v$, $u_{n_p}\in V_O$ and $(u_i,u_{i+1})\in E,$ for all $i$.

Let $a_v,z_v$ be the activation and pre-activation of neuron $v$, respectively
\[
z_v = \sum_{(u,v)\in E} \theta_{(u,v)}a_u,\ \ a_v = \sigma(z_v).
\]
The chain rule allows us to decompose the gradient of the loss flowing through neuron $v$ in the contributions of all path from $v$ to an output neuron as follows:
\begin{equation}\label{eq:step_1_no_gradient_at_k}
    \begin{aligned}
        \frac{\partial L}{\partial a_v} = \sum_{p=(u_1,\dots,u_{n_p})\in \mathcal P_{v,V_O}} \frac{\partial L}{\partial a_{u_{n_p}}} \prod_{i=2}^{n_p}\frac{\partial a_{u_i}}{\partial z_{u_i}}\frac{\partial z_{u_i}}{\partial a_{u_{i-1}}}
    \end{aligned}
\end{equation}
For any neuron in the disconnected set $v\in W$, it holds that any path to the output $p\in \mathcal{P}_{v,V_O}$ contains an edge $e_p$ such that $\theta_{e_p} =0$.
If we notice that $\frac{\partial z_{u_i}}{\partial a_{u_{i-1}}} = \theta_{(u_{i-1},u_i)}$, we have that, for any path $p$, the product in \Cref{eq:step_1_no_gradient_at_k} will contain $e_p$ and therefore its value will be 0 $\frac{\partial L}{\partial a_v} = 0 \ \forall v \in W$.

Let us now prove that this implies that the edges that disconnect $W$ have zero gradient.

First, let $e = (u,w)\in E$ with $u\in V\setminus W$ and $w \in W$.
\[
g_e :=\dfrac{\partial L}{\partial \theta_e} = \dfrac{\partial L}{\partial a_w}\dfrac{\partial a_w}{\partial \theta_e} = 0,
\]
because $w \in W$.

Let $e = (w,v)\in E$ with $w\in W$ and $v\in V\setminus W$.
\[
g_e =\dfrac{\partial L}{\partial \theta_e} = \dfrac{\partial L}{\partial a_v}\dfrac{\partial a_v}{\partial \theta_e} = \dfrac{\partial L}{\partial a_v}\dfrac{\partial a_v}{\partial z_v}\dfrac{\partial z_v}{\partial \theta_e},
\]
where $\dfrac{\partial z_v}{\partial \theta_e} =a_w$.
Given that $W$ is also disconnected from the input nodes $V_I$, any node inside must have 0 activations.
Therefore $w\in W\implies a_w = 0$ and $g_e = 0$.
}{
\refstepcounter{subsection}
  \phantomsection
\label{proof:sing_invariant}
}

\ifthenelse{\boolean{includeappendix}}{
\subsection{Proof of Proposition~\ref{prop:no_reach}}\label{proof:no_reach}
Let $\dot{\theta}(t) = - g(\theta(t))$ be the evolution of the parameter configuration under gradient flow.

Let $J_G(\theta) = 2\tilde{B}\diag(\theta)$ and let us derive the evolution equation for $J_G$.

\begin{equation}\label{eq:dynamics_jacobian} 
\dfrac{\dd}{\dd t} J_G(t) = \dfrac{\dd}{\dd t} 2\tilde{B}\diag(\theta(t)) = 2\tilde{B}\diag(\dot{\theta}(t)) = -2\tilde{B}\diag(g(\theta(t))).
\end{equation}

It turns out that we can simplify this equation by leveraging the connections between rescaling symmetries and GF dynamics.
In fact, Equation 8 of \citet{kunin2020neural} states that rescaling symmetries induce a relation between the gradient and the Hessian $H$ of the loss function.
If $b_v$ is the transpose of the row of $\tilde{B}$ associated with $v\in\tilde{V}$, we have that
\begin{equation}\label{eq:hessian}
    H(\theta)(\theta \odot b_v) + g\odot b_v = 0\ \forall v\in\tilde{V}.
\end{equation}
This follows because $b_v$ has a value of -1 on the edges incoming in $v$ and +1 on the edges outgoing from $v$.
Gathering the identities of \Cref{eq:hessian} associated to all hidden neurons, we get the following matrix equation:
\begin{equation}\label{eq:hessian_2}
    \tilde{B}\diag(\theta) H(\theta) + \tilde{B}\diag(g(\theta)) = 0.
\end{equation}
Plugging \Cref{eq:hessian_2} in \Cref{eq:dynamics_jacobian}, we finally get
\begin{equation}\label{eq:jac_dyn}
\dfrac{\dd}{\dd t} J_G(t) = 2 \tilde{B}\diag(\theta)H(\theta) = J_G(\theta)H(\theta),
\end{equation}
i.e. the Jacobian evolution is dictated by the Hessian matrix.

It is known (e.g. \citet{blanes2009magnus} page 15) that the solution of \Cref{eq:jac_dyn} can be written in the following way
\[
J_G(t) = J_G(0)\mathcal{T}\exp(\int_{0}^t H(s)\dd s),
\]
where $\mathcal{T}\exp(\int_{0}^t H(s)\dd s)$ is the time-ordered matrix exponential
\[
\mathcal{T}\exp(\int_{0}^t H(s)\dd s)=\sum_{n=0}^\infty \int_0^t \dd t'_1 \int_0^{t_1'} \dd t'_2\cdots\int_0^{t_{n-1}'} \dd t_n' H(t_1')\cdots H(t_n')
\]
which ensures that the terms of the exponential series are in the right order, as the matrices $H(t_1)$ $H(t_2)$ may not commute.

Finally, in Theorem 4 of \citet{blanes2009magnus}, it is shown that the time-ordered matrix exponential can be written as a standard matrix exponential and is therefore invertible for any finite $t$.
This means that $\rank(J_G(t)) = \rank(J_G(0))$, thus concluding the proof.
}{
\refstepcounter{subsection}
  \phantomsection
\label{proof:no_reach}
}

\newpage

\ifthenelse{\boolean{includeappendix}}{
\subsection{Experimental details}\label{appendix:experimental_details}

Here we provide additional details regarding experiments described in the main paper.

\subsubsection{Disconnected neurons and pruning}

Figures \ref{fig:singularities}c–d show, respectively, the proportion of null singular values during training and the evolution of test loss under neuron pruning for a shallow, bias-free architecture.
We report on \Cref{fig:additional_sing_exp} results for a broader range of architectures discussed in the main text.
Both nuclear norm and $L1$ regularization promoted neuron sparsity (\cref{fig:additional_sing_exp}, left column).
Note that skip connections made neuron pruning more challenging as illustrated in previous literature~\citep{fang2023depgraph}.
Below, we describe our two pruning strategies; however in both cases, nuclear norm and $L1$ remain the most robust, vanilla the most fragile, and $L2$ intermediate (\cref{fig:additional_sing_exp}, center and right columns).

\paragraph{Architectural details.} We evaluate six architectures: a shallow network with $20$ hidden units, with and without bias; and four multilayer perceptrons (MLPs) with three layers of $10$ hidden units—varying by the presence of bias and whether a skip connection links the first and second hidden layers.

\paragraph{Details on methodology.} All architectures were trained from scratch for $5000$ epochs using SGD with a learning rate of $10^{-3}$.
Regularization strengths were empirically tuned to balance low task loss and singular value minimization:$\alpha_{nuc} = 0.05$ (nuclear norm), $\alpha_{L1} = 10$, and $\alpha_{L2} = 20$.
The total loss is defined as $L = L_{task} + \alpha_{reg} L_{reg}$. 
For each architecture, we sampled $30$ runs that converged to low train loss models to analyze training dynamics and pruning robustness.
Singular values below $10^{-3}$ were considered null.
The dataset was not preprocessed.
All experiments ran on CPU over approximately 50 hours.

\paragraph{Pruning.} We iteratively remove entire neurons (i.e., groups of parameters) from trained models and measure the degradation in performance, as measured by the test loss.
For each hidden neuron $k$, we compute a principled pruning score
\begin{equation}
    s_k=(\displaystyle\sum_{(i,k) \in E}\theta_{ik}^2)(\displaystyle\sum_{(k,j) \in E}\theta_{kj}^2)
\end{equation}
which is the product of the $L2$ norms of its input and output weights.
This score is low for (nearly) disconnected neurons and invariant under neuron rescaling, making it robust to reparameterization.\newline
However, a max-based score,
\begin{equation}
    s'_k=\text{Max}(\displaystyle\sum_{(i,k) \in E}\theta_{ik}^2, \displaystyle\sum_{(k,j) \in E}\theta_{kj}^2)
\end{equation}
produced slightly different results and is reported as well for comparison.

\newcommand{\plotwidthnoskip}{0.85\linewidth}
\begin{figure}[htbp]
    \centering
    \includegraphics[width=\plotwidthnoskip]{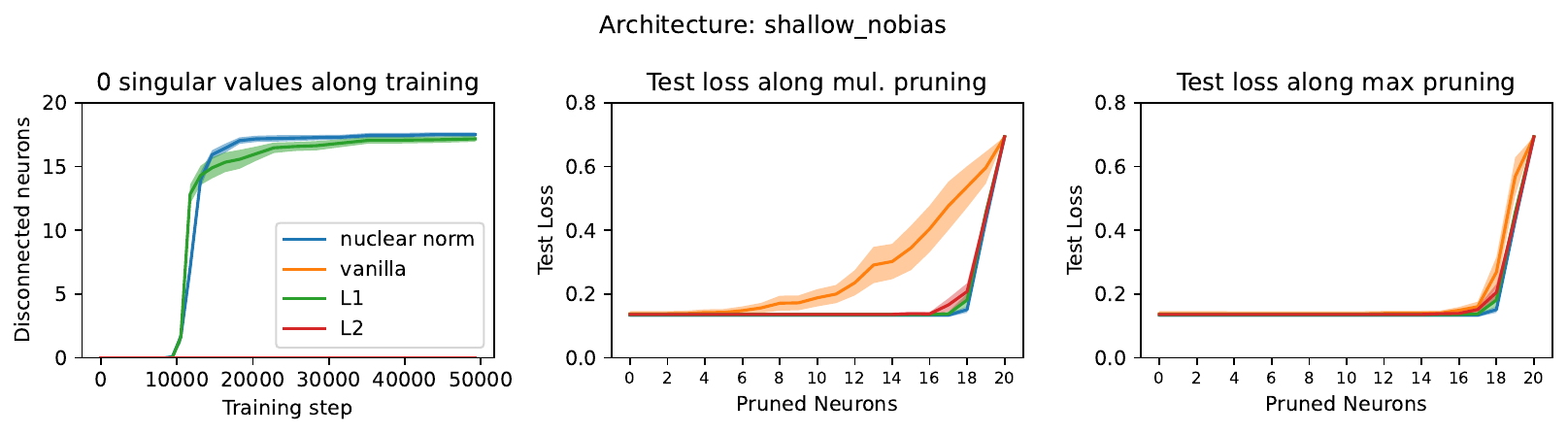}
    \includegraphics[width=\plotwidthnoskip]{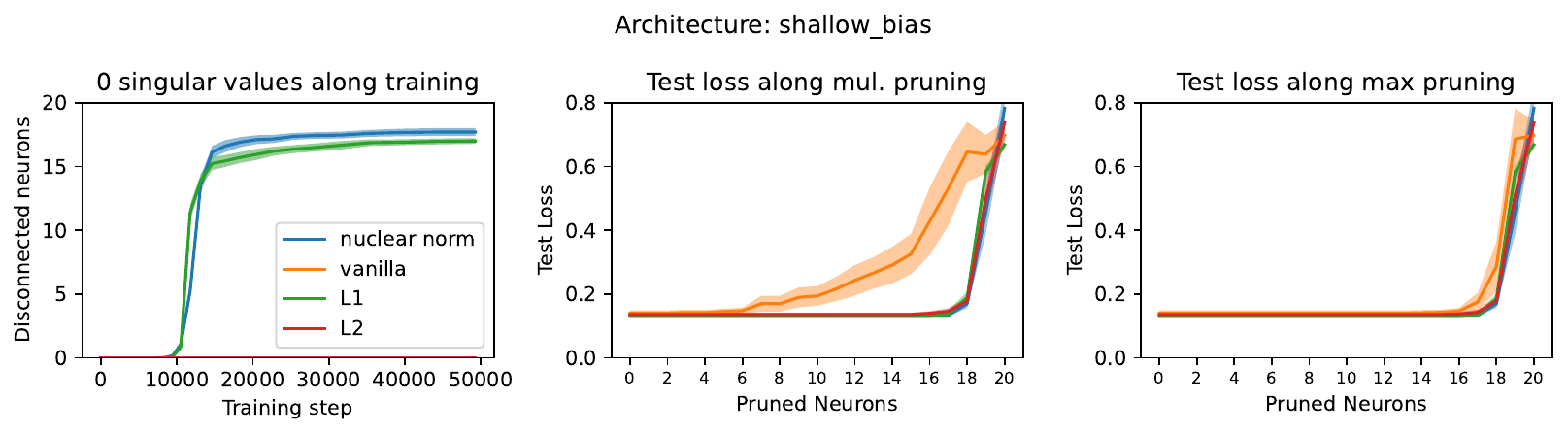}
    \includegraphics[width=\plotwidthnoskip]{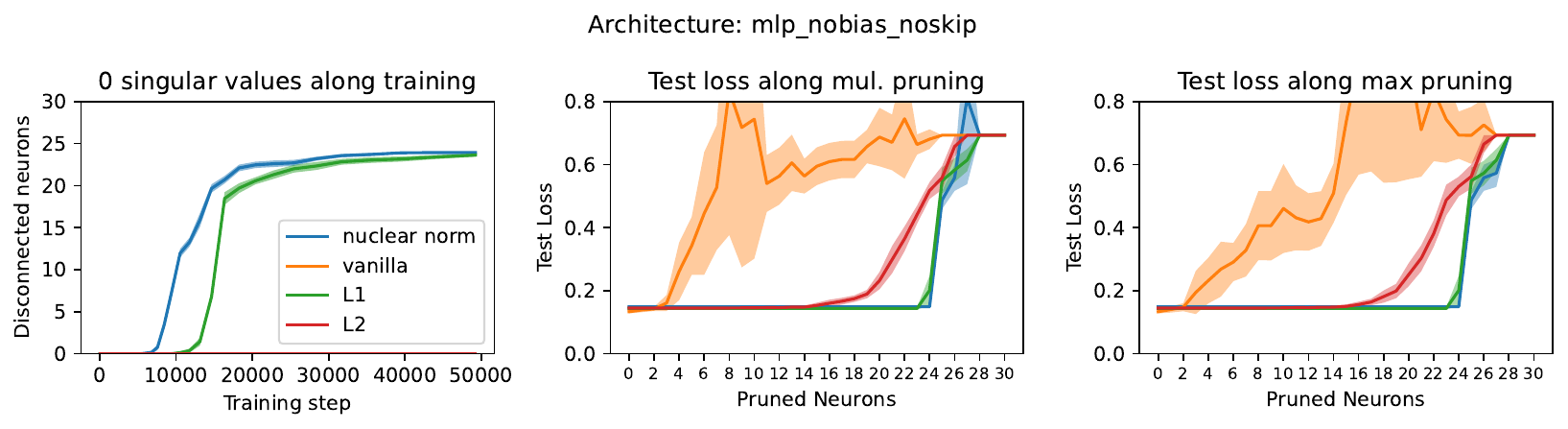}
    \includegraphics[width=\plotwidthnoskip]{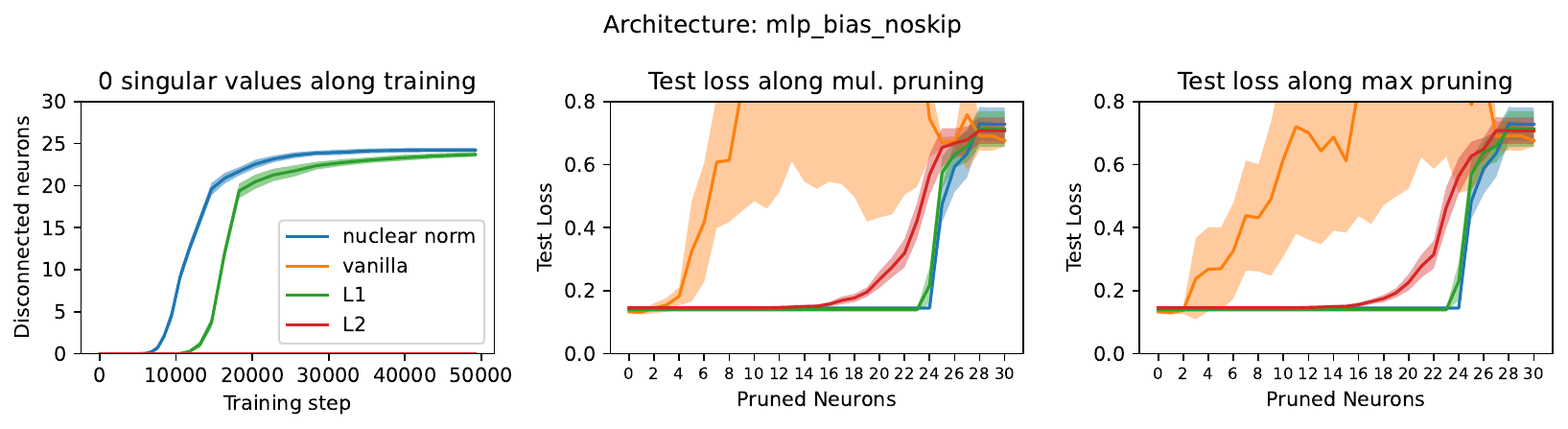}
    \includegraphics[width=\plotwidthnoskip]{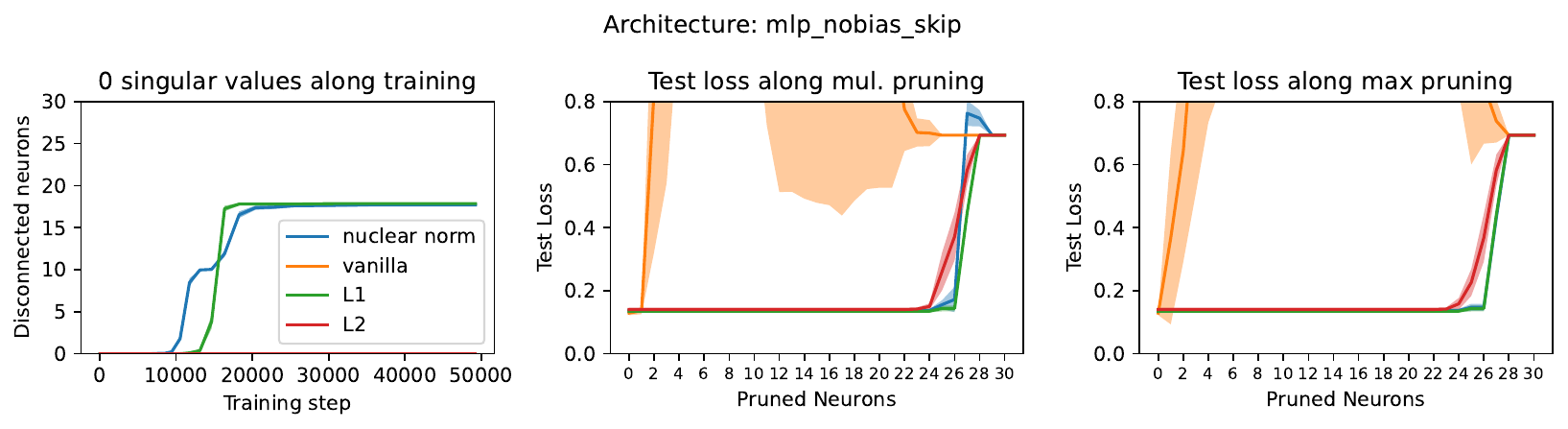}
    \includegraphics[width=\plotwidthnoskip]{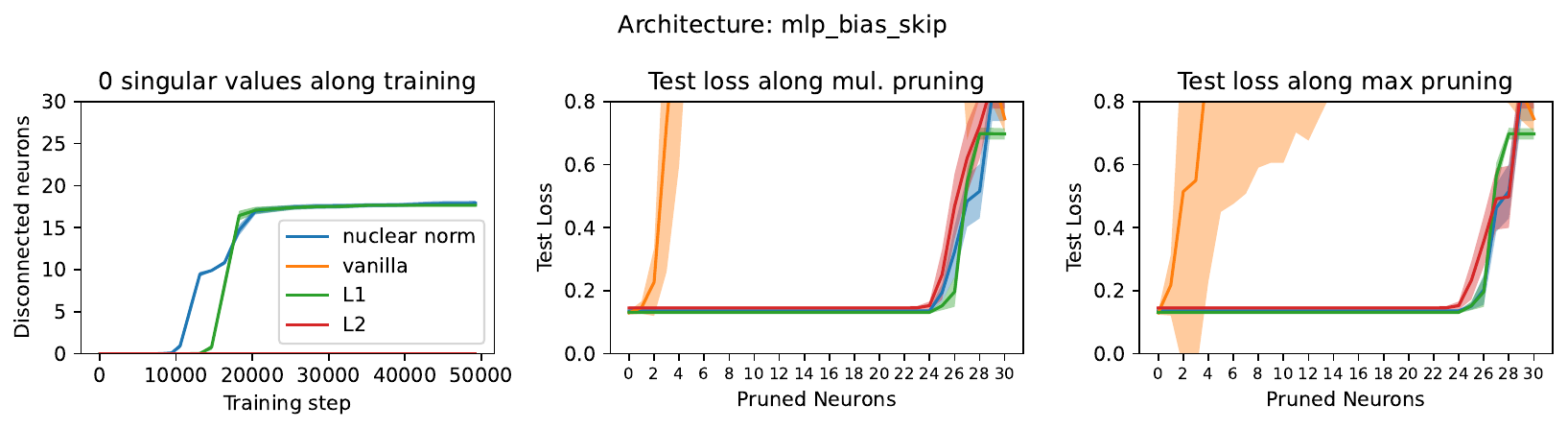}
    \caption{\textbf{(Rows)} architectures
    \textbf{(Left column)} number of almost 0 singular values (threshold: $10^{-3}$) along training.
    \textbf{(Center column)} pruning neurons on trained networks using $s_k$ (multiplicative) and \textbf{(Right column)} $s'_k$ (maximum) scores}\label{fig:additional_sing_exp}
\end{figure}

\newpage 

\subsubsection{Training dynamics}

For one training of shallow bias-free network, we present the evolution of key quantities during training to highlight how different regularizations affect their behavior.

As shown in \Cref{fig:ck_one_run}, both nuclear norm and $L_1$ regularization drive many neurons' balance numbers $c_k$ to zero—consistent with the fact that disconnecting a neuron in a fully connected layer requires its input and output weights to vanish, making $c_k$ null by definition (\Cref{eq:hyperbola}).
Without regularization, $c_k$ values, which should be fixed under continuous gradient flow, can increase 
due to the effect of discrete optimization steps.
These steps cause a gradual drift out of $\mathcal H_G(c)$, as the update vector deviates from the ideal optimization path.
Notably, in this unregularized case, $c_k$ values remain more stable.
\Cref{fig:sv_one_run} present the complementary view of singular values. 

\begin{figure}[ht]
    \centering
    \includegraphics[width=0.8\linewidth]{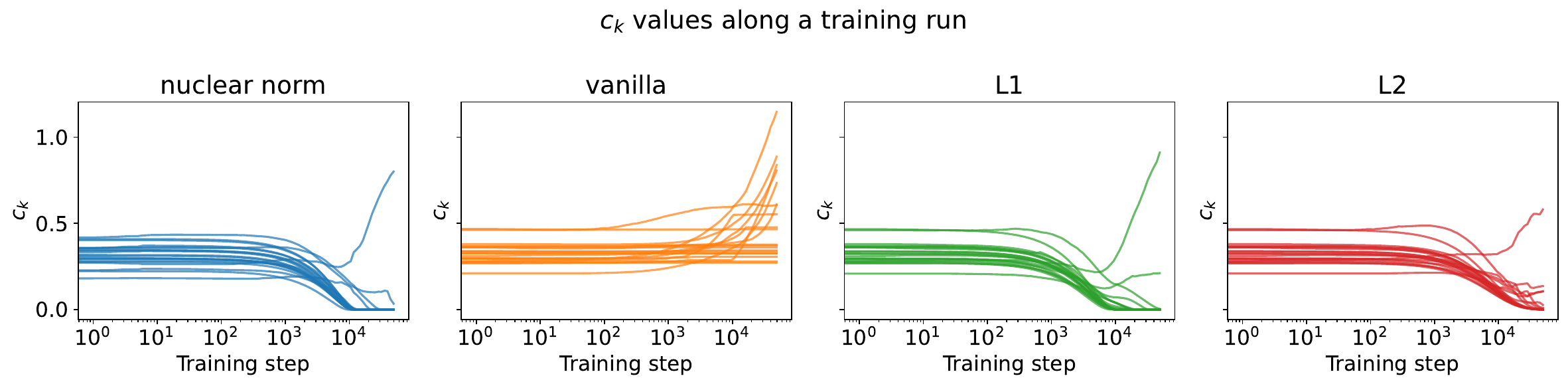}
    \caption{$c_k$ values during training of a shallow, bias-free network. $k\in[0,20]$ is the neuron index.}
    \label{fig:ck_one_run}
\end{figure}

\begin{figure}[ht]
    \centering
    \includegraphics[width=0.8\linewidth]{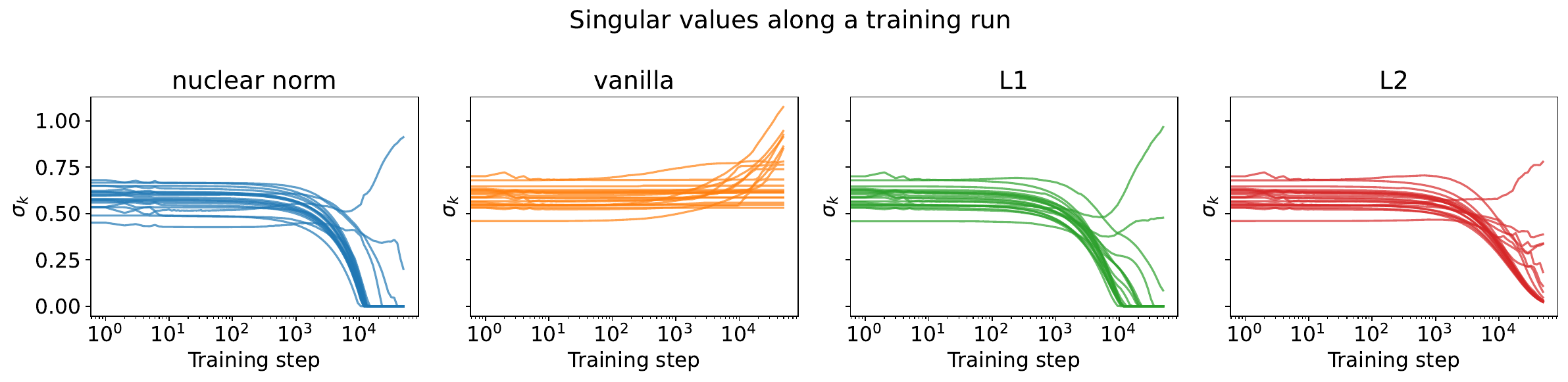}
    \caption{Singular values of the Jacobian $\tilde{B}\text{ diag}(\theta)$ during training of a shallow, bias-free network. $k\in[0,20]$ is the neuron index.}
    \label{fig:sv_one_run}
\end{figure}




\newpage
\ifthenelse{\boolean{includeappendix}}{
\subsection{Additional experiments}\label{appendix:additional_exp}

In this part, we present additional experiments that replicate our findings in different contexts for completeness.

\subsubsection{Connectivity}\label{appendix:add_exp_subsub_connec}

Besides the toy model presented in the paper which studies a DAG structure, we replicated connectedness results for MLPs on synthetic and real data.

\paragraph{Synthetic data.} 
The synthetic setup consisted of a MLP trying to learn to sum the components of a $2$-dimensional input vector where each component is drawn randomly from $[0,1]$.
To solve the task, at least one of the neurons in the last layer must have a positive output weight.
When the parameter space is disconnected, as stated by \Cref{cor:MLP}, initializing on a wrong connected component creates a pathology in which the network may be blocked from reaching the optimum, as shown on \Cref{fig:connectedness}, preventing the loss to decrease below a threshold even on the training set.
For the synthetic summing task, this happens because a negative $c_k$ prevents any sign flip for a neuron.
Training losses for a 10 layers, 0.5 million parameters MLP are shown on \Cref{fig:add_exp_scaled_obstruction}.

\begin{figure}[ht]
    \centering
    \includegraphics[width=0.8\linewidth]{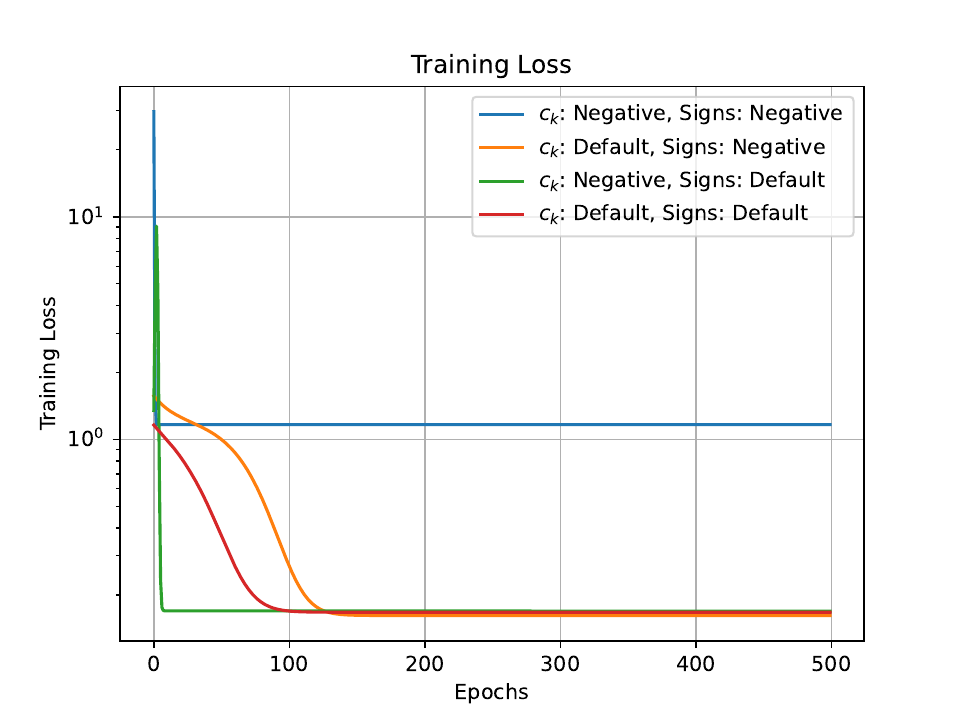}
    \caption{Training loss for $4$ possible initializations of a MLP withb $10$ layers. Initializations only differ in the last layer output weights. $c_k$ and signs refer to the last layer. \textbf{Blue curve}: when the training space is disconnected (negative $c_k$) and initialization is on a bad component (negative output sign for last layer neurons), the model is unable to learn correctly.}
    \label{fig:add_exp_scaled_obstruction}
\end{figure}

\paragraph{Real data.}

\begin{figure}
\centering
\begin{subfigure}{.5\textwidth}
  \centering
  \includegraphics[width=\linewidth]{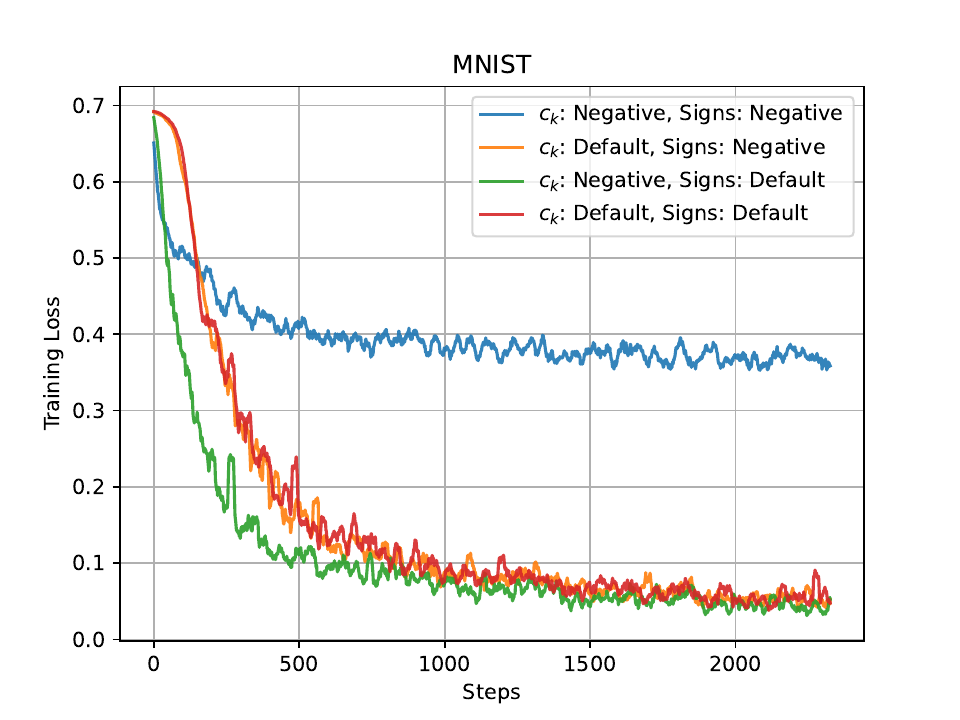}
  \caption{}
  \label{fig:sub1}
\end{subfigure}%
\begin{subfigure}{.5\textwidth}
  \centering
  \includegraphics[width=\linewidth]{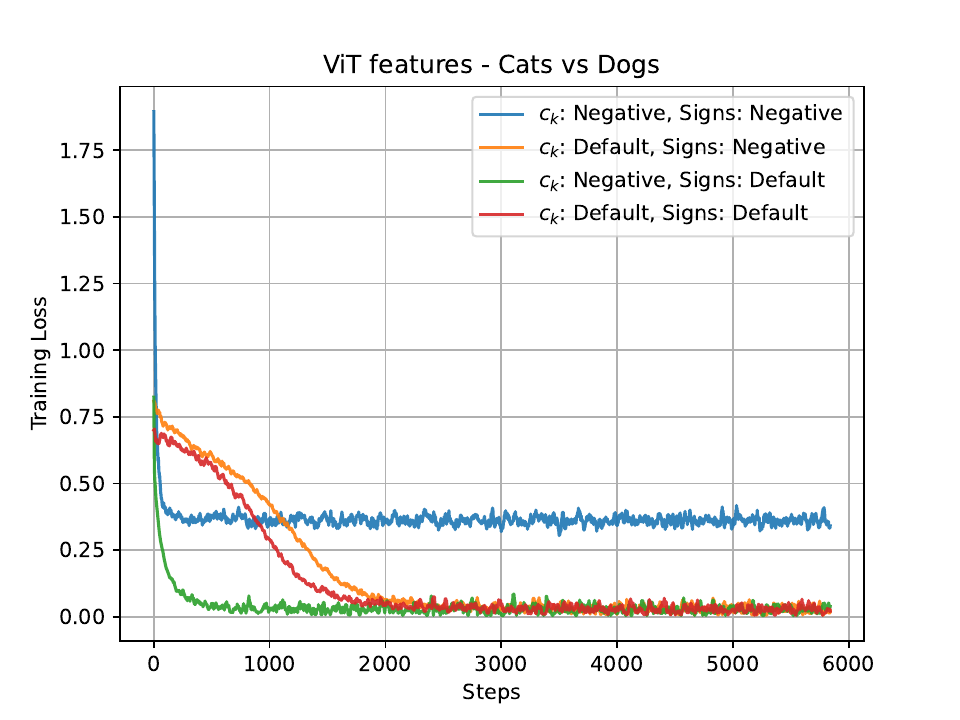}
  \caption{}
  \label{fig:sub2}
\end{subfigure}
\caption{Training losses for $4$ types of initializations differing only in the last layer output weights. \textbf{(a)} Training loss on MNIST. The model is an MLP with $3$ hidden layers containing $100$, $50$ and $10$ neurons.
\textbf{(b)} Training loss on ViT features extracted from a dataset of cats and dogs images. The model is an MLP with $3$ hidden layers containing $20$, $50$ and $20$ neurons.}
\label{fig:connected_real}
\end{figure}

We also replicate the learning obstruction on real data, both for MNIST and for the ViT features obtained from a pretrained vision transformer model (without any finetuning) on a classical high resolution dataset ~\citep{asirra-a-captcha-that-exploits-interest-aligned-manual-image-categorization} of cats and dogs images treated as 224 by 224 pixels and projected to 20 dimensions with UMAP.
Both these tasks are set up as binary classification: the cats and dogs is already a binary dataset, while for MNIST we modify the labels to predict whether or not the digit is equal or above $5$. 
For both datasets, it is easy for a standard MLP to achieve a low train loss as reported on \Cref{fig:connected_real}, except when the optimization space is disconnected ($c_k<0$) and the initialization is done on a component which does not contain parameters able to predict both classes (e.g. negative output weights on last layer).

\subsubsection{Singularities}\label{appendix:add_exp_subsub_sing}
In addition to the experiment conducted on the Brest Cancer dataset in the main paper and further described \Cref{appendix:experimental_details}, we obtained similar results in two others contexts: the classification of cats and dogs from ViT features discussed in \Cref{appendix:add_exp_subsub_connec} and a more challenging facial attributes prediction task from face recognition model features, for which we used the CelebA dataset~\citep{liu2015faceattributes}.


The ViT feature task is easily solved with almost perfect test accuracy by a three layer MLP having 2500 parameters and 90 total neurons.
Adding in L1 regularization achieved $80$\% pruned neurons, while adding in the Jacobian regularizer around $90$\%, both without a significative loss of accuracy compared to the vanilla training as shown on \Cref{fig:sing_ViT}.

\begin{figure}[ht]
    \centering
    \includegraphics[width=0.8\linewidth]{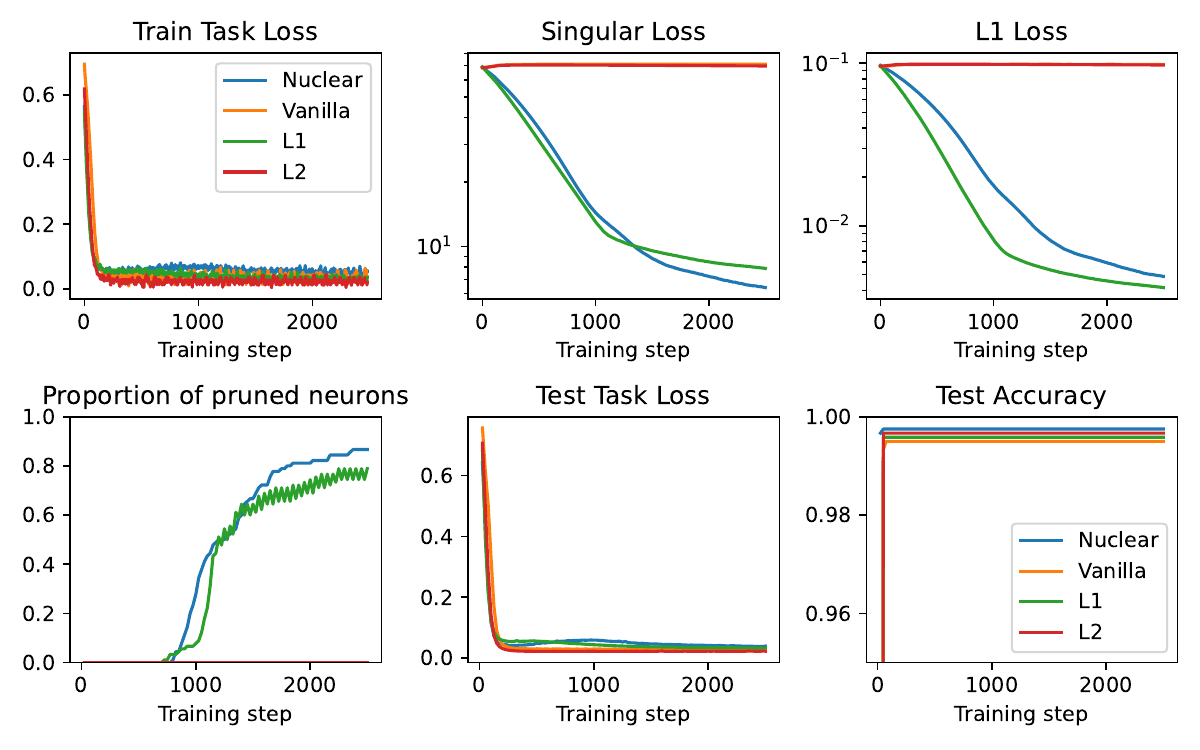}
    \caption{Key metrics along training for a binary classification task taking as input ViT features. Colored curved denote different regularizers.}
    \label{fig:sing_ViT}
\end{figure}

For the more difficult task of predicting facial attributes like “lipstick” or “gender” from features of pretrained face recognition models, test accuracy varied depending on the attribute.
However both L1 and the Jacobian regularization performed on par with the vanilla i.e. unregularized model.
Depending on the attribute, L1 achieved [80-90]\% neuron sparsity, while the Jacobian regularizer achieved [85-95]\% neuron sparsity.
Again, no significative impact on test accuracy was observed, as reported on \Cref{fig:sing_FR}.
\begin{figure}[ht]
    \centering
    \includegraphics[width=0.95\linewidth]{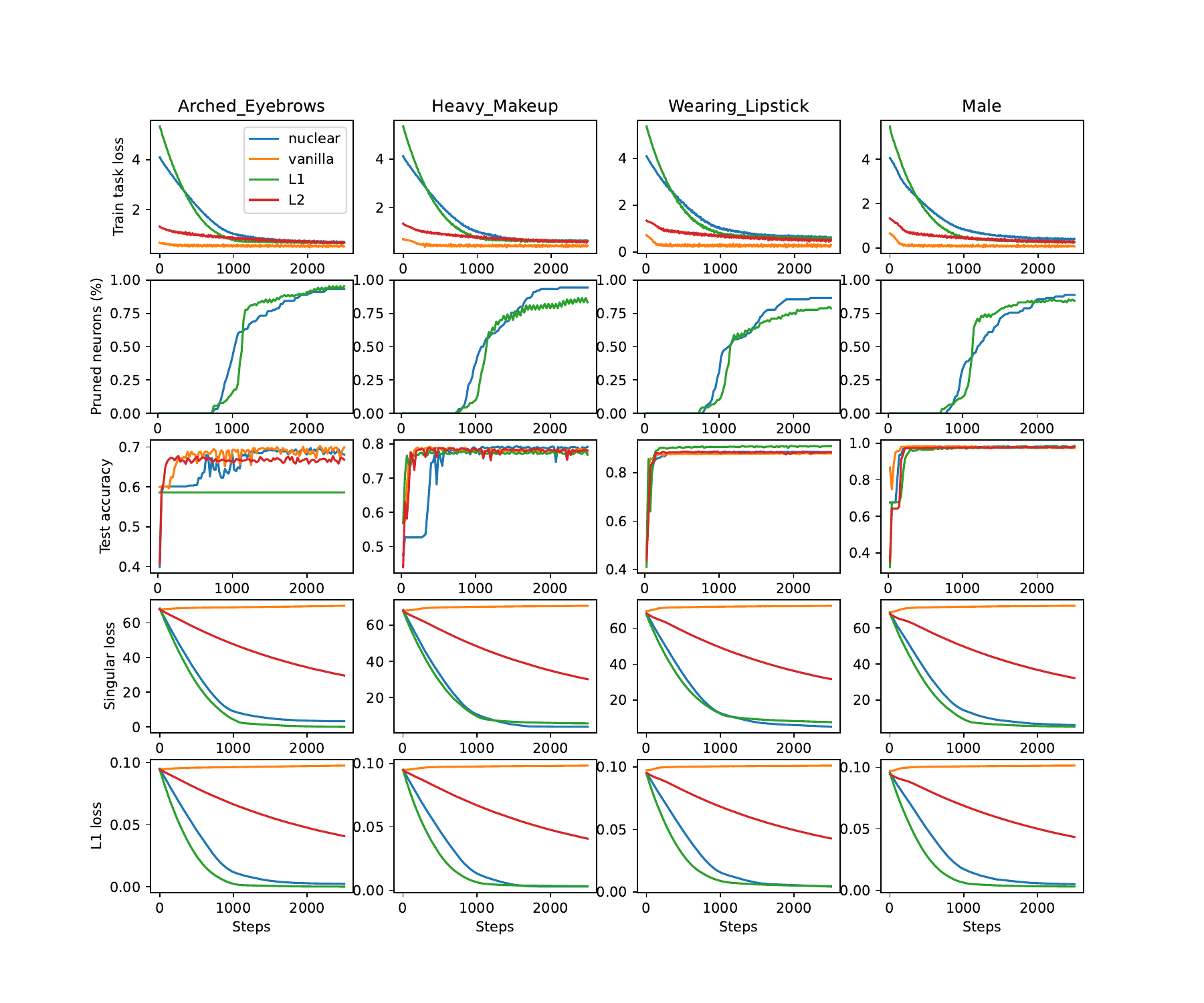}
    \caption{Key metrics along training for a prediction task aiming at predicting whether or not a facial attribute was present on a picture from features extracted with a face recognition model (here: FaceNet). Colored curves correspond to different regularizers. Dynamic pruning on row $2$.}
    \label{fig:sing_FR}
\end{figure}

}

\newpage
\ifthenelse{\boolean{includeappendix}}{
\subsection{Empirical comparison between L1 and nuclear regularization}\label{appendix:L1_vs_nuclear}
In the main text we observe that both the nuclear norm and L1 regularizers prune a similar number of neurons (see also \Cref{fig:singularities}c, \Cref{fig:additional_sing_exp}, \Cref{fig:sing_ViT} and \Cref{fig:sing_FR}).
Here, we give empirical evidence on key distinctions between these regularizers.
We first show that both regularizers cannot be explained by a null model and then illustrate that the nuclear norm preserves edges on active neurons, achieving strong group regularization while L1 is more aggressive, also pruning edges belonging to active neurons.

\subsubsection{Null model}\label{appendix:L1_vs_nuclear_nullmodel}

In this part we use a simple null model to rule out the explanation that L1 regularization achieves neuron sparsity solely due to its known edge sparsity mechanism.
The null model works as follow: we estimate the probability $p_{L1}$ for a generic edge to be dropped after training.
Intuitively, we choose an edge before training and observe after training under L1 regularization if it was dropped or not.
To decide if the edge was dropped or not, we use a threshold of $10^{-3}$ which corresponds to a clear peak in the parameters values distribution, stable over multiple orders of magnitude.
Then, starting from the initial computational graph (i.e. before training, with all edges), we can compute analytically the expected number of disconnected neurons if every edge is dropped with probability $p_{L1}$.
Results are reported on \Cref{fig:null_model} left.
We repeat the same analysis for the nuclear norm regularization, using the same threshold we obtain another null model with probability of dropping a random edge of $p_{nuc}$ and report the expected number of disconnected neurons on \Cref{fig:null_model} right.
In summary, both null models cannot explain the number of pruned neurons by the number of pruned edges, meaning that in both cases there must be other underlying mechanisms.
The underlying mechanism is explicit in the case of the nuclear norm regularization, since neurons are directly targeted, but remains veiled in the case of L1.
Note also that the null model is an even worse explanation in the singular regularization case, indicating a stronger alternative mechanism. 

\begin{figure}[ht]
\centering
\begin{subfigure}{.5\textwidth}
  \centering
  \includegraphics[width=\linewidth]{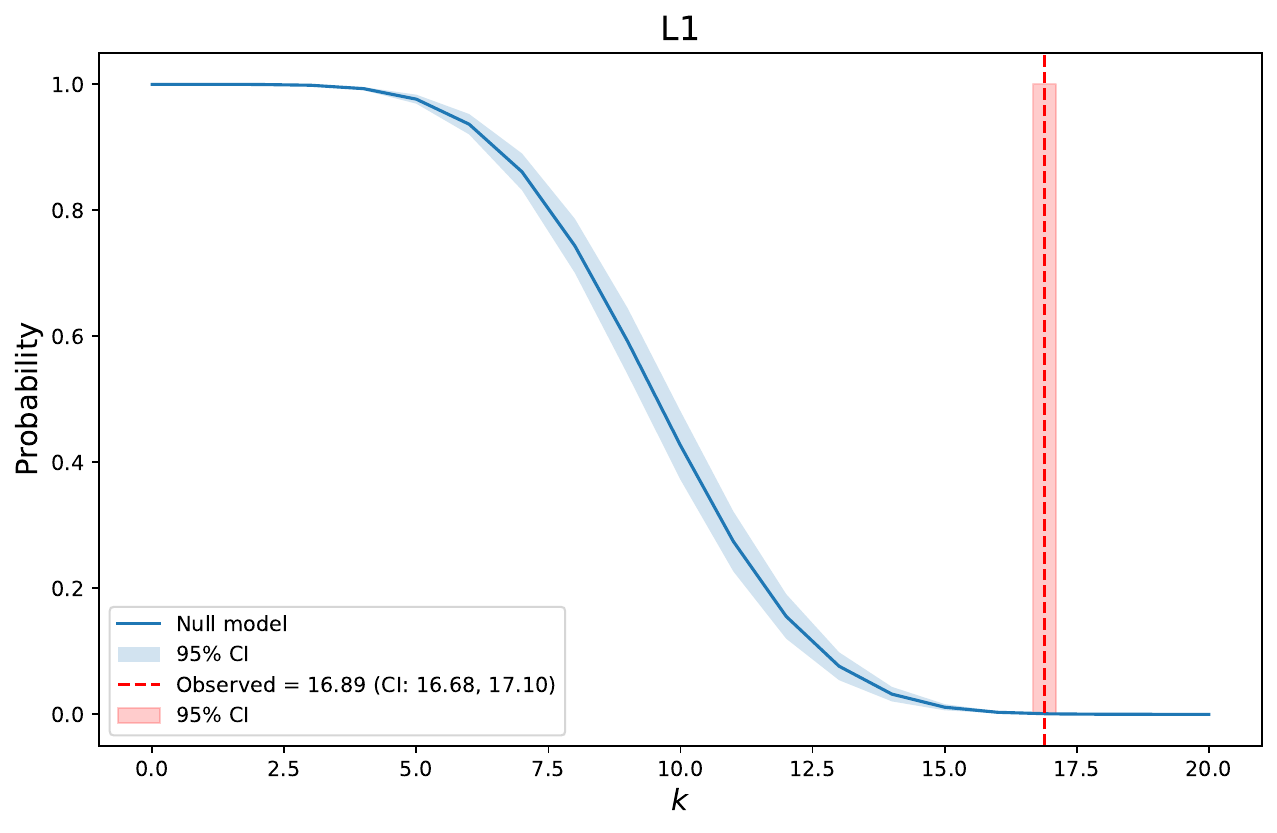}
  \caption{}
\end{subfigure}%
\begin{subfigure}{.5\textwidth}
  \centering
  \includegraphics[width=\linewidth]{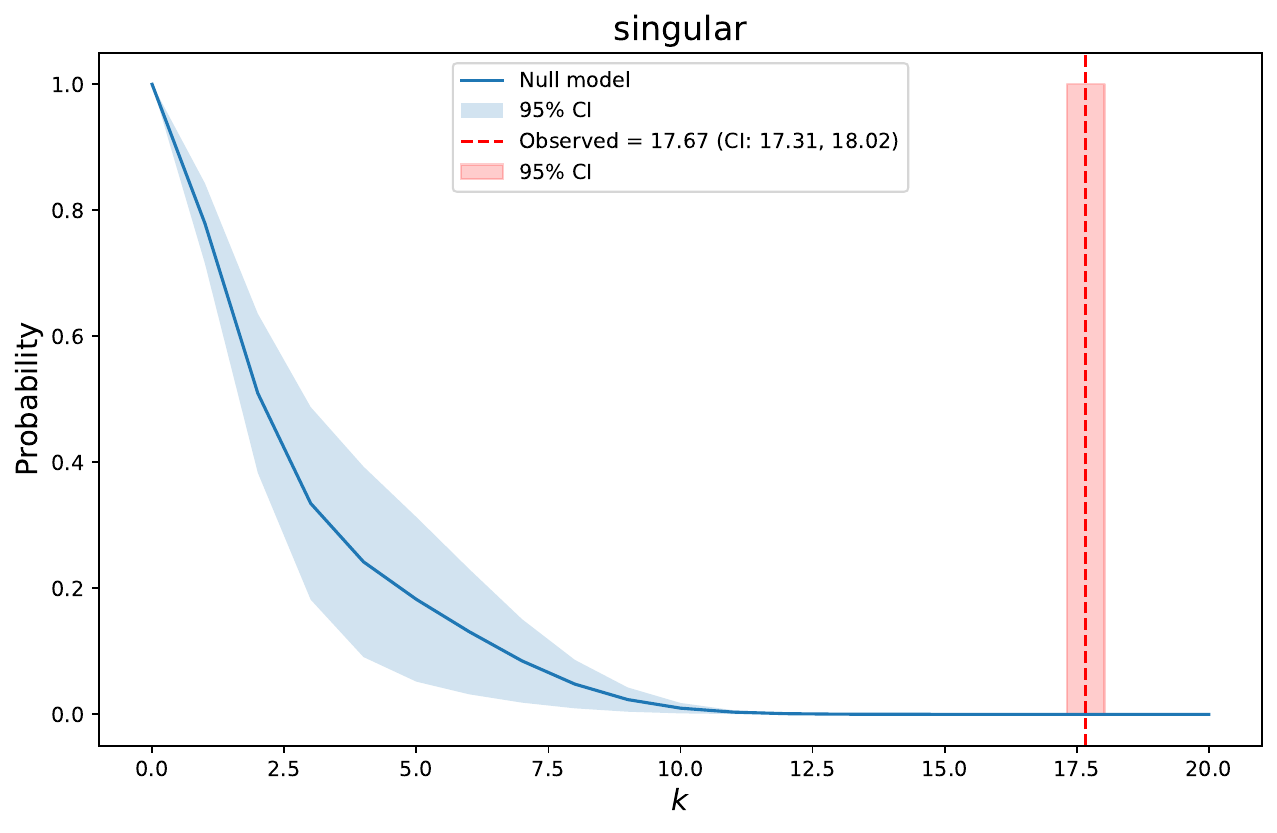}
  \caption{}
\end{subfigure}
\caption{Probability of having at least $k$ pruned neurons under the null model for \textbf{(a)} nuclear norm regularizer \textbf{(b)} L1 regularizer. The red dotted vertical line is the observed number of pruned neurons, the blue curve is the analytic probability of pruning at least $k$ neurons under the null model.}
\label{fig:null_model}
\end{figure}

\subsubsection{Active weights and pruned neurons}\label{appendix:L1_vs_nuclear_activepruned}
To further investigate the difference between L1 and nuclear norm, we turn our attention to the distribution of parameters magnitude, which is plotted on \Cref{fig:parameters_magnitude} left for $4$ representative trainings.
For the singular regularization 
we observe a clear separation in the magnitude of parameters belonging to pruned and active neurons, and this is not the case for L1 regularization.
This means that the nuclear norm either drops completely a neuron (i.e. all its edges at the same time) or keeps the neuron active.
In contrast, the separation is less clear for L1: there are many inactive weights on active neurons.
\Cref{fig:parameters_magnitude} right shows the aggregated results for $30$ models of each type, where we observe an overlap of the distribution for L1 only.

\begin{figure}[ht]
    \centering
    \includegraphics[width=0.8\linewidth]{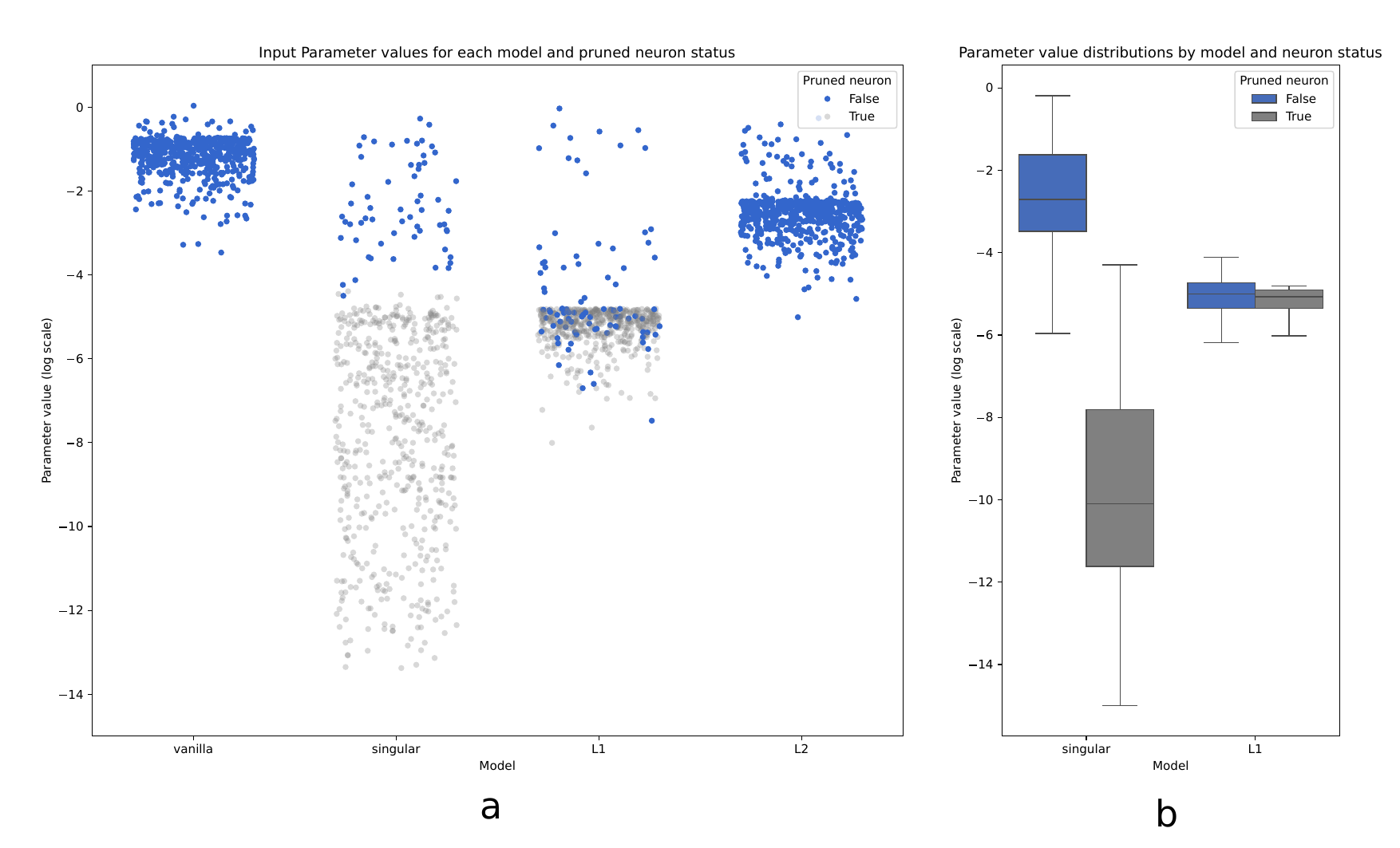}
    \caption{\textbf{a.} Absolute value of input parameters for the hidden layer of a shallow network with bias. \textbf{b.} Absolute value of parameters (including bias) for the same network architecture on $30$ independent runs.}
    \label{fig:parameters_magnitude}
\end{figure}

}

}{
\refstepcounter{subsection}
  \phantomsection
\label{appendix:experimental_details}
}
\putbib[biblio2]
\end{bibunit}

\end{document}